\def\eqref#1{equation~\ref{#1}}
\def\floor#1{\lfloor #1 \rfloor}
\def\1{\bm{1}}
\def\ru{{\textnormal{u}}}
\def\rv{{\textnormal{v}}}
\def\rx{{\textnormal{x}}}
\def\ry{{\textnormal{y}}}
\def\rz{{\textnormal{z}}}
\def\rvu{{\mathbf{i}}}
\def\rvu{{\mathbf{u}}}
\def\rvv{{\mathbf{v}}}
\def\rvx{{\mathbf{x}}}
\def\rvz{{\mathbf{z}}}
\def\vzero{{\bm{0}}}
\def\vp{{\bm{p}}}
\def\vu{{\bm{u}}}
\def\vv{{\bm{v}}}
\def\vz{{\bm{z}}}
\DeclareMathAlphabet{\mathsfit}{\encodingdefault}{\sfdefault}{m}{sl}
\SetMathAlphabet{\mathsfit}{bold}{\encodingdefault}{\sfdefault}{bx}{n}
\DeclareMathOperator*{\argmin}{arg\,min}
\definecolor{lightblue}{rgb}{0.368,0.507,0.71}
\newtheorem{theorem}{Theorem}
\newtheorem{proposition}[theorem]{Proposition}
\newtheorem{definition}[theorem]{Definition}
\definecolor{lightgreen}{rgb}{0.68,0.90,0.68}
\let\oldru\ru
\let\oldrvu\rvu
\let\oldvu\vu
\let\oldrz\rz
\let\oldrvz\rvz
\let\oldvz\vz
\let\oldrv\rv
\let\oldrvv\rvv
\let\oldvv\vv
\renewcommand{\ru}{{\color{RoyalBlue}\oldru}}
\renewcommand{\rvu}{{\color{RoyalBlue}\oldrvu}}
\renewcommand{\vu}{{\color{RoyalBlue}\oldvu}}
\renewcommand{\rz}{{\color{Green}\oldrz}}
\renewcommand{\rvz}{{\color{Green}\oldrvz}}
\renewcommand{\vz}{{\color{Green}\oldvz}}
\renewcommand{\rv}{{\color{Melon}\oldrv}}
\renewcommand{\rvv}{{\color{Melon}\oldrvv}}
\renewcommand{\vv}{{\color{Melon}\oldvv}}
\newcommand{\cb}[1]{{\color{RoyalBlue}#1}}
\newcommand{\cg}[1]{{\color{Green}#1}}
\title{{Diffusion and Flow-based Copulas:\\ Forgetting and Remembering Dependencies}}
\author{David Huk\thanks{Work partly done during an internship within the AI team at Unilink Software Ltd.}\\
Department of Statistics\\
University of Warwick\\
United Kingdom \\
\texttt{David.Huk@warwick.ac.uk} \\
\And
Theodoros Damoulas \\
Department of Statistics, Department of Computer Science \\
University of Warwick\\
United Kingdom \\
\texttt{T.Damoulas@warwick.ac.uk}
}
\begin{document}

\maketitle

\begin{abstract}
Copulas are a fundamental tool for modelling multivariate dependencies in data, forming the method of choice in diverse fields and applications. However, the adoption of existing models for multimodal and high-dimensional dependencies is hindered by restrictive assumptions and poor scaling. In this work, we present methods for modelling copulas based on the principles of diffusions and flows. We design two processes that progressively \textit{forget} inter-variable dependencies while leaving dimension-wise distributions unaffected, provably defining valid copulas at all times. We show how to obtain copula models by learning to \textit{remember the forgotten dependencies} from each process, theoretically recovering the true copula at optimality. The first instantiation of our framework focuses on direct density estimation, while the second specialises in expedient sampling. Empirically, we demonstrate the superior performance of our proposed methods over state-of-the-art copula approaches in modelling complex and high-dimensional dependencies from scientific datasets and images. Our work enhances the representational power of copula models, empowering applications and paving the way for their adoption on larger scales and more challenging domains.

\end{abstract}

\section{Introduction}

Given a collection of $d$ continuous random variables, a simple model for their joint probability density function is the product of the corresponding $d$ univariate densities \citep{peterson1987mean}. Of course, this omits inter-variable dependence, which is given by a $ d$-dimensional density supported on the unit hypercube with uniform marginal distributions\footnote{In this work, marginal distributions refer to univariate dimension-wise marginals, unlike (multivariate) time-marginals common in the literature on diffusions.} called a \textit{copula} \citep{sklar1959fonctions}. Indeed, the copula uniquely and exactly represents the inter-variable dependence, unlike correlation or mutual information \citep{geenens2023towards}, fully disentangling the marginal behaviour from the joint. 

This disentanglement enables a modular approach for multivariate modelling: first, model the univariate variables independently, and second, model their dependence with a copula. This is predominant in applications where the marginal behaviour is known or requires specific properties, as copulas respect these by construction. Copulas produce well-specified probabilistically calibrated marginals to represent extreme events accurately; properties that are crucial for state-of-the-art models across fields such as weather forecasting \citep{cong2012interdependence,huk2023probabilistic}, hydrology \citep{salvadori2007use,van2023future}, risk management \citep{kole2007selecting,dewick2022copula}, retail \citep{salinas2019high}, and causal inference \citep{evans2024parameterizing}. Another popular application is to induce dependence to correct pre-existing independent models, as used in generative modelling \citep{tagasovska2019copulas,liu2024discrete}, multi-agent imitation learning \citep{wang2021multi}, uncertainty quantification \citep{chilinski2020neural,tagasovska2023retrospective}, Bayesian non-parametrics \citep{huk2024quasi}, and classification calibration \citep{panda25a}. Copulas are also invariant to monotonic transformations of variables, benefiting multi-objective Bayesian optimisation \citep{park2024botied}. Finally, they isolate inter-variable dependencies in an interpretable way, facilitating the study of neuron spikes in the brain \citep{berkes2008characterizing,verzelli2019study}, increasing the flexibility of Bayesian networks \citep{elidan2010copula}, enabling synthetic data generation \citep{patki2016synthetic,sun2019learning} with privacy-preserving applications \citep{griesbauer25a}, and empowering conformal prediction \citep{sun2023copula, park25c}. 

Yet, current copula models impose restrictive assumptions limiting their effectiveness on high-dimensional and non-trivial data. For instance, Gaussian copulas (see Ch. 3 in \cite{hofert2018elements}) only capture diagonal symmetric dependencies, while vine copulas \citep{nagler2016evading} omit parts of the dependence and explore an exponential model space with the data's dimension. Existing deep copula models suffer from mode collapse issues \citep{hofert2021quasi,janke2021implicit} and struggle when sampling from multi-modal and high-dimensional data \citep{hukyour}.

\begin{figure}[t]
\includegraphics[width=1\linewidth]{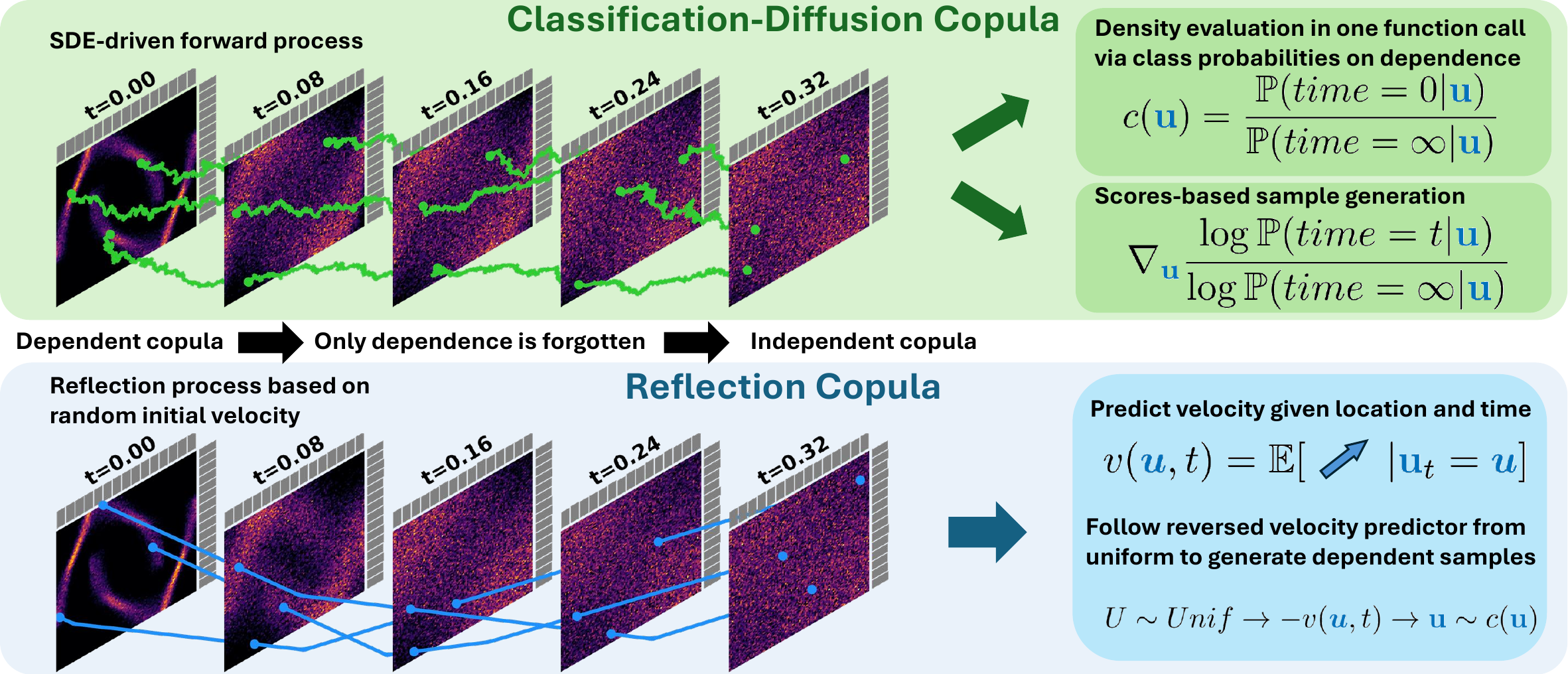}
    \centering
    \caption{\textbf{Overview of proposed copula models.} We design forward processes to forget inter-variable dependencies but preserve dimension-wise marginals. Our \textit{classification-diffusion copula} and \textit{reflection copula} learn by remembering the forgotten dependencies of these processes.}
    \label{fig:figure1}
\end{figure}

To address these challenges, we design copulas representing dependence purely through the formalism of diffusions and flows \citep{song2019generative,lipman2023flow}. The first step of our analysis is to design appropriate forward processes, motivating our first research question:

\begin{tcolorbox}[colback=gray!10, colframe=gray!80, boxrule=1pt, sharp corners, top=1pt, bottom=1pt]
\textbf{Q1:}\textit{ How to design a stochastic process that only forgets dependence?}
\end{tcolorbox}
We answer it by introducing two processes that maintain their marginal distributions but progressively \textit{forget} the inter-dimension dependence. The first is grounded in stochastic differential equations with time-invariant marginal distributions. The second consists of endowing samples with random velocities and reflecting them in a hypercube through time. We show that, when applied to copula data, these processes provably preserve uniform marginal distributions, i.e., they define valid copulas at all times with vanishing dependence. This insight motivates our second question:
\begin{tcolorbox}[colback=gray!10, colframe=gray!80, boxrule=1pt, sharp corners,top=1pt, bottom=1pt]
\textbf{Q2:}\textit{ Can we remember the forgotten dependencies to obtain copula densities and samples?}
\end{tcolorbox}

Motivated by the main uses of copulas, for each process, we propose a model with a separate goal: in Sec. \ref{sec:cdc}, we present an effective density estimator we term the \textit{classification-diffusion copula}, and in Sec. \ref{sec:reflection}, we introduce the \textit{reflection copula} as a generative dependence model suited for expedient sampling (See Figure \ref{fig:figure1}). Theoretically, we show both models recover the ground truth copula at optimality, enabling density estimation and sampling. We summarise our contributions as follows:

\begin{enumerate}
    \item We introduce two processes to purely forget the dependence of data with time (Proposition \ref{prop:cop_OU_indep}, Proposition \ref{lemma:reflection}), defining a spectrum from a dependent to an independent copula.

    \item We propose a \textit{classification-diffusion copula} which provably learns the dependence in data (Theorem \ref{thm:cdc_loss}), outputs copula densities in a single model evaluation (Proposition \ref{prop:cdc_density}), and leverages diffusion algorithms for sampling (Proposition \ref{prop:cdc_denoiser}).


    \item We introduce the \textit{reflection copula} to learn the probability path of forgotten dependencies and generate samples by following this path in reverse (Proposition \ref{prop:reflection_sim}). 

    \item We achieve state-of-the-art results in dependence modelling, surpassing existing classical and deep copulas on a variety of complex real-world data. Our copulas are the first instance able to scale to such high-dimensional and multimodal dependencies (see Fig. \ref{fig:image_sims}).
    
    
\end{enumerate}

\section{Background on copula models}
\label{background}

\paragraph{Notation and setup.} We denote observed data $\rvx = (x^1,\ldots,x^d)\in \mathcal{X}\subseteq\mathbb{R}^d$ coming from a cumulative distribution function (CDF) $P(.)$ with probability density function (PDF) $\vp(.)$, and refer to $\mathcal{X}$ as the \textit{data scale}. We assume that dimension-wise marginal CDFs $\{P^i(.)\}_{i=1}^d$ and PDFs $\{p^i(.)\}_{i=1}^d$ are given. Next, we denote copula observations as $\rvu=(\cb{u^1},\ldots,\cb{u^d})\in[0,1]^d$ with $\cb{u^i}=P^i(x^i)$, calling this the \cb{\textit{copula scale}}. Finally, for $\Phi$ the univariate standard Gaussian CDF, we also work with variables $\rvz=(\cg{z^1},\ldots,\cg{z^d})\in\mathbb{R}^d$ for $\cg{z^i}=\Phi^{-1}(\cb{u^i})$, which we say are on the \cg{\textit{Gaussian scale}}. We always denote time in subscripts, keeping superscripts for the dimension.

\paragraph{The task.} We are given data $\rvx$ and know marginal distributions either through assumptions on the model or by pre-emptively estimating them. These marginal distributions are used to transform the data $\rvx$ to copula scale observations $\rvu$ encapsulating dependence. The copula modelling task then consists of providing a valid copula model for $\rvu$ to produce samples or likelihood evaluations.

An example of this is multivariate time series forecasting, where one strives to describe the future joint distribution over many variables, such as the state of multiple agents in imitation learning, see Apdx \ref{apdx:robocup}. Such uses of copulas are a useful way of dealing with the exceedingly large dimensionalities of multivariate timeseries by decomposing them into marginal forecasts which are stitched together through a copula, decomposing the original problem into manageable subtasks. First, one estimates the individual distributions for each agent separately, possibly utilising expert knowledge to ensure calibration, and secondly, unites all univariate distributions with a copula to prescribe a given dependence structure. 

\paragraph{Copula-based modelling.} A copula is a joint probability distribution over univariate marginal distributions, owing its name to the Latin term for a link or bond. Indeed, the copula links together independent marginals, infusing them with dependence to form a valid joint distribution: 
\begin{theorem}[\cite{sklar1959fonctions}]
\label{thm:sklar}
    Let ${P}$ be a $d$-dimensional distribution on $\mathcal{X}\subset \mathbb{R}^d$ with continuous marginal cumulative distribution functions $P^1, P^2, \ldots, P^d$. Then there exists a unique copula distribution ${C}:[0,1]^d\mapsto[0,1]$ such that for all $\rvx = (\rvx^1, \rvx^2, \ldots, \rvx^d) \in \mathbb{R}^d$:
\begin{equation*}
{P}(x^{1}, \ldots,x^{d}) = {C}(P^{1}(x^{1}), \ldots, P^{d}(x^{d})).
\end{equation*}
And if a probability density function $\vp:\mathcal{X}\mapsto\mathbb{R}$ is available:
\begin{equation}
\label{eq:sklar_pdf}
\vp(x^{1}, \ldots, x^{d}) = p^{1}(x^{1}) \cdot \ldots \cdot p^{d}(x^{d}) \cdot {c}(P^{1}(x^{1}), \ldots, P^{d}(x^{d})),
\end{equation}
where $p^{1}(x^{1}), \ldots, p^{d}(x^{d})$ are the marginal PDFs, and ${c}:[0,1]^d\mapsto(0,\infty)$ is the copula PDF.
\end{theorem}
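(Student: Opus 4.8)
The plan is to treat existence and uniqueness of $C$ separately under the continuity hypothesis on the marginals, and then obtain the density factorization \eqref{eq:sklar_pdf} by differentiation. For existence I would invoke the probability integral transform: let $\rmX=(X^1,\ldots,X^d)$ be a random vector with law $P$ and set $U^i:=P^i(X^i)$. Continuity of each marginal $P^i$ guarantees that each $U^i$ is $\mathrm{Uniform}(0,1)$. I would then \emph{define} the candidate copula to be the joint CDF of $(U^1,\ldots,U^d)$, namely $C(u^1,\ldots,u^d):=\mathbb{P}(U^1\le u^1,\ldots,U^d\le u^d)$; being a joint distribution function with uniform univariate margins, it sends $[0,1]^d$ into $[0,1]$ and is a genuine copula. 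The core of the existence argument is then the identity $C(P^1(x^1),\ldots,P^d(x^d))=P(x^1,\ldots,x^d)$. Expanding the left side gives $\mathbb{P}(P^1(X^1)\le P^1(x^1),\ldots,P^d(X^d)\le P^d(x^d))$, so it suffices to show that for each coordinate the event $\{P^i(X^i)\le P^i(x^i)\}$ coincides with $\{X^i\le x^i\}$ up to a $P$-null set.

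This last reduction is where the main obstacle lies, and it is precisely where continuity is needed: a continuous $P^i$ has no jumps, and any maximal interval on which $P^i$ is constant carries zero marginal probability, so the two events differ only on a null set. I would make this rigorous using the generalized inverse (quantile function) $(P^i)^{-1}(u):=\inf\{t:P^i(t)\ge u\}$ together with its standard almost-sure identities, which dispose of the non-strictly-increasing case cleanly. For uniqueness, I would use that continuity forces the range of each $P^i$ to be dense in $[0,1]$: on the product of these ranges the identity pins down $C(u^1,\ldots,u^d)=P\big((P^1)^{-1}(u^1),\ldots,(P^d)^{-1}(u^d)\big)$ with no freedom, determining $C$ on a dense subset of $[0,1]^d$, and right-continuity of any copula then extends this to all of $[0,1]^d$.

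Finally, to pass to the density statement \eqref{eq:sklar_pdf}, I would differentiate the CDF identity $P(x^1,\ldots,x^d)=C(P^1(x^1),\ldots,P^d(x^d))$ in $x^1,\ldots,x^d$. Assuming the marginal and joint densities exist, applying the chain rule to the $d$-fold mixed partial derivative yields $p(x^1,\ldots,x^d)=c(P^1(x^1),\ldots,P^d(x^d))\prod_{i=1}^d p^i(x^i)$, where $c:=\partial^d C/\partial u^1\cdots\partial u^d$ is the copula density and $p^i:=(P^i)'$; this is exactly the claimed factorization. I expect the genuinely delicate step to be the coordinatewise event equivalence under merely continuous (not strictly monotone) marginals, whereas existence-by-construction, uniqueness-by-density, and the differentiation step are routine once that is in hand.
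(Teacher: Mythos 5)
The paper does not prove this statement at all: it is quoted as Sklar's theorem and attributed to \cite{sklar1959fonctions} as background, so there is no in-paper argument to compare against. Your proposal is the standard probabilistic proof of Sklar's theorem in the continuous-marginal case, and it is essentially correct: the probability integral transform gives existence, the coordinatewise event identity $\{P^i(X^i)\le P^i(x^i)\}=\{X^i\le x^i\}$ up to null sets is indeed the only delicate point (handled correctly by observing that flat intervals of $P^i$ carry zero mass), density of the ranges gives uniqueness, and differentiation gives \eqref{eq:sklar_pdf}. Two small refinements: for a continuous $P^i$ the range actually contains all of $(0,1)$ by the intermediate value theorem, so $C$ is pinned down on $(0,1)^d$ directly, and the extension to the boundary of $[0,1]^d$ uses the Lipschitz continuity of copulas (a consequence of $2$-increasingness and uniform margins) rather than mere right-continuity --- right-continuity alone does not let you extend from a dense set. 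Also note that the claim $c:[0,1]^d\mapsto(0,\infty)$, i.e.\ strict positivity of the copula density everywhere, does not follow from your argument (nor is it true in general); the differentiation step only gives $c\ge 0$ almost everywhere where the densities exist. Neither point affects the substance of the proof.
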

This theorem allows for a two-step estimation process of a joint density: First, model marginal distributions, and second, model the dependence between them with a copula. This grants great flexibility, as marginals can be tailored to the specific problem in mind, while the choice of the copula model can prescribe particular joint behaviours. Eq. (\ref{eq:sklar_pdf}) is used to obtain density evaluations, and sampling is done by applying the inverse marginal distributions ${P^i}^{-1}(\cb{u^i})$ to copula samples $\rvu$. Correctly modelling the copula is therefore of utmost importance to faithfully represent data.


\paragraph{Popular copulas used in machine learning}

 Most works adopt one of two models due to their ability to quickly produce densities and samples, even in high dimensions. Firstly, the Gaussian copula is generally the first copula to be implemented in applications, and is used in \cite{elidan2010copula, patki2016synthetic, salinas2019high, wang2021multi, huk2023probabilistic, panda25a}. Second, the vine copula (see \cite{aas2009pair,nagler2016evading}) is the \textit{de facto} model for copula density evaluation and sampling, being used in the majority of cases not using a Gaussian, as in \cite{tagasovska2019copulas, tagasovska2023retrospective, huk2024quasi, park2024botied, park25c, griesbauer25a}. Other model classes, such as Archimedean copulas, have seen some applications \citep{liu2025hacsurv} but remain limited to diagonal and symmetric dependencies and scale poorly to high dimensions. Their deep variants have overcome the large computational complexity of early works \citep{ling2020deep}, and have been shown to effectively scale to dimensions as high as $20$ \citep{ng2021generative, ng2022inference}. While the work of \cite{ng2021generative} remains bound to diagonal relationships of an Archimedean copula, \cite{ng2022inference} gains flexibility via the Archimax class. Fully general deep copulas have been explored in \cite{hofert2021quasi,janke2021implicit} as generative metric-matching networks and in \cite{kamthe2021copula} as normalising flows.

\paragraph{Current state of the art in copula models.} Recently, \cite{hukyour} showed that any copula density is equivalent to a binary Bayes optimal classifier between dependent and independent data with the same marginal distributions. Their framework yields flexible models, achieving state-of-the-art performance in copula modelling for complex dependencies. Unfortunately, it only estimates densities, meaning sampling requires expensive Markov Chain Monte Carlo scaling as $\mathcal{O}(d^{4/3})$ with the data dimension $d$, making this method inappropriate for multimodal or high-dimensional settings.


In the next Section, we propose our first novel framework for copula modelling, by relying on diffusion processes specifically designed to forget dependence. In doing so, we depart from the ratio copula's dichotomy between dependence and independence to study a continuous range of decaying dependencies. This framework shift lets us leverage diffusion-based sampling, making our method scalable to high-dimensional and multimodal dependencies \citep{azangulov2024convergence}.

\section{Classification-Diffusion Copula}
\label{sec:cdc}

\begin{figure}[ht]
    \centering
    \includegraphics[width=1\linewidth]{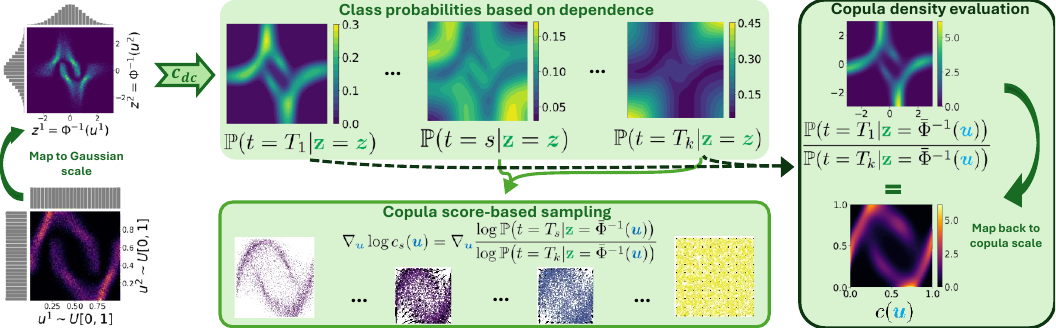}
    \caption{\textbf{Classifier-Diffusion Copula.} We map copula data $\rvu$ to Gaussian scale data $\rvz$, to which we apply an Ornstein-Uhlenbeck process up to a time $t$. We train a multinomial classifier $c_{dc}$ to identify the diffusion time $t$ based on $\rvz$'s dependence. This classifier recovers the copula density.}
    \label{fig:CDC_OU}
\end{figure}

In this Section, we develop \textit{classification-diffusion copulas} whose strength lies in performing accurate density evaluation in a single function call, while also generating samples through diffusion-based sampling methods. A diagram of the approach is shown in Fig. \ref{fig:CDC_OU}.

\subsection{Forward process to forget dependence}

Since copulas are density ratios \citep{hukyour}, their densities are preserved under diffeomorphisms \citep{choi2021featurized}. Thus, noticing that $\rvx,\rvu$ and $\rvz$ stem from invertible mappings of each other, we model the copula of $\rvx$ through the copula of $\rvz$ given they share the same dependence. This lets us define a process on $\mathbb{R}^d$ instead of $[0,1]^d$. We augment our variables $\rvz$ with a time dimension to denote $\rvz_t$ as the variables following a density $\Tilde{\vp}_t$ for $t\in[0,\infty)$, with the original data (on the Gaussian scale) at $t=0$. In the following, we formalise and answer our first research question:
\begin{tcolorbox}[colback=gray!10, colframe=gray!80, boxrule=1pt, sharp corners,top=1pt, bottom=1pt]
\textbf{Q1 (a):}\textit{ How to define a process which preserves $\Tilde{p}\,_t^i=\Tilde{p}\,_s^i$ for any dimension $i$ at all times $t,s\geq0$,  while diffusing the copula $c_t$ of $\rvz_t$ to independence as $t\to\infty$?}
\end{tcolorbox}


\paragraph{Ornstein-Uhlenbeck on the Gaussian scale.} As $\rvu$ is marginally uniform, $\rvz$ is marginally standard Gaussian with density $\mathcal{N}(z_t;0,1)$ by the probability integral transform. This insight motivates the following process on $\rvz_t=(z_t^1,\ldots, z_t^d)$ for $t\geq0$:
\begin{equation}
\label{eq:OU}
    d\rvz_t = -\rvz_t\,dt+\sqrt{2}\,d\mathcal{B}_t,
\end{equation}
where $\mathcal{B}_t$ denotes Brownian motion. We call Eq. (\ref{eq:OU}) the forward process of our model as it moves the data $\rvz_0$ forward in time to $\rvz_t$. This is an Ornstein–Uhlenbeck (OU) process that is independent across dimensions, and has the property of having the standard $d-$dimensional Gaussian as its stationary distribution in the limit of time $t\mapsto\infty$ \citep{sarkka2019applied}, fulfilling our first desiderata on marginals. For diffusion-based sampling, the OU process enjoys faster convergence to its limiting distribution \citep{brevsar2024non} and is easier to learn from \citep{reu2024smooth} compared to other stochastic processes, which motivates our choice by Theorem 2 of \cite{chen2023sampling}. It is also easy to simulate due to being analytically tractable. To fulfil our desiderata on the copula, we inspect what happens to $\rvu_t:=\Phi^{-1}(\rvz_t)$ on the copula scale, yielding the following result. The proof is given in Apdx. \ref{apdx:cop_OU_indep}.

\begin{proposition}[Convergence to independence copula]
\label{prop:cop_OU_indep}
For copula samples $\rvz_0$ on the Gaussian scale, consider $\rvu_t^i=\Phi(\rvz_t^i)$ with $\rvz_t^i$ as defined through the process in Eq. (\ref{eq:OU}). Then, this process maintains uniform marginal distributions at all times:
\begin{equation*}
   \rvu_t^i\sim U(0,1), \quad \forall t\geq0, i\in\{1,\ldots,d\}.
\end{equation*}
Moreover, denote by $c_t$ the copula of $\rvz_t$ under Eq. (\ref{eq:OU}). Then, $c_t$ converges to the independence copula as $t\to \infty$ in the Kullback-Leibler divergence with rate $\mathcal{O}(e^{-2t})$.

\end{proposition}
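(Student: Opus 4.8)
The plan is to prove the two claims separately, using the marginal-preservation claim to simplify the dependence claim. For the marginals, I would exploit that the dynamics in Eq.~(\ref{eq:OU}) decouple across coordinates, so each $\rvz_t^i$ solves the scalar OU equation $d\rvz_t^i=-\rvz_t^i\,dt+\sqrt{2}\,d\mathcal{B}_t^i$ driven by an independent Brownian motion. Since $\rvz_0$ sits on the Gaussian scale, $\rvz_0^i\sim\mathcal{N}(0,1)$, which is exactly the invariant law of this scalar OU process; hence $\rvz_t^i\sim\mathcal{N}(0,1)$ for all $t\geq0$. If I wanted to avoid quoting invariance, I would solve explicitly, $\rvz_t^i=e^{-t}\rvz_0^i+\sqrt{2}\int_0^t e^{-(t-s)}\,d\mathcal{B}_s^i$, a Gaussian with mean $0$ and variance $e^{-2t}+(1-e^{-2t})=1$. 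The probability integral transform then gives $\rvu_t^i=\Phi(\rvz_t^i)\sim U(0,1)$ for every $t$ and $i$.

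For the dependence, I would first rewrite the quantity of interest. Writing $c_\perp$ for the independence copula (density $\equiv 1$ on $[0,1]^d$) and using that copula densities are invariant under the coordinatewise diffeomorphism $\Phi$, the change of variables $\rvu=\Phi(\rvz)$ together with Sklar's theorem (Theorem~\ref{thm:sklar}), which gives $c_t(\Phi(\rvz))=\Tilde{\vp}_t(\rvz)/\prod_i\Tilde{p}_t^i(\rvz^i)$, yields
\begin{equation*}
\KL(c_t\,\|\,c_\perp)=\int_{[0,1]^d} c_t\log c_t\,d\rvu=\KL\Big(\Tilde{\vp}_t\,\Big\|\,\textstyle\prod_{i=1}^d \Tilde{p}_t^i\Big).
\end{equation*}
The key simplification comes from the first part: since every marginal $\Tilde{p}_t^i$ is the standard Gaussian density, the product $\prod_i\Tilde{p}_t^i$ equals the $d$-dimensional standard Gaussian $\gamma_d=\mathcal{N}(0,I_d)$, which is precisely the stationary law of the OU process. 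Hence the dependence $\KL(c_t\,\|\,c_\perp)$ coincides with the relative entropy $\KL(\Tilde{\vp}_t\,\|\,\gamma_d)$ of the diffusion to its own equilibrium.

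The rate is then the exponential ergodicity of OU in relative entropy. Setting $H(t):=\KL(\Tilde{\vp}_t\,\|\,\gamma_d)$, de Bruijn's identity along the Fokker--Planck flow gives $\tfrac{d}{dt}H(t)=-\mathcal{I}(t)$ with $\mathcal{I}$ the relative Fisher information, while the log-Sobolev inequality for the standard Gaussian (with the normalisation of Eq.~(\ref{eq:OU})) gives $\mathcal{I}(t)\geq 2H(t)$. Gr\"onwall's inequality then yields $H(t)\leq e^{-2t}H(0)$, i.e.\ $\KL(c_t\,\|\,c_\perp)\leq e^{-2t}\,\KL(c_0\,\|\,c_\perp)=\mathcal{O}(e^{-2t})$, provided the initial dependence $\KL(c_0\,\|\,c_\perp)=\KL(\Tilde{\vp}_0\,\|\,\gamma_d)$ is finite.

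The main obstacle I anticipate is the reduction $\KL(c_t\,\|\,c_\perp)=\KL(\Tilde{\vp}_t\,\|\,\gamma_d)$: it must combine the change-of-variables invariance of copulas with the marginal-preservation result, the latter being exactly what collapses the product of marginals onto the OU stationary measure. Once that identity is secured, the rate is an off-the-shelf consequence of the Gaussian log-Sobolev inequality; the only regularity I would double-check is finiteness of $\KL(\Tilde{\vp}_0\,\|\,\gamma_d)$ and enough integrability of $\Tilde{\vp}_0$ to justify the entropy-dissipation argument.
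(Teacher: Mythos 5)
Your proof is correct, and its first half (explicit solution of the scalar OU equation, variance $e^{-2t}+(1-e^{-2t})=1$, probability integral transform) is exactly the paper's argument. For the convergence claim the reduction is also essentially the paper's: both you and the paper use that the coordinatewise map $\bar{\Phi}$ is an invertible reparametrisation under which the KL divergence is invariant, so that $\KL(c_t\,\|\,c_\perp)=\KL(\tilde{\vp}_t\,\|\,\mathcal{N}(\mathbf{0},\mathbf{I}_d))$ once marginal preservation collapses the product of marginals onto the stationary law. Where you genuinely diverge is in how the rate is obtained. The paper simply cites the known exponential ergodicity of the OU process in relative entropy (Proposition 1.2 of \cite{brevsar2024non}, under that paper's \texttt{(DATA)} assumption on the initial distribution), whereas you prove it from scratch via de Bruijn's identity $\tfrac{d}{dt}H=-\mathcal{I}$ and the Gaussian log-Sobolev inequality $\mathcal{I}\geq 2H$, followed by Gr\"onwall. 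Your route is self-contained and makes the required hypotheses explicit and minimal (finite initial relative entropy $\KL(\tilde{\vp}_0\,\|\,\gamma_d)$ plus enough regularity to justify the entropy-dissipation computation), which is arguably cleaner than deferring to the cited work's assumptions; the paper's route is shorter and outsources the analytic work to the reference. Both arguments yield the same exponent $e^{-2t}$ for this normalisation of the drift and diffusion coefficients.
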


For complex distributions, a process with a limiting distribution closer to the data distribution in the Kullback-Leibler divergence is preferable \citep{rhodes2020telescoping,chen2023sampling}. Hence, we modify Eq. (\ref{eq:OU}), by adding a correlation matrix $\Sigma$:
\begin{equation}
    \label{eq:OU_dep}
    d\rvz_t = -\rvz_t\,dt+\sqrt{2}\Sigma^{1/2}\,d\mathcal{B}_t, \text{ with } \Sigma_{(i,i)}=1 \text{ for } i\in\{i,\ldots,d\}.
\end{equation}
This process converges to $\mathcal{N}(\mathbf{0},\Sigma)$, and we prove its convergence in Proposition \ref{prop:cop_OU_dep} of the Appendix. In what follows, we will focus on Eq. (\ref{eq:OU}), and defer the derivations for Eq. (\ref{eq:OU_dep}) to Apdx. \ref{apdx:cop_OU_dep}.

\subsection{Remembering dependencies for copula densities and sampling}
\label{sec:cdc_learn}
We present the \textit{classification-diffusion copula} to learn the true copula density from process (\ref{eq:OU}). As standard diffusion models require expensive numerical integration \citep{song2021scorebased} for likelihoods, we instead answer the following question to facilitate density-reliant copula applications:

\begin{tcolorbox}[colback=gray!10, colframe=gray!80, boxrule=1pt, sharp corners,top=1pt, bottom=1pt]
\textbf{Q2 (a):}\textit{ How to obtain direct copula densities without foregoing effective sampling?}
\end{tcolorbox}

\paragraph{Classifying diffused data based on remaining dependence.} We begin by discretising the forward process of Eq. (\ref{eq:OU}) into times\footnote{We take $T_k$ as a large time when the forward process is expected to be at stationarity, see Apdx. \ref{apdx:w2_forward}.} $T_{1},T_{2},\ldots,T_k$ with $T_1=0, T_k=\infty$, each with respective densities denoted $\tilde{\vp}_{T_1},\ldots,\tilde{\vp}_{T_k}$. 
We define the \textit{classification-diffusion copula} as a function which, given a data point $\vz$, outputs a vector of $k$ probabilities corresponding to the likelihood of $\vz$ originating from each of the $k$ times $c_{dc}:\mathbb{R}^d\mapsto\{h\in[0,1]^k|\sum_{t=1}^k h^t=1\}$, with: 
\begin{align}
\label{eq:cdc}c_{dc}(\vz)=\big(\mathbb{P}(t=T_1|\rvz=\vz),\ldots,\mathbb{P}(t=T_k|\rvz=\vz)\big).
\end{align}
 The obtained convergence rates of the forward process in the previous Section motivate our choice of time steps  $T_{1},T_{2},\ldots,T_k$ in practice to achieve proportional changes in the dependence in line with our convergence rate in Prop. \ref{prop:cop_OU_indep}, see Apdx for further details. \ref{apdx:time}. This vector (\ref{eq:cdc}) of correct class probabilities can be linked to the copula density as follows:
\begin{proposition}[$c_{dc}$ copula density evaluation]
\label{prop:cdc_density}
    For the classification task as defined in Eq. (\ref{eq:cdc}), the true copula density $c(\vu)$ at a given value $\vu\in[0,1]^d$ is equal to the following probability ratio:
    \begin{equation*}
        c(\vu)={\mathbb{P}(t=T_1|\rvz=\bar{\Phi}^{-1}(\vu)\big)}/{\mathbb{P}(t=T_k|\rvz=\bar{\Phi}^{-1}(\vu)\big)},
    \end{equation*}
    where $\bar{\Phi}^{-1}(\vu)$ is the inverse standard Gaussian CDF applied dimensions-wise to $\vu$.
\end{proposition}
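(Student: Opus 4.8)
The plan is to recognise the stated ratio as a likelihood ratio between the two endpoint time-marginals of the forward process, and then to identify that likelihood ratio with the copula density via Sklar's theorem together with the invariance of copulas under dimension-wise monotone reparametrisation.

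First I would apply Bayes' rule to the multinomial classification problem of Eq. (\ref{eq:cdc}). Treating the diffusion time $t$ as a latent class label with prior $\mathbb{P}(t=T_j)$ and class-conditional density $\tilde{\vp}_{T_j}$, the true posterior is
$$\mathbb{P}(t=T_j\mid \rvz=\vz)=\frac{\tilde{\vp}_{T_j}(\vz)\,\mathbb{P}(t=T_j)}{\sum_{l=1}^{k}\tilde{\vp}_{T_l}(\vz)\,\mathbb{P}(t=T_l)}.$$
Taking the quotient of the $j=1$ and $j=k$ posteriors cancels the shared normalising denominator, and under a balanced (uniform) prior over the $k$ discretised times the prior factors cancel too, leaving
$$\frac{\mathbb{P}(t=T_1\mid \rvz=\vz)}{\mathbb{P}(t=T_k\mid \rvz=\vz)}=\frac{\tilde{\vp}_{T_1}(\vz)}{\tilde{\vp}_{T_k}(\vz)}.$$

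Second, I would identify the two endpoint densities. Since $T_1=0$, the numerator $\tilde{\vp}_{T_1}$ is simply the density of the data $\rvz_0$ on the Gaussian scale. Since $T_k=\infty$, Proposition \ref{prop:cop_OU_indep} shows $\tilde{\vp}_{T_k}$ is the stationary law of the OU process (\ref{eq:OU}), namely the standard $d$-dimensional Gaussian $\prod_{i=1}^{d}\mathcal{N}(z^i;0,1)$, whose copula is the independence copula. Because $\rvu$ has uniform marginals, $\rvz_0=\bar{\Phi}^{-1}(\rvu)$ has standard-normal marginals, and since $\Phi^{-1}$ applied dimension-wise is a strictly increasing diffeomorphic map, the copula of $\rvz_0$ coincides with the copula $c$ of $\rvu$. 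Sklar's theorem (Eq. (\ref{eq:sklar_pdf})) then factorises the numerator as
$$\tilde{\vp}_{T_1}(\vz)=\Big(\prod_{i=1}^{d}\mathcal{N}(z^i;0,1)\Big)\,c\big(\Phi(z^1),\ldots,\Phi(z^d)\big),$$
so the ratio collapses to $c(\Phi(z^1),\ldots,\Phi(z^d))$. Substituting $\vz=\bar{\Phi}^{-1}(\vu)$, for which $z^i=\Phi^{-1}(u^i)$ and hence $\Phi(z^i)=u^i$, delivers exactly $c(\vu)$, proving the claim.

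The step requiring the most care is the prior in the Bayes' rule manipulation: the clean cancellation of the normaliser and of the prior factors only holds under an equal prior mass on each of the $k$ times (equivalently, conditioning on the two endpoint classes being equally weighted), so I would state this assumption explicitly rather than leave it implicit in the definition of $c_{dc}$. The only other subtlety is the copula-invariance argument, which I would justify briefly by appealing to the fact — already used in the density-ratio viewpoint of the forward process — that a strictly monotone marginal transformation such as $\Phi^{-1}$ leaves the copula unchanged; everything else is a routine cancellation.
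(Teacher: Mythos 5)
Your proposal is correct and follows essentially the same route as the paper's proof: Bayes' rule with equal class priors to reduce the posterior ratio to the density ratio $\tilde{\vp}_{T_1}/\tilde{\vp}_{T_k}$, then identification of that ratio with the copula density. The only cosmetic difference is that you justify the final step by factorising $\tilde{\vp}_{T_1}$ directly via Sklar's theorem and cancelling the standard-normal marginals, whereas the paper cites the invariance of density ratios under dimension-wise invertible maps --- the same identity, argued slightly more self-containedly in your version.
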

We include a proof in Apdx. \ref{apdx:cdc_density}. 
Next, we show how to express the score of a copula at a given dependence level in terms of the class probabilities alone, enabling score-based sampling. The proof is in Apdx. \ref{apdx:cdc_denoiser} and advances the technique of \cite{Yadinclass} to copula densities. 
\begin{proposition}[$c_{dc}$ copula score-based sampling]
\label{prop:cdc_denoiser}
     Consider the classification task as defined in Eq. (\ref{eq:cdc}),  and let $c_s$ be the copula of  $\rvu_s=\bar{\Phi}(\rvz_s)$ with $\bar{\Phi}$ the standard Gaussian CDF applied element-wise, and $\rvz_s$ defined by the process of Eq. (\ref{eq:OU}) at time $s\in\{T_1.\ldots,T_k\}$. Then, for $\odot$ denoting element-wise multiplication, the true copula score $\nabla_{\vu} \log c_s(\vu)$ is given by:
     \begin{equation*}
     \label{eq:cdc_cop_score}
         \nabla_{\vu} \log c_s(\vu) = w(\vu) \odot \Big[\nabla_\vz \log(\mathbb{P}(t=T_s|\rvz=\vz))- \nabla_\vz\log(\mathbb{P}(t=T_k|\rvz=\vz) )\Big]_{\vz=\bar{\Phi}^{-1}(\vu)}
     \end{equation*}
     where for $\phi$ as the Gaussian pdf, we have $
         w(\vu):=\big(\frac{1}{\phi\big({\Phi}^{-1}(\cb{u}^1)\big)},\ldots,\frac{1}{\phi\big({\Phi}^{-1}(\cb{u}^d)\big)}\big)$.
\end{proposition}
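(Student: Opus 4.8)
The plan is to first promote the density-ratio identity of Proposition~\ref{prop:cdc_density} to a general time $T_s$, and then differentiate it through the coordinate-wise map $\vz=\bar\Phi^{-1}(\vu)$. For the ratio representation, I would apply Bayes' rule to the classification task of Eq.~(\ref{eq:cdc}), writing $\mathbb{P}(t=T_j\mid\rvz=\vz)=\pi_j\,\tilde p_{T_j}(\vz)/Z(\vz)$ with class priors $\pi_j$ and normaliser $Z(\vz)=\sum_{l}\pi_l\,\tilde p_{T_l}(\vz)$. Forming the ratio $\mathbb{P}(t=T_s\mid\vz)/\mathbb{P}(t=T_k\mid\vz)$ cancels $Z(\vz)$ and, under equal priors, the $\pi_j$. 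By Sklar's theorem together with Proposition~\ref{prop:cop_OU_indep}, each Gaussian-scale density factorises as $\tilde p_{T_j}(\vz)=c_{T_j}(\bar\Phi(\vz))\prod_{i=1}^d\phi(z^i)$, so the product of standard-Gaussian marginals also cancels in the ratio. Since $T_k=\infty$ makes $c_{T_k}$ the independence copula with density $\equiv 1$, the ratio collapses to $c_s$ evaluated at $\vz=\bar\Phi^{-1}(\vu)$, giving $c_s(\vu)=\mathbb{P}(t=T_s\mid\rvz=\bar\Phi^{-1}(\vu))/\mathbb{P}(t=T_k\mid\rvz=\bar\Phi^{-1}(\vu))$, which generalises Proposition~\ref{prop:cdc_density}.

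Taking logarithms turns the ratio into a difference, $\log c_s(\vu)=\log\mathbb{P}(t=T_s\mid\bar\Phi^{-1}(\vu))-\log\mathbb{P}(t=T_k\mid\bar\Phi^{-1}(\vu))$, and I would then apply $\nabla_\vu$ via the chain rule. The transform $\vu\mapsto\vz=\bar\Phi^{-1}(\vu)$ acts coordinate-wise, so its Jacobian is diagonal; by the inverse-function theorem and $\Phi'=\phi$, the $i$-th diagonal entry is $\partial z^i/\partial u^i=1/\phi(\Phi^{-1}(u^i))$, with no off-diagonal contributions. Collecting coordinates, each $\partial_{u^i}$ becomes $w^i(\vu)\,\partial_{z^i}$ evaluated at $\vz=\bar\Phi^{-1}(\vu)$, which is precisely the claimed Hadamard product of $w(\vu)$ with the difference of class-probability scores.

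The only genuinely delicate point is the first step: one must check that the priors, the normaliser $Z(\vz)$, and the Gaussian marginal factors $\prod_i\phi(z^i)$ all cancel simultaneously, leaving a pure copula ratio that reduces to $c_s$ because $c_{T_k}\equiv 1$. An equivalent route that sidesteps the ratio is to differentiate $\log c_s(\vu)=\log\tilde p_s(\bar\Phi^{-1}(\vu))-\sum_i\log\phi(\Phi^{-1}(u^i))$ directly: there the normaliser drops out automatically since $\nabla_\vz\log\mathbb{P}(t=T_j\mid\vz)=\nabla_\vz\log\tilde p_{T_j}(\vz)-\nabla_\vz\log Z(\vz)$ with the $Z$ term shared across classes, while the $\log\phi$ correction reproduces exactly the $+\Phi^{-1}(u^i)$ contributed by the stationary score $\nabla_\vz\log\tilde p_{T_k}(\vz)=-\vz$. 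Either way the chain rule through the diagonal Jacobian is routine; the care lies entirely in tracking these cancellations.
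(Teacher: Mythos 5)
Your proof is correct and follows essentially the same route as the paper's: Bayes' rule with equal priors to cancel the normaliser, Sklar's factorisation together with the standard-Gaussian marginals guaranteed by Proposition~\ref{prop:cop_OU_indep}, and the chain rule through the diagonal Jacobian of $\bar\Phi^{-1}$ producing $w(\vu)$. The only organisational difference is that you first establish the density-level identity $c_s(\vu)=\mathbb{P}(t=T_s\mid\rvz=\bar\Phi^{-1}(\vu))/\mathbb{P}(t=T_k\mid\rvz=\bar\Phi^{-1}(\vu))$ (a generalisation of Proposition~\ref{prop:cdc_density} to arbitrary $s$) and only then differentiate, whereas the paper works with scores throughout and cancels the terminal Gaussian score $-\vz$ against the $+\vz$ arising from the marginal scores; the ``equivalent route'' you sketch at the end is exactly the paper's argument.
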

Consequently, given an accurate model for the class probabilities, we can employ powerful diffusion model algorithms to obtain copula samples through Langevin dynamics \citep{song2019generative}, see Alg. \ref{algo:cdc_sample} in Apdx. \ref{apdx:exp}. Importantly, under the popular assumption that the true distribution is supported on a manifold with dimension $<d$, diffusion-based sampling scales more efficiently to datasets of higher dimensions \citep{azangulov2024convergence}. This is in stark contrast to current deep copula methods \citep{hukyour} requiring Hamiltonian Monte Carlo in the ambient dimension $d$. 

\paragraph{Training the $c_{dc}$ model.}
Having demonstrated how to leverage class probabilities to perform copula density estimation and copula sampling, our next contribution is to prove how to estimate such a model correctly. To obtain both accurate densities and high-fidelity samples, we adopt a mixture loss between cross-entropy for the dependence level and mean squared error for the score, which has been empirically successful in previous work \citep{Yadinclass}. We provide a novel result that theoretically shows its correctness with a proof in Apdx. \ref{apdx:cdc_loss}.

\begin{theorem}[Loss function for $c_{dc}$]
\label{thm:cdc_loss}
    For the classification task defined in Eq. (\ref{eq:cdc}), and for a family of multinomial models $c_{dc}(\vz;\theta):\mathbb{R}^d\mapsto\{h\in[0,1]^k|\sum_{t=1}^k h^t=1\}$ indexed by parameters $\theta$, assume there exists a set of parameters $\Theta^*=\{\theta: c_{dc}(\vz;\theta)=\big(\mathbb{P}(t=T_1|\rvz=\vz),\ldots,\mathbb{P}(t=T_k|\rvz=\vz)\big)\}$, for all $\vz\in \mathbb{R}^d$. Then, for any weight $\alpha>0$,  $\{\theta:\theta=\argmin_\theta\mathcal{L}_{c_{dc}}(\theta)\}=\Theta^{*}$ for the loss:
\begin{align*}
    \label{eq:cdc_loss}
        \mathcal{L}_{c_{dc}}(\theta) = & \alpha \cdot \sum_{s=1}^k\mathbb{E}_{\vz\sim\tilde{\vp}_{T_s}}\big[-\log c_{dc}^{(s)}(\vz;\theta)\big] \\&+
        \sum_{s=1}^k \mathbb{E}_{\substack{ \vz_{T_1}\sim \tilde{\vp}_{T_1}\\\epsilon\sim \mathcal{N}(\mathbf{0},\mathbf{I})}}\big[|| \hat{\epsilon}_s(c_{dc}(e^{- T_s}\cdot\vz_{T_1}+\sqrt{1-e^{-2\cdot T_s}}\cdot\epsilon;\theta)) - \epsilon||^2\big]
\end{align*}
where $c_{dc}^{(s)}(\vz;\theta)$ denotes the $s$-th component of the vector $c_{dc}(\vz;\theta)$, and with
\begin{equation*}
    \hat{\epsilon}_s(c_{dc}(\vz;\theta)) :=\sqrt{1-e^{-2\cdot T_s}} \cdot  ( \nabla_{\vz} \log c_{dc}^{(k)}(\vz;\theta) -\nabla_{\vz} \log c_{dc}^{(s)}(\vz;\theta) +\vz).
\end{equation*}

\end{theorem}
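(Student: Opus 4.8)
The plan is to prove that the two summands of $\mathcal{L}_{c_{dc}}$ are each minimised on $\Theta^*$ — the cross-entropy term with $\Theta^*$ as its \emph{exact} minimiser set, and the score-matching term with $\Theta^*$ merely \emph{contained} in its minimiser set — and then to combine them via a strict-positivity argument. First I would treat the cross-entropy summand $\alpha\sum_{s=1}^k\mathbb{E}_{\vz\sim\tilde{\vp}_{T_s}}[-\log c_{dc}^{(s)}(\vz;\theta)]$, which is the standard multinomial cross-entropy up to a positive multiplicative constant. For each fixed $\vz$ the pointwise objective $\sum_{s}\tilde{\vp}_{T_s}(\vz)(-\log h^s)$ subject to $\sum_s h^s=1$ is strictly convex on the interior of the simplex, so a Lagrange-multiplier computation (equivalently, Gibbs' inequality) gives the unique minimiser $h^s=\tilde{\vp}_{T_s}(\vz)/\sum_{s'}\tilde{\vp}_{T_{s'}}(\vz)$, which, under the uniform prior $\mathbb{P}(t=T_s)=1/k$, equals the Bayes posterior $\mathbb{P}(t=T_s\mid\rvz=\vz)$. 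Since $\Theta^*$ is assumed realisable, the infimum over the family is attained exactly on $\Theta^*$, and the cross-entropy summand is \emph{strictly} above its minimum for every $\theta\notin\Theta^*$.

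Next I would treat the score-matching summand. The input $e^{-T_s}\vz_{T_1}+\sqrt{1-e^{-2T_s}}\,\epsilon$ is exactly a draw $\vz_{T_s}$ from the OU transition kernel of Eq. (\ref{eq:OU}) started at $\vz_{T_1}\sim\tilde{\vp}_{T_1}$, whose conditional score is $\nabla_{\vz_{T_s}}\log\tilde{\vp}(\vz_{T_s}\mid\vz_{T_1})=-\epsilon/\sqrt{1-e^{-2T_s}}$. By the classical denoising score-matching identity, minimising $\mathbb{E}\|\hat{\epsilon}_s(\cdot)-\epsilon\|^2$ over $\theta$ forces $\hat{\epsilon}_s$ to equal the conditional mean $\mathbb{E}[\epsilon\mid\vz_{T_s}]=-\sqrt{1-e^{-2T_s}}\,\nabla_{\vz}\log\tilde{\vp}_{T_s}(\vz)$. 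It then remains to verify that inserting the true posterior into the definition of $\hat{\epsilon}_s$ reproduces this target: with $c_{dc}^{(s)}(\vz)=\tilde{\vp}_{T_s}(\vz)/\sum_{s'}\tilde{\vp}_{T_{s'}}(\vz)$ the normalising sum cancels in the difference $\nabla_\vz\log c_{dc}^{(k)}-\nabla_\vz\log c_{dc}^{(s)}=\nabla_\vz\log\tilde{\vp}_{T_k}-\nabla_\vz\log\tilde{\vp}_{T_s}$, and since $T_k=\infty$ makes $\tilde{\vp}_{T_k}=\mathcal{N}(\mathbf{0},\mathbf{I})$ the stationary law, the identity $\nabla_\vz\log\tilde{\vp}_{T_k}(\vz)=-\vz$ exactly cancels the $+\vz$ term, leaving $\hat{\epsilon}_s=-\sqrt{1-e^{-2T_s}}\,\nabla_\vz\log\tilde{\vp}_{T_s}(\vz)$. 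Hence $\Theta^*$ lies in the minimiser set of this summand too.

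Finally I would combine the two terms. Each summand is bounded below by its own minimum, which is attained on $\Theta^*$; the cross-entropy summand is moreover strictly above its minimum off $\Theta^*$. Therefore, for any $\alpha>0$ and any $\theta\notin\Theta^*$, the total loss strictly exceeds the common value it attains throughout $\Theta^*$, yielding $\{\theta:\theta=\argmin_\theta\mathcal{L}_{c_{dc}}(\theta)\}=\Theta^*$. The main obstacle is the score-matching step: one must invoke the denoising identity in the indirect $\hat{\epsilon}_s$-parameterisation (rather than for a bare score network) and carry out the algebraic cancellation that hinges on identifying $\tilde{\vp}_{T_k}$ with the standard Gaussian stationary distribution — without this identification the $+\vz$ term would not vanish and $\Theta^*$ would fail to minimise the score term, breaking the whole argument.
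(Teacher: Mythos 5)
Your proof is correct and follows essentially the same route as the paper's: the cross-entropy term is handled as a strictly proper scoring rule under the realisability assumption, the score term via the denoising/Tweedie identity together with the cancellation of the $+\vz$ term against $\nabla_\vz\log\tilde{\vp}_{T_k}(\vz)=-\vz$, and the two terms are combined using $\alpha>0$ exactly as in the paper. The only difference is that the paper additionally characterises the \emph{full} minimiser set of the score term (true posteriors reweighted by arbitrary per-class constants $C_s$ and renormalised), whereas you establish only the inclusion $\Theta^*\subseteq\argmin$ of that term — which, as you correctly observe, is all the final combination argument requires.
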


 We choose $\alpha$ such that both loss terms are of the same magnitude, following \cite{Yadinclass}, with an ablation study on $\alpha$ in Apdx. \ref{apdx:alpha}. As this result only holds at optimality, we validate sample properties in practice in Apdx. \ref{apdx:unif} via statistical tests, rank diagnostics and calibration metrics, with visualisations in Apdx. \ref{apdx:vis}. We give a full overview for training the $c_{dc}$ in Alg. \ref{algo:cdc_opti} of Apdx. \ref{apdx:exp}.

\section{Reflection Copula}
\label{sec:reflection}

\begin{figure}[ht]
    \centering
    \includegraphics[width=1\linewidth]{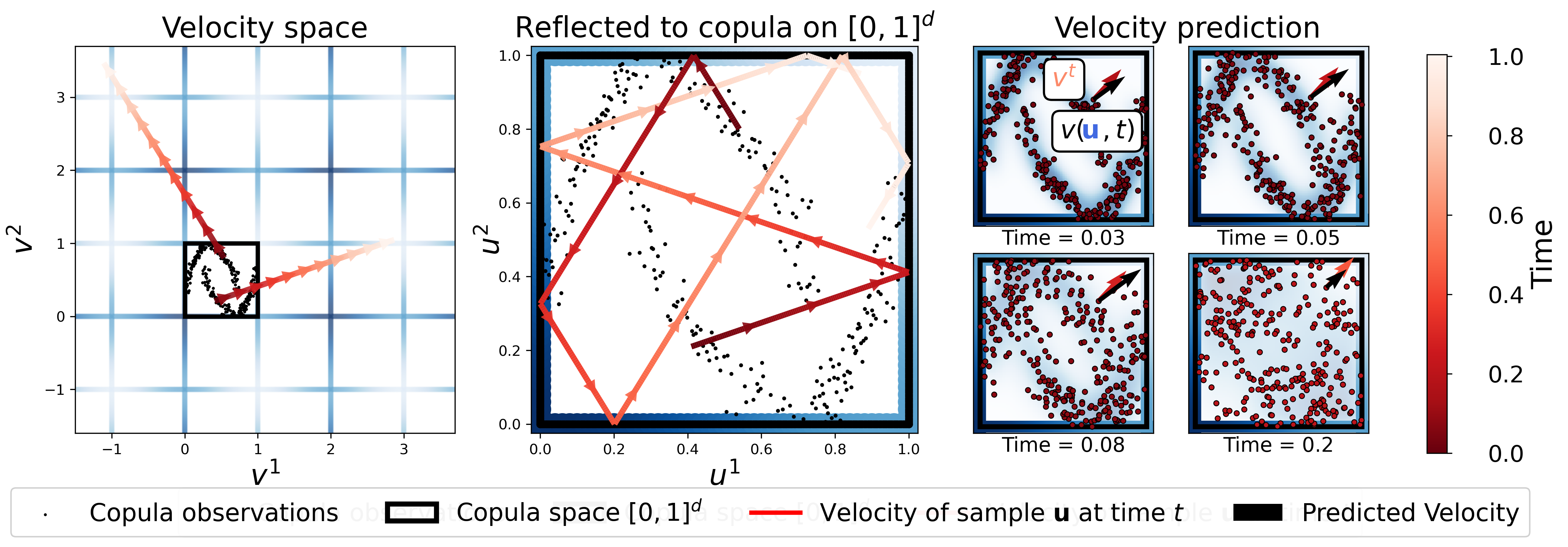}
    \caption{\textbf{Reflection copula design.} (Left panel) Copula data in black are given red velocities to move with time. (Middle panel) Trajectories are reflected from $\mathbb{R}^d$ to $[0,1]^d$ following mirrored blue outlines. (Third panel) Reflected trajectories diffuse the copula with time. The reflection copula then learns a velocity predictor $v(\rvu,t):=\mathbb{E}[\rvv_t|\rvu=\vu]$ for the average velocity, needed for sampling.}
    \label{fig:reflection_diagram}
\end{figure}

As many copula applications only require sampling, 
we design a generative copula model through processes that forget dependence, inspired by flow methods. We provide a diagram in Fig. \ref{fig:reflection_diagram}.

\subsection{Forgetting dependence on the copula hypercube}
In distinction from Sec. \ref{sec:cdc}, here we directly design our method on the $[0,1]^d$ copula scale. To obtain a stochastic path from a dependent to an independent copula, we answer:

\begin{tcolorbox}[colback=gray!10, colframe=gray!80, boxrule=1pt, sharp corners,top=1pt, bottom=1pt]
\textbf{Q1 (b):}\textit{ How to forget dependence directly on $[0,1]^d$ and maintain uniform marginals?}
\end{tcolorbox}

We start with copula observations $\rvu \in [0,1]^d$, and again augment our space with a time dimension $t\in[0,\infty)$. We let $\rvu_0:=\rvu$ and endow samples with velocities $\rvv_0\sim\mathcal{N}(\mathbf{0},\mathbf{I}_d)$. We define a process for the evolution of sample-velocity pairs $(\rvu_t,\rvv_t)$ through time via dimension-wise reflections.

\begin{definition}[Reflection process]
\label{def:reflection}
For initial samples $\rvu_0\in[0,1]^d$ and velocities $\rvv_0\sim\mathcal{N}(\mathbf{0},\mathbf{I}_d)$, define the univariate hypercube reflection operator $\mathcal{R}:\mathbb{R}\times\mathbb{R}\mapsto[0,1]\times\mathbb{R}$  as
\begin{align}
\label{eq:reflection}
\mathcal{R}(x,y) &= 
\begin{cases}
    (x - \lfloor x \rfloor, y ), & \text{if } \lfloor x \rfloor \text{ is even}, \\
    (1 - x + \lfloor x \rfloor,-y) , & \text{if } \lfloor x \rfloor \text{ is odd}
\end{cases}.
\end{align}
The reflection process for sample-velocity pairs $(\rvu_t,\rvv_t)$ at time $t\geq0$ is defined dimension-wise as
\begin{equation}
\label{eq:reflection_process}
    (\rvu_t^i,\rvv_t^i) = \mathcal{R}(\rvu_0^i+t\cdot\rvv_0^i,\rvv_0^i).
\end{equation}

\end{definition}

The process starts from a copula distribution, selects a direction according to a Gaussian, and bounces around the $[0,1]^d$ hypercube through time. The only randomness in this process comes from the velocity, so the trajectory is deterministic once $\rvv_0$ is sampled, see the diagram in Fig. \ref{fig:reflection_diagram}. We prove that this process defines a valid copula at all times as it converges to independence.

\begin{proposition}[Reflection converges to independence]
\label{lemma:reflection}
    For any initial point $\rvu_0\in [0,1]^d$ with $\rvv_0\sim\mathcal{N}(\vzero,\bf{I}_d)$ as in Definition \ref{def:reflection}, the reflected point $\rvu_t$ converges in distribution to the independent copula on the hypercube as $t\to\infty$. Further, if $\rvu_0$ follows a copula distribution, then $\rvu_t$ also follows some copula distribution $c_t$ with uniform marginals for any
time $t>0$.
\end{proposition}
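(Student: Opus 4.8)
The plan is to reduce everything to a one-dimensional statement, since the reflection process of Definition \ref{def:reflection} acts coordinate-wise and the velocities $\rvv_0^i$ are mutually independent. Writing $r:\mathbb{R}\to[0,1]$ for the position component of $\mathcal{R}$ (which does not depend on the velocity), $r$ is exactly the triangle/tent wave of period $2$ with $r(x)=x$ on $[0,1]$ and $r(x)=2-x$ on $[1,2]$, so that $\rvu_t^i=r(\rvu_0^i+t\,\rvv_0^i)$. The key structural observation I would record first is the \emph{unfolding} (billiard) identity: $r$ factors through the circle $\mathbb{R}/2\mathbb{Z}$, i.e.\ $r(x)=\rho\big(x \bmod 2\big)$ where $\rho$ folds $[0,2)$ onto $[0,1]$ by $\rho(w)=w$ for $w\in[0,1]$ and $\rho(w)=2-w$ for $w\in[1,2]$. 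Two elementary facts then drive the proof: (i) folding the uniform law on the circle yields the uniform law on $[0,1]$, since each $y\in[0,1]$ has exactly two preimages; and (ii) on the circle the trajectory is free translation, $\Theta_t^i:=(\rvu_0^i+t\,\rvv_0^i)\bmod 2$.

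For the convergence claim (first part), I fix $\rvu_0=u\in[0,1]^d$. Then $\Theta_t^i$ is a Gaussian wrapped onto the circle, so its Fourier coefficients are $\mathbb{E}[e^{i\pi n\Theta_t^i}]=e^{i\pi n u^i}e^{-t^2\pi^2n^2/2}$, which tend to $0$ as $t\to\infty$ for every $n\neq 0$. By Lévy's continuity theorem on the torus, $\Theta_t^i$ converges weakly to the uniform law on $\mathbb{R}/2\mathbb{Z}$, and the continuous mapping theorem applied to the (continuous) fold $\rho$ together with fact (i) gives $\rvu_t^i\Rightarrow U(0,1)$. Because the coordinates $\rvu_t^i=r(u^i+t\,\rvv_0^i)$ are independent (each a function of the independent $\rvv_0^i$), the joint characteristic functions factorise and $\rvu_t$ converges weakly to the product of uniforms, i.e.\ the independence copula.

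For the invariance claim (second part), let $\rvu_0^i\sim U(0,1)$ be independent of $\rvv_0^i\sim\mathcal{N}(0,1)$, and set $X:=\rvu_0^i+t\,\rvv_0^i$ with density $f$ (a smooth, bounded convolution of a uniform and a Gaussian). I would compute the law $g$ of $\rvu_t^i=r(X)$ on $[0,1]$ through its cosine coefficients $a_m=\int_0^1 g(y)\cos(m\pi y)\,dy$. Using the folding $g(y)=h(y)+h(2-y)$ with $h$ the period-$2$ periodisation of $f$, a short change of variables collapses this to $a_m=\int_{\mathbb{R}}f(x)\cos(m\pi x)\,dx=\mathrm{Re}\,\mathbb{E}\big[e^{i\pi m X}\big]$. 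Factorising gives $\mathbb{E}[e^{i\pi m X}]=\tfrac{(-1)^m-1}{i\pi m}\,e^{-t^2\pi^2 m^2/2}$ for $m\ge 1$: the uniform factor \emph{vanishes} for even $m$, while for odd $m$ it is purely imaginary and the symmetric-velocity factor is real, so $\mathrm{Re}$ annihilates it. Hence $a_m=0$ for all $m\ge1$ and $a_0=1$, so $g\equiv 1$ by completeness of the cosine basis; every marginal of $\rvu_t$ is uniform. Since $\rvu_t\in[0,1]^d$ almost surely, this is precisely a valid copula $c_t$ for every $t>0$.

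The main obstacle I anticipate is bookkeeping the reflection-to-circle dictionary correctly — in particular verifying from the piecewise $\lfloor\cdot\rfloor$ definition that $r$ really is the period-$2$ tent wave and that the fold is two-to-one — after which both parts become mechanical Fourier computations. The one genuinely delicate point is the even/odd dichotomy in the second part: it makes transparent that uniformity of each marginal is \emph{exact} at every finite $t$ (no stationarity required), with even harmonics killed by the uniform initial marginal and odd harmonics killed by the symmetry of the velocity law. Thus the Gaussian is convenient but only its symmetry is needed for marginal preservation, whereas its variance growth is what forces convergence to independence.
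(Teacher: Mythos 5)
Your proof is correct, and it takes a genuinely different route from the paper's on both halves of the statement. For convergence to independence, the paper applies Fej\'er's theorem directly to the $2$-periodic observable $f\circ r$ (with $r$ the position component of $\mathcal{R}$) integrated against the Gaussian density, whereas you pass to the circle $\mathbb{R}/2\mathbb{Z}$, note that the wrapped Gaussian's nonconstant Fourier coefficients $e^{i\pi n u}e^{-\pi^2 n^2 t^2/2}$ vanish as $t\to\infty$, and push the uniform limit through the continuous two-to-one fold; these are essentially the same argument in different packaging, both exploiting oscillation of a period-$2$ object against a spreading Gaussian. The real divergence is in the marginal-preservation claim. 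The paper disposes of it in one line by asserting that $r$ is measure-preserving as a composition of translations and reflections --- an argument that, read literally for a fixed velocity, fails: the map $u\mapsto r(u+1/2)$ pushes $U(0,1)$ onto the density $2\cdot\mathbf{1}_{[1/2,1]}$, so there is no conditional-on-velocity invariance and the averaging over a symmetric velocity law is essential. Your cosine-coefficient computation makes this precise and correct: unfolding gives $a_m=\mathrm{Re}\,\mathbb{E}[e^{i\pi m X}]$, the uniform factor annihilates even $m$, and the realness of the symmetric velocity's characteristic function annihilates odd $m$, so $g\equiv 1$ once you observe that $g$ is bounded (a fold of the periodisation of a density bounded by $1$) and hence lies in $L^2[0,1]$, where the half-range cosine system is complete. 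This buys you both a sharper proof than the paper's and an observation absent from it: only symmetry, not Gaussianity, of the velocity law is needed for exact marginal uniformity at every finite $t$, while the Gaussian's variance growth is what drives the decay of dependence.
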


We give a proof in Apdx. \ref{apdx:reflection}. Note that the copula $c_t$ is generally different from the true copula $c$ and will have less dependence with time.  As such, this process creates a stochastic interpolation between a dependent copula and an independent one with time, gradually reducing dependence. 

\subsection{Remembering dependence by predicting velocities}

To obtain a generative copula model from this process, we answer the following:

\begin{tcolorbox}[colback=gray!10, colframe=gray!80, boxrule=1pt, sharp corners, top=1pt, bottom=1pt]
\textbf{Q2 (b):}\textit{ How to generate samples by learning from the reflection process in Definition \ref{def:reflection}?}
\end{tcolorbox}

Intuitively, as velocities are the only randomness in the system, if we learn the expected velocity, we should learn the system's average behaviour. We leverage a connection between our velocities and the probability path governing the change in copula distributions through time, as is shown in the following result. We include a proof of its applicability to the reflection process in Apdx. \ref{apdx:reflection_sim}.

\begin{proposition}[\cite{holderrieth2024hamiltonian}]
\label{prop:reflection_sim}
    Consider the reflection process introduced in Eq. (\ref{eq:reflection_process}) applied to copula samples $\rvu_0\sim c(\rvu)$ with velocities $\rvv_0\sim\mathcal{N}(\mathbf{0},\mathbf{I}_d)$. Let $v^*(\vu,t)=\mathbb{E}[\rvv_t|\rvu_t=\vu]$ be the expected velocity at time $t\geq0$ and location $\vu\in[0,1]^d$, and denote by $c_T$ the copula of samples $\rvu_T$ at time $T$. Then, starting from a value $\rvu_T\sim c_T(\rvu_T)$ and following the probability path given by the ordinary differential equation 
    \begin{equation}
        \label{eq:ODE}
        \frac{d}{dt}\vu_t=v^*(\vu,t)
    \end{equation}
    backwards from $T\to0$, the destination point comes from the initial copula distribution $\vu_0\sim c(\rvu)$. 
\end{proposition}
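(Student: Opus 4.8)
The strategy is to reduce the claim to a single continuity equation for the marginal law of $\rvu_t$ and then to invoke the stated correspondence between that equation and its characteristic ODE. Writing $c_t$ for the density of $\rvu_t$ (a valid copula density for every $t$ by Proposition~\ref{lemma:reflection}), I will establish
\begin{equation*}
\partial_t c_t(\vu) + \nabla_\vu\cdot\big(c_t(\vu)\,v^*(\vu,t)\big)=0,\qquad v^*(\vu,t)=\mathbb{E}[\rvv_t\mid\rvu_t=\vu],
\end{equation*}
together with the no-flux condition $c_t\,v^*\cdot\vn=0$ on the faces of $[0,1]^d$. Granting this, the flow map of $\tfrac{d}{dt}\vu_t=v^*(\vu_t,t)$ pushes $c_0=c$ forward onto $c_t$ for all $t$; integrating the same field backward from a draw $\rvu_T\sim c_T$ therefore returns a sample of $c_0=c$, which is exactly the assertion of the proposition.

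To derive the continuity equation I pass to phase space and use the unfolding description of Definition~\ref{def:reflection}. Let $\tilde\rho_t(\vx,\vv)$ be the joint law of the freely transported pair $(\rvu_0+t\rvv_0,\rvv_0)$ on $\mathbb{R}^d\times\mathbb{R}^d$; since $\dot{\vx}=\vv$ and $\dot{\vv}=\vzero$ between reflection events, $\tilde\rho_t$ solves the advection (Liouville) equation $\partial_t\tilde\rho_t+\vv\cdot\nabla_\vx\tilde\rho_t=0$. The pair $(\rvu_t,\rvv_t)$ is the image of this pair under the coordinate-wise operator $\mathcal{R}$, so its density $\rho_t$ is the pushforward of $\tilde\rho_t$ under the folding and satisfies the same transport equation in the interior of the cube. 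Integrating over $\vv\in\mathbb{R}^d$ and noting the absence of velocity-derivative terms in the bulk gives $\partial_t c_t(\vu)+\nabla_\vu\cdot\!\int\vv\,\rho_t(\vu,\vv)\,d\vv=0$, and the inner integral is $c_t(\vu)\,\mathbb{E}[\rvv_t\mid\rvu_t=\vu]=c_t(\vu)\,v^*(\vu,t)$, which is the desired equation.

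The main obstacle is the reflecting boundary: I must show that the sign flips built into $\mathcal{R}$ neither create sources in the interior nor leak mass through the walls. The key point is a symmetry of $\rho_t$ at the boundary. Summing $\tilde\rho_t$ over the preimages of $\mathcal{R}$ shows that, on the face $u^i\in\{0,1\}$, the even cells contribute velocity $v^i$ while the odd (reflected) cells contribute $-v^i$; continuity of the free density $\tilde\rho_t$ across the integer hyperplanes then forces $\rho_t$ to be even in the normal velocity $v^i$ on that face. Consequently the integrand of the normal flux $\int v^i\,\rho_t\,d\vv$ is odd and the flux vanishes, giving simultaneously $v^*\cdot\vn=0$ on $\partial[0,1]^d$ and conservation of total mass $\int_{[0,1]^d}c_t=1$. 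Carrying out this preimage bookkeeping from the explicit cases of Eq.~(\ref{eq:reflection}), and verifying the mild regularity of $v^*$ needed for a well-posed flow (so that the PDE-to-ODE transfer of the cited result applies, e.g.\ via a superposition principle when $v^*$ is merely weakly differentiable), is where the technical care concentrates. With the continuity equation and its no-flux boundary condition in place, the cited correspondence yields the backward transport of $c_T$ to $c$, completing the argument.
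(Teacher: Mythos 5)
Your proposal is correct and follows essentially the same route as the paper's proof: both derive the marginal continuity equation $\partial_t c_t + \nabla_\vu\cdot(c_t\,v^*)=0$ by integrating the phase-space Liouville/Fokker--Planck equation over $\vv$, impose a no-flux condition on $\partial[0,1]^d$, and invoke the continuity-equation-to-ODE correspondence of \cite{holderrieth2024hamiltonian}. The only substantive difference is that you derive the vanishing normal flux from the evenness of the folded density in the normal velocity (via the preimage structure of $\mathcal{R}$), whereas the paper posits this Neumann condition directly as part of the reflected dynamics; your version is slightly more self-contained on that point.
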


 In other words, the expected velocity of this process is enough to generate samples. However, we first require a sample $\rvu_T$ at time $T$ from our process. As we show in Proposition \ref{lemma:reflection}, $\rvu_T\sim U[0,1]^d$ for $T\to\infty$. Therefore, we initialise Eq. (\ref{eq:ODE}) with uniform samples for a suitably large time $T$ (see Apdx. \ref{apdx:w2_forward}) and numerically solve it to generate copula samples. We illustrate this procedure in Fig. \ref{fig:refl_sampling} and Alg. \ref{algo:ref_sampling}.

 \begin{wrapfigure}{r}{0.55\textwidth}
    \centering
    \vspace{-0.3em}
    \includegraphics[width=0.53\textwidth]{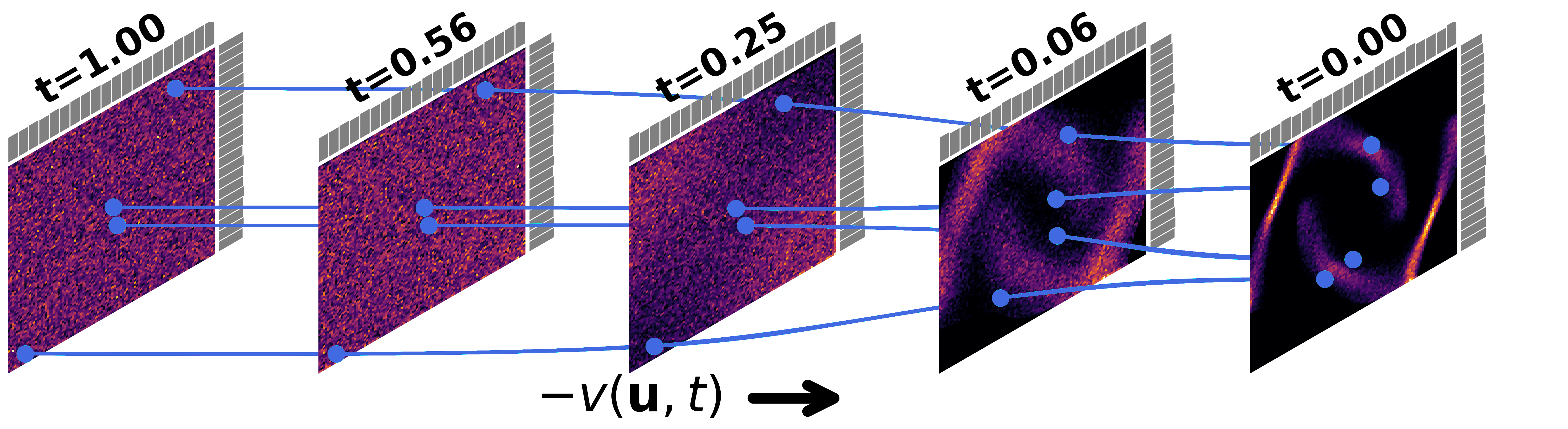}
    \caption{\textbf{Reflection copula sampling.} Initialised from uniform samples at $t=T$, following $-v^*(\vu,t)$ preserves marginals and generates $\rvu_0\sim c(\rvu)$ at $t=0$.}
    \label{fig:refl_sampling}
\end{wrapfigure}
 
 \paragraph{Optimal velocity predictor.} As $v^*(\vu,t)$ is not available in closed form, our \textit{reflection copula} model learns to approximate the average velocity with a velocity predictor $v_\theta(\vu,t):[0,1]^d\times[0,\infty)\mapsto\mathbb{R}^d$. Analogous to flow models, for a sufficiently rich model class, minimising the mean square error of predictions against sample velocities of our systems recovers the expected velocity:
 \begin{equation*}
 \label{eq:reflection_loss}
     \theta^*=\argmin_\theta \mathbb{E}_{\rvu_0\sim c, \rvv_0\sim\mathcal{N}(\mathbf{0},\mathbf{I}_d)} \big[||v_\theta(\rvu_t,t)-\rvv_t||^2\big] \Rightarrow v_{\theta^*}(\vu,t)=\mathbb{E}[\rvv_t|\rvu_t=\vu]
 \end{equation*}
where the pair $(\rvu_t,\rvv_t)$ follows the reflection process initialised at $\rvu_0, \rvv_0$. We verify our models learn valid copulas in practice in Apdx. \ref{apdx:unif} via statistical tests, rank diagnostics and calibration metrics. Training details are given in Alg. \ref{algo:ref_trainig} of Apdx. \ref{apdx:exp}.


\section{Related Work}


\paragraph{Deep copulas.}In \cite{hofert2021quasi}, the dependence is modelled on the Gaussian scale with a moment matching network. This model is improved with a generative adversarial network design by \cite{janke2021implicit}. However, both methods are only able to output samples, precluding their use in density-reliant copula applications, and suffer from common mode collapse issues. In \cite{kamthe2021copula}, a normalising flow approximates the copula to obtain densities and samples. Theoretically, their model also obtains valid copulas at optimality, but unlike our work, it lacks similar results to our Props. 2, 7, and 9 on "forgetting and remembering" mechanisms to ensure the model evolves through valid copula states. Specific copula classes, such as Archimax \citep{ng2022inference} and Archimedean copulas \citep{ng2021generative}, which focus on extremes, have also been estimated with deep networks. These works have a model class which exactly matches a given parametric copula family, which lends important robustness, especially for extrapolating to tail behaviours, but limits applications for the more general dependencies considered in our work. Closest to our $c_{dc}$ copula is the ratio copula of \cite{hukyour}, which we extend to the multinomial setting with diffusion-based dependence levels, motivated by the connection of \cite{Yadinclass}. Our reflection copula is unique in leveraging principles of \cite{holderrieth2024hamiltonian} in the context of a generative copula on $[0,1]^d$ with preservation of marginal distributions. Our designs are the first instances of copulas scaling to complex and high-dimensional dependencies, such as in image data.

\paragraph{Copulas in diffusion works.} To the best of our knowledge, our work is the first to utilise diffusion and flow principles for copula modelling. In \cite{bibbona2016copula}, serial dependence in univariate diffusions is connected to bivariate copulas, which differs from our goal of flexible dependence models. In \cite{liu2024discrete}, autoregressive language models are used as discrete copulas to improve sampling of text diffusion models. Finally, processes with Gaussian distributions are commonly studied in diffusion works \citep{kingma2021variational,ouimproving,Pierret_Galerne_diffusion_models_Gaussian_exact_solutions_errors_ICML2025, sahoo2025the}, but no model to our knowledge leverages their univariate marginal preservation.

\section{Experiments}
\label{sec:experiments}
 In Sec. \ref{sec:exp_science}, we model challenging dependencies in synthetic data generation \citep{patki2016synthetic, sun2019learning} and multi-agent imitation learning \citep{wang2021multi} tasks. In Sec. \ref{sec:exp_img}, we show that our models are the first instance of copulas able to capture high-dimensional structured dependencies found in images. Appendix \ref{apdx:exp} contains full implementation details, including computational times (\ref{apdx:comp}), an analytical simulation study (\ref{apdx:sim_study}), a showcase of the forward convergence rates (\ref{apdx:w2_forward}), an ablation of our mixture loss (\ref{apdx:alpha}), sample uniformity diagnostics with statistical tests and calibration metrics (\ref{apdx:unif}), as well as sample quality visualisations (\ref{apdx:vis}). Code to reproduce results is publicly available at \href{https://github.com/Huk-David/Diffusion-and-Flow-based-Copulas}{https://github.com/Huk-David/Diffusion-and-Flow-based-Copulas}.
 

\paragraph{Benchmarks and metrics.} We compare to models whose bespoke goal is to learn dependencies. We use the Gaussian and vine copulas as baselines, and the Implicit Generative Copula (IGC) of \cite{janke2021implicit} and the Ratio copula of \cite{hukyour} as state of the art deep copula models. Following \citep{nagler2017nonparametric, hukyour}, we use the copula log-likelihood (LL) when available and use the Wasserstein-2 metric (W2) to assess samples. We compute the Frobenius norm (Frob) between samples and observations' Kendall's tau matrices \citep{hofert2018elements}, measuring pair-wise dependence\footnote{Gaussian copulas are advantaged on this metric due to their closed-form relationship to Kendall's tau.}, and use the Frechet Inception Distance (FID) \citep{heusel2017gans} to assess image samples. The LL and Frob both purely measure the fit of the dependence structure, while W2 and FID also consider marginal uniformity of samples. All metrics are shown with standard deviations across 10 independent runs.

\subsection{Multimodal dependence of scientific datasets}
\label{sec:exp_science}

\begin{table*}[ht]
\centering
\caption{\textbf{Modelling dependence of scientific datasets.} Compared to existing copulas, the $c_{dc}$ model achieves better LL evaluations, while both the $c_{dc}$ and reflection copula obtain better samples.}
\resizebox{\textwidth}{!}{
\begin{tabular}{|c|c c c|c c c|c c c|}
\hline
\multirow{2}{*}{Model} & \multicolumn{3}{c|}{\texttt{Magic} ($n=19020,d=10$)} & \multicolumn{3}{c|}{\texttt{Dry\_Bean} ($n=13611,d=16$)} & \multicolumn{3}{c|}{\texttt{Robocup} ($n=135607,d=20$)} \\ \cline{2-10}
 & LL$\quad\uparrow $& W2 $\quad\downarrow$& Frob $\quad\downarrow$& LL $\quad\uparrow $& W2 $\quad\downarrow$ & Frob$\quad\downarrow$ & LL$\quad\uparrow$ & W2$\quad\downarrow$ & Frob$\quad\downarrow$ \\ \hline

Gaussian
& $3.92_{\pm 0.06}$ & $1.76_{\pm 0.02}$ & $0.27_{\pm 0.06}$
& $40.09_{\pm 0.29}$ & $1.57_{\pm 0.02}$ & $0.40_{\pm 0.04}$
& $0.22_{\pm 0.00}$ & $3.96_{\pm 0.01}$ & $\mathbf{0.45_{\pm 0.03}}$ \\

Vine
& $6.59_{\pm 0.07}$ & $1.44_{\pm 0.01}$ & $0.30_{\pm 0.05}$
& $32.75_{\pm 0.14}$ & ${1.35_{\pm 0.03}}$ & $ 0.95_{\pm 0.07}$
& $1.80_{\pm 0.00}$ & $3.96_{\pm 0.01}$ & $0.60_{\pm 0.04}$ \\

Ratio 
& $6.76_{\pm 0.38}$ & $2.26_{\pm 0.79}$ & $1.24_{\pm 0.76}$
& $48.21_{\pm 0.89}$ & $2.54_{\pm 0.27}$ & $2.25_{\pm 0.55}$
& $2.30_{\pm 0.33}$ & $3.93_{\pm 0.08}$ & $0.59_{\pm 0.05}$ 
\\

IGC
& $-$ & $1.69_{\pm 0.04}$ & $1.24_{\pm 0.20}$
& $-$ & $1.66_{\pm 0.01}$ & $2.31_{\pm 0.02}$
& $-$ & $4.13_{\pm 0.02}$ & $2.85_{\pm 0.12}$ \\

\hline

$C_{dc}$ (ours)
& $\mathbf{18.65_{\pm 4.85}}$ & $\mathbf{1.33_{\pm 0.03}}$ & $\mathbf{0.21_{\pm 0.05}}$
& $\mathbf{50.21_{\pm 0.82}}$ & $\mathbf{1.12_{\pm 0.03}}$ & $\mathbf{0.35_{\pm 0.08}}$
& $\mathbf{3.40_{\pm 0.37}}$ & $\mathbf{3.87_{\pm 0.03}}$ & ${0.51_{\pm 0.02}}$ \\

Reflection (ours)
& $-$ & $\mathbf{1.34_{\pm 0.03}}$ & $\underline{0.28_{\pm 0.07}}$
& $-$ & $\mathbf{1.35_{\pm 0.08}}$ & $\underline{0.47_{\pm 0.16}}$
& $-$ & $\mathbf{3.84_{\pm 0.03}}$ & $\underline{0.49_{\pm 0.02}}$ \\ \hline
\end{tabular}
}
\label{Table:science}
\end{table*}

We choose datasets with complex dependencies between scientific variables for non-parametric copula evaluation. Used in \cite{nagler2017nonparametric,janke2021implicit}, \texttt{Magic} has telescope observation data, while \texttt{Dry\_Bean} measures dry bean shapes. Following \cite{wang2021multi}, we model the behaviour of a team of ten robots playing football by controlling their vertical and horizontal movements as a probabilistic multivariate time-series (which is a common use of copulas, see Apdx. \ref{apdx:robocup} for full details). In Tab. \ref{Table:science}, our copulas achieve the best results for LL and W2, and perform comparably to the Gaussian copula for Frob. We report pair plots in Apdx. \ref{apdx:vis} for visualisation.


\subsection{High-dimensional structured dependencies of images}
\label{sec:exp_img}

To demonstrate our methods' scalability, we require datasets with pre-estimated marginal distributions. As this scale of experiments has not been fully studied in previous works, 
we rely on image datasets where CDFs are trivial and the complexity resides in the dependence structures; our goal is not to compete against image models. 
Furthermore, our choice of monochrome images provides an analogue to spatial problems, such as in climate research, where copulas are widely adopted.

\begin{figure}[t]
    \centering   
    \includegraphics[width=1\linewidth]{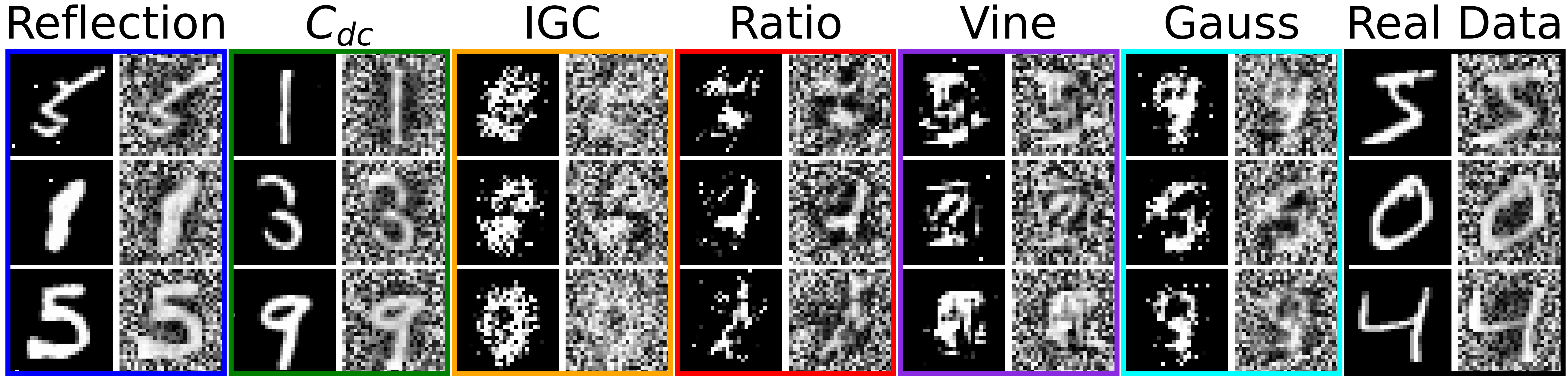}
    \includegraphics[width=1\linewidth]{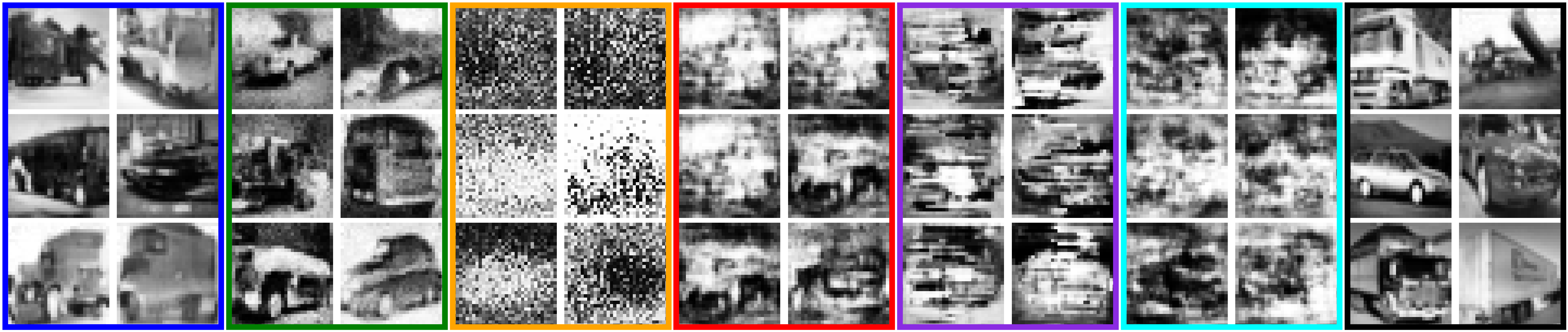}
    \caption{\textbf{Copula image samples}: Data and copula scale \texttt{MNIST} samples (top rows), copula scale \texttt{Cifar} samples (bottom rows). Only our designs accurately represent the complex dependencies.}
    \label{fig:image_sims}
    \vspace{-1em}
\end{figure}

\begin{table*}[ht]
\centering
\caption{\textbf{Copulas of high-dimensional data:} Our methods outperform existing copulas at capturing multimodal and high-dimensional dependencies according to LL and sample-based metrics.}
\resizebox{\textwidth}{!}{
\begin{tabular}{|c|c c c|c c c|c c c|}
\hline
\multirow{3}{*}{Model} & \multicolumn{3}{c|}{\texttt{digits} ($n=1797,d=64$)} & \multicolumn{3}{c|}{\texttt{MNIST} ($n=60000, d=784$)} & \multicolumn{3}{c|}{\texttt{Cifar} ($n=10000, d=1024$)} \\ \cline{2-10}
                       & LL $\quad\uparrow $ & W2$\quad\downarrow $ & FID$\quad\downarrow $ & LL$\quad\uparrow $ & W2$\quad\downarrow $ & FID$\quad\downarrow $ & LL$\quad\uparrow $  & W2$\quad\downarrow $ & FID$\quad\downarrow $ \\ \hline

Gaussian               
&  $10.74_{\pm 0.13}$ & $8.13_{\pm 0.02}$ & ${5.74_{\pm0.71}}$&
$115.84_{\pm 0.14}$ & $35.59_{\pm 0.03}$ & ${102.56 _{\pm 2.61}}$&
$1258.40_{\pm 5.69}$ & $ {30.62_{\pm0.08}}$ & ${140.12_{\pm2.41}}$\\

Vine                   
&  $11.20_{\pm 0.86}$ & $8.20_{\pm 0.01}$ & ${ 6.06_{\pm0.73}}$&
$198.10_{\pm 0.40}$ & ${36.30 _{\pm 0.05}}$ & ${86.48_{\pm 2.82}}$&
${\texttt{NaN}}$ & $ {33.84_{\pm0.15}}$ & ${100.04_{\pm2.52}}$\\

Ratio   
&  $13.29_{\pm 2.75}$ & $8.42_{\pm 0.42}$ & ${6.04_{\pm0.97}}$&
${334.42_{\pm 45.91}}$ & ${35.98_{\pm 0.38}}$ & ${66.56_{\pm 17.76}}$& 
${1348.18_{\pm 12.31}}$ & $ {49.91_{\pm 19.22}}$ & ${134.41_{\pm 33.99}}$\\

IGC                    
&  ${-}$ & $9.52_{\pm 0.15}$ & ${25.41_{\pm4.35}}$&
${-}$ & ${36.37 _{\pm 0.14}}$ & ${128.87_{\pm5.31}}$&
${-}$ & $ {33.09_{\pm0.30}}$ & ${269.68_{\pm8.03}}$\\

\hline

$C_{dc}$ (ours)
&  $\mathbf{13.80_{\pm 1.30}}$ & $\mathbf{6.97_{\pm 0.03}}$ &  ${15.24_{\pm 0.92}}$ &
$\mathbf{346.70_{\pm 2.52}}$ &  $\mathbf{33.64_{\pm 0.03}}$ & $\mathbf{7.38_{\pm0.19}}$ &  
$\mathbf{1470.75_{\pm 24.90}}$ & $\mathbf{28.67_{\pm 0.50}}$ & $\mathbf{80.51_{\pm 17.32}}$  \\

Reflection (ours)
&  ${-}$ & $\mathbf{7.86_{\pm 0.07}}$ &  $\mathbf{5.50_{\pm1.36}}$& 
${-}$ & $ \mathbf{35.02_{\pm 0.40}}$&  $\mathbf{9.13 _{\pm 0.90}}$& 
${-}$ & $ {32.40_{\pm2.08}}$ &  $\mathbf{42.14_{\pm3.23}}$\\ \hline
\end{tabular}
} 
\label{Table:highd}
\end{table*}

\paragraph{Sampling and density estimation in high-dimensions.} We follow \cite{hukyour} and use the \texttt{digits} and \texttt{MNIST} datasets as high-dimensional examples with complex dependencies but simple marginals. We further use a grey version of the \texttt{Cifar10} dataset restricted to the two classes of cars and trucks. Notably, the dependence of copula data for \texttt{MNIST} is very specific for pixels in the middle of the image with noise at the edges, while \texttt{Cifar} requires the whole image to be dependent (see Fig. \ref{fig:image_sims}). In Tab. \ref{Table:highd}, the $c_{dc}$ obtains the highest LL, while for W2 our methods outperform benchmarks with the exception of the $c_{dc}$ for FID on \texttt{digits} and the reflection copula for W2 on \texttt{Cifar}. The vine failed to obtain LLs on \texttt{Cifar} regardless of hyperparameters. From Fig. \ref{fig:image_sims}, our Reflection copula samples are smoother, while the $c_{dc}$ has slightly noisier and grainier samples, as reflected by the FID values on \texttt{digits}, and \texttt{Cifar}, see also Figs. \ref{fig:sims_mnist_cop}, \ref{fig:sims_mnist}, \ref{fig:sims_cifar}. We believe this stems from the stochasticity of the sampling procedure as detailed in Alg. \ref{algo:cdc_sample}, which is a beneficial feature to maintain uniform marginals (see Apdx. \ref{apdx:unif}). Existing copulas do not scale well to these high dimensions and are unable to represent their dependencies.

\section{Conclusion}

In this work, we theoretically show that our processes only forget dependence, and that our models remember the true copula from them. Our work suggests new types of generative copulas using stochastic processes exactly targeting dependence, making copula models scale to complex dependencies in $d>1000$ for the first time. Our copulas offer powerful options for copula likelihoods and sampling, empowering applications across the varied fields utilising such models. 

\paragraph{Limitations and future work.} While our studied processes define valid copulas by design, our models are only guaranteed to represent copulas at optimality. An open question remains about how to design architectures to preserve properties such as uniformity of marginals and normalisation of the density for any model parameterisation. Relatedly, the question of how to sample with diffusion and flow models to best represent the model's true distribution is an active area of research which is equally relevant to our work. Further, diffusion models based on different marginal-preserving processes could better suit copulas with extreme dependence. The ingredients for that are a CDF mapping to a latent space with a diffusion whose invariant distribution is that CDF, which is applied independently across dimensions. Similarly, different velocity distributions for the Reflection copula could lead to better inductive biases. Finally, as copulas require continuous marginal densities, an avenue for future research consists of extending our approach to discrete variables following the discrete copula notion of \cite{geenens2020copula}.



\subsubsection*{Author Contributions} David Huk originated the idea, designed the project, wrote the code, ran the
experiments, wrote the paper, and derived the proofs. Theodoros Damoulas supervised the project,
provided valuable feedback, and helped edit the paper.

\subsubsection*{Acknowledgments} We are grateful for the helpful discussion and comments of the reviewers and Area Chair, as well as those of Peter E. Holderrieth, Miha Brešar, Aleksandar Mijatovic, Thomas Nagler, and Ji Won Park. Furthermore, David Huk would like to thank Unilink Software Ltd for supporting his attendance at the conference. David Huk is funded by the Center for Doctoral Training in Mathematical Sciences at Warwick. Theodoros Damoulas acknowledges support from a UKRI Turing AI acceleration Fellowship [EP/V02678X/1].

\subsubsection*{Reproducibility statement.} We take the following measures to ensure full reproducibility of our work. We give all details of proofs and theoretical results in Appendix \ref{apdx:proofs} with statements presented in the main text. We give full details of data preprocessing and model implementation in Apdx. \ref{apdx:exp}. We provide Algs. \ref{algo:cdc_sample}, \ref{algo:cdc_opti}, \ref{algo:ref_sampling}, and \ref{algo:ref_trainig} with instructions to train and sample from our proposed models. Code to reproduce experiments is publicly available at \href{https://github.com/Huk-David/Diffusion-and-Flow-based-Copulas}{https://github.com/Huk-David/Diffusion-and-Flow-based-Copulas}.

\bibliography{main}
\bibliographystyle{iclr2026_conference}

\appendix
\section{Proofs}
\label{apdx:proofs}
We restate the propositions and theorems for completeness, and provide their proofs afterwards.
\subsection{Proposition \ref{prop:cop_OU_indep}}
\label{apdx:cop_OU_indep}
\paragraph{Proposition \ref{prop:cop_OU_indep}.}\textit{
For copula samples $\rvz_0$ on the Gaussian scale, consider $\rvu_t^i=\Phi(\rvz_t^i)$ with $\rvz_t^i$ as defined through the process in Equation (\ref{eq:OU}). Then, this process maintains uniform marginal distributions at all times:
\begin{equation*}
    \rvu_t^i\sim U(0,1), \quad \forall t\geq0, i\in\{1,\ldots,d\}.
\end{equation*}
Moreover, denote by $c_t$ the copula of $\rvz_t$. Then, $c_t$ converges to the independence copula as $t\to \infty$ in the KL divergence with rate $\mathcal{O}(e^{-2t})$.
}

The proof consists of converting results from stochastic differential equations on $\rvz$ from the Gaussian scale to $\rvu$ on the copula scale.
\begin{proof}
To begin, rewrite Equation (\ref{eq:OU}) in the following identity:
\begin{equation*}
    \rvz_t=e^{-t}\cdot\rvz_0+\sqrt{1-e^{-2t}}\cdot \epsilon, \quad \epsilon\sim\mathcal{N}(\mathbf{0}, \mathbf{I}).
\end{equation*}
For a single dimension, note that $\rvz_0^i$ is standard Gaussian by assumption and $\epsilon^i$ is standard Gaussian by construction, and so $\rvz_t^i$ is also Gaussian. Its mean is thus zero and its variance is $e^{-2t}+1-e^{-2t}=1$. Therefore, with $\Phi$ the univariate standard Gaussian CDF, $\rvu_t^i=\Phi(\rvz_t^i)$ is uniform.

For the second part, with the OU process as defines through Equation (\ref{eq:OU}), under suitable assumptions on the data distribution $\vp$, the convergence to the stationary $\mathcal{N}(\mathbf{0},\mathbf{I})$ is well known with rate $\mathcal{O}(Ce^{-2t})$ for some constant $C$ which depends on the furthest mode of the initial data distribution $\Tilde{\vp}_0$ (see \textit{e.g.} \cite{brevsar2024non} for the \texttt{(DATA)} assumption and Proposition 1.2). To convert this convergence from $\rvz_t$ on the Gaussian scale to $\rvu_t$ on the copula scale, notice that the mapping from one to the other is the standard Gaussian CDF applied dimension-wise, which is an invertible reparametrisation. The result is obtained via the KL's invariance under a change of variable.
\end{proof}

\subsection{Proposition \ref{prop:cdc_density}}
\label{apdx:cdc_density}

\paragraph{Proposition \ref{prop:cdc_density}.}\textit{For the classification task as defined in Equation (\ref{eq:cdc}), the true copula density $c(\vu)$ at a given value $\vu\in[0,1]^d$ is equal to the following probability ratio:
    \begin{equation*}
        c(\vu)=\frac{\mathbb{P}(t=T_1|\rvz=\bar{\Phi}^{-1}(\vu)\big)}{\mathbb{P}(t=T_k|\rvz=\bar{\Phi}^{-1}(\vu)\big)},
    \end{equation*}
    where $\bar{\Phi}^{-1}(\vu)$ is the inverse standard Gaussian CDF applied dimensions-wise to $\vu$.}

\begin{proof}
    We consider the classification problem of Equation (\ref{eq:cdc}), and in particular the class probabilities given by $\big(\mathbb{P}(t=T_1|\rvz=\vz)\big),\ldots,\mathbb{P}(t=T_k|\rvz=\vz)\big)\big)$. We define the mixture of data likelihoods between classes by $\tilde{\vp}_{+}(\vz):=\sum_{s=1}^k \pi_s\cdot\tilde{\vp}_{T_s}(\vz)$, where the prior probabilities $\pi_s$ are assumed identical for all classes $s\in\{1,\ldots,k\}$. By Bayes's rule, we show the following equivalence, similar to identities used in Density Ratio Estimation works \citep{srivastavaestimating}:
    \begin{align*}
        \frac{\tilde{\vp}_{T_1}(\vz)}{\tilde{\vp}_{T_k}(\vz)} &= \frac{\tilde{\vp}_{T_1}(\rvz=\vz|t=T_1)}{\tilde{\vp}_{T_k}(\rvz=\vz|t=T_k)} = \frac{\tilde{\vp}_{T_1}(\rvz=\vz,t=T_1)/\pi_{T_1}}{\tilde{\vp}_{T_k}(\rvz=\vz,t=T_k)/\pi_{T_k}}\\&
        =\frac{\tilde{\vp}_{T_1}(t=T_1|\rvz=\vz)\cdot\tilde{\vp}_{+}(\vz)}{\tilde{\vp}_{T_k}(t=T_k|\rvz=\vz)\cdot\tilde{\vp}_{+}(\vz)} = \frac{\tilde{\vp}_{T_1}(t=T_1|\rvz=\vz)}{\tilde{\vp}_{T_k}(t=T_k|\rvz=\vz)},
    \end{align*}
    where the prior odds cancel out in the first line, as they are assumed to be identical. To conclude, note that a copula is a ratio of densities as shown in Section 3 of \cite{hukyour}, and such ratios are preserved under invertible functions by Lemma 1 in \cite{choi2021featurized}, meaning 
    \begin{equation*}
        c(\vu)=\frac{\tilde{\vp}(\Phi^{-1}(\cb{u}^1),\ldots,\Phi^{-1}(\cb{u}^d)) }{\prod_{i=1}^d \mathcal{N}(\Phi^{-1}(\cb{u}^i);0,1)} =  \frac{\tilde{\vp}_{T_1}\big(\bar{\Phi}^{-1}(\vu)\big)}{\tilde{\vp}_{T_k}\big(\bar{\Phi}^{-1}(\vu)\big)}=\frac{\mathbb{P}(t=T_1|\rvz=\bar{\Phi}^{-1}(\vu))}{\mathbb{P}(t=T_k|\rvz=\bar{\Phi}^{-1}(\vu))},
    \end{equation*}
    where $\bar{\Phi}^{-1}(\vu)$ is the standard Gaussian CDF applied dimensions-wise to $\vu$.
\end{proof}

\subsection{Proposition \ref{prop:cdc_denoiser}}
\label{apdx:cdc_denoiser}

\paragraph{Proposition \ref{prop:cdc_denoiser}.}\textit{Consider the classification task as defined in Eq. (\ref{eq:cdc}),  and let $c_s$ be the copula of  $\rvu_s=\bar{\Phi}(\rvz_s)$ with $\bar{\Phi}$ the standard Gaussian CDF applied element-wise, and $\rvz_s$ defined by the process of Eq. (\ref{eq:OU}) at time $s\in\{T_1.\ldots,T_k\}$. Then, the true copula score $\nabla_{\vu} \log c_s(\vu)$ is given by:
     \begin{equation*}
         \nabla_{\vu} \log c_s(\vu) = w(\vu) \odot \Big[\nabla_\vz \log(\mathbb{P}(t=T_s|\rvz=\vz))- \nabla_\vz\log(\mathbb{P}(t=T_k|\rvz=\vz) )\Big]_{\vz=\bar{\Phi}^{-1}(\vu)}
     \end{equation*}
     where $\odot$ is element-wise multiplication and $
         w(\vu):=\big(\frac{1}{\phi\big({\Phi}^{-1}(\cb{u}^1)\big)},\ldots,\frac{1}{\phi\big({\Phi}^{-1}(\cb{u}^d)\big)}\big)$.}

     \begin{proof}
         We begin by showing a link between the class probabilities and the gradient of the joint density, which has been identified in existing works \citep{Yadinclass,wu2024your}, and then utilise this link within our diffusion copula context to derive an expression for the copula score.
         To start, consider the classification task as defined in Equation (\ref{eq:cdc}) with class probabilities $\big(\mathbb{P}(t=T_1|\rvz=\vz)\big),\ldots,\mathbb{P}(t=T_k|\rvz=\vz)\big)\big)$, and mixture of likelihood $\tilde{\vp}_{+}(\rvz)$ with weights $\pi_{s}$ $s\in\{1,\ldots,k\}$ as in the proof of Proposition \ref{prop:cdc_density}. Rewrite the following two conditional probabilities:
         \begin{equation}
         \label{eq:proof_cdc_score_1}
            \tilde{\vp}_{s}(\rvz=\vz|t=T_s)=\frac{\mathbb{P}(t=T_s|\rvz=\vz)\cdot \tilde{\vp}_{+}(\vz)}{\pi_{s}}
         \end{equation}
         and
         \begin{equation}
         \label{eq:proof_cdc_score_2}
            \tilde{\vp}_{T_k}(\rvz=\vz|t=T_k)=\frac{\mathbb{P}(t=T_k|\rvz=\vz)\cdot \tilde{\vp}_{+}(\vz)}{\pi_{T_k}}.
         \end{equation}
         Next, replace $ \tilde{\vp}_{+}(\vz)$ in Equation (\ref{eq:proof_cdc_score_1}) with the appropriate quantity from Equation (\ref{eq:proof_cdc_score_2}):
         \begin{equation*}
             \tilde{\vp}_{s}(\rvz=\vz|t=T_s)=\frac{\pi_{T_k}}{\pi_s} \cdot \frac{\mathbb{P}(t=T_s|\rvz=\vz)}{\mathbb{P}(t=T_k|\rvz=\vz)} \cdot \tilde{\vp}_{T_k}(\rvz=\vz|t=T_k)
         \end{equation*}
         where the class probabilities cancel by assuming equal prior odds. We take the log and then the gradient with respect to $\vz$ on both sides (where densities are positive and differentiable), yielding:
         \begin{align}
             &\nabla_\vz \,\log(\tilde{\vp}_{s}(\rvz=\vz|t=T_s))=\\         \label{eq:proof_cdc_score_3}
             &\nabla_\vz \,\log (\mathbb{P}(t=T_s|\rvz=\vz)) -\nabla_\vz \,\log(\mathbb{P}(t=T_k|\rvz=\vz)) + \nabla_\vz \,\log(\tilde{\vp}_{T_k}(\rvz=\vz|t=T_k))
         \end{align}
         Here, we note that a copula decomposition following Sklar's theorem leads to the following expression of the log scores of a copula $c$ and joint density $\vp$ with marginals $p^i, i\in\{1,\ldots,d\}$:
         \begin{equation*}
             \nabla_\vz \log c(P^1(z^1),\ldots,P^{d}(z^d)) = \nabla_\vz \log \vp (\vz) - \nabla_\vz \sum_{i=1}^d \log p^i(z^i).
         \end{equation*}
         In our case, for the joint density on the Gaussian scale $\tilde{\vp}_s(\rvz=\vz|t=T_s)$, its marginal densities are standard Gaussian by construction as a consequence of the process in Equation (\ref{eq:OU}). Thus, their marginal scores are $-\cg{z}_i$, resulting in the following expression for the score of the log copula $c_s$ corresponding to the inter-variable dependence at time $s$:
         \begin{equation}
         \label{eq:proof_cdc_score_4}
             \nabla_\vz \log c_s(\Phi(\cg{z}^1),\ldots,\Phi(\cg{z}^d)) = \nabla_\vz \log \tilde{\vp}_s(\rvz=\vz|t=T_s)  + \vz.
         \end{equation}
         We can now replace the score of the joint log density in Equation (\ref{eq:proof_cdc_score_4}) with the expression in Equation (\ref{eq:proof_cdc_score_3}), obtaining:
         \begin{equation*}
             \nabla_\vz \log c_s(\Phi(\cg{z}^1),\ldots,\Phi(\cg{z}^d)) = \nabla_\vz \,\log (\mathbb{P}(t=T_s|\rvz=\vz)) -\nabla_\vz \,\log(\mathbb{P}(t=T_k|\rvz=\vz))
         \end{equation*}
         where $\nabla_\vz \,\log(\tilde{\vp}_{T_k}(\rvz=\vz|t=T_k))$ cancels out with $\vz$ as $\tilde{\vp}_{T_k}$ is a standard Gaussian by construction. Finally, we use a change of variable from $\vz$ to $\vu=\bar{\Phi}(\vz)$ (the standard Gaussian CDF applied dimensions-wise to $\vu$), obtaining the final expression:
         \begin{equation*}
         \nabla_{\vu} \log c_s(\vu) = w(\vu) \odot \Big[\nabla_\vz \log(\mathbb{P}(t=T_s|\rvz=\vz))- \nabla_\vz\log(\mathbb{P}(t=T_k|\rvz=\vz) )\Big]_{\vz=\bar{\Phi}^{-1}(\vu)}
     \end{equation*}
     where $\odot$ is element-wise multiplication and $w(\vu)$ is defined as:
     \begin{equation*}
         w(\vu):=\big(\frac{1}{\phi\big({\Phi}^{-1}(\cb{u}_1)\big)},\ldots,\frac{1}{\phi\big({\Phi}^{-1}(\cb{u}_d)\big)}\big).
     \end{equation*}
     \end{proof}

\subsection{Theorem \ref{thm:cdc_loss}}
\label{apdx:cdc_loss}

\paragraph{Theorem \ref{thm:cdc_loss}.}\textit{ For the classification task defined in Equation (\ref{eq:cdc}), and for a sufficiently broad family of multinomial models $c_{dc}(\vz;\theta):\mathbb{R}^d\mapsto\{h\in[0,1]^k|\sum_{t=1}^k h^t=1\}$ indexed by parameters $\theta$, assume there exists a set of parameters $\Theta^*=\{\theta: c_{dc}(\vz;\theta)=\big(\mathbb{P}(t=T_1|\rvz=\vz),\ldots,\mathbb{P}(t=T_k|\rvz=\vz)\big)\}$ for which the model is equal to the vector of class probabilities for all $\vz\in \mathbb{R}^d$. Then, for any weight $\alpha>0$,  $\{\theta:\theta=\argmin_\theta\mathcal{L}_{c_{dc}}(\theta)\}=\Theta^{*}$ for the loss:
\begin{align*}
    \label{eq:cdc_loss_apdx}
        \mathcal{L}_{c_{dc}}(\theta) = & \alpha \cdot \sum_{s=1}^k\mathbb{E}_{\vz\sim\tilde{\vp}_{T_s}}\big[-\log c_{dc}^{(s)}(\vz;\theta)\big] \\&+
        \sum_{s=1}^k \mathbb{E}_{\substack{ \vz_{T_1}\sim \tilde{\vp}_{T_1}\\\epsilon\sim \mathcal{N}(\mathbf{0},\mathbf{I})}}\big[|| \hat{\epsilon}_s(c_{dc}(e^{- T_s}\cdot\vz_{T_1}+\sqrt{1-e^{-2\cdot T_s}}\cdot\epsilon;\theta)) - \epsilon||^2\big]
\end{align*}
where $c_{dc}^{(s)}(\vz;\theta)$ denotes the $s$-th component of the vector $c_{dc}(\vz;\theta)$, and with
\begin{equation*}
    \hat{\epsilon}_s(c_{dc}(\vz;\theta)) :=\sqrt{1-e^{-2\cdot T_s}} \cdot  ( \nabla_{\vz} \log c_{dc}^{(k)}(\vz;\theta) -\nabla_{\vz} \log c_{dc}^{(s)}(\vz;\theta) +\vz).
\end{equation*}
}

\begin{proof}
We first describe the set of minimisers for each component of the loss and draw our conclusion afterwards.

\underline{The first term} corresponds to the cross-entropy loss with prior probabilities per class:
\begin{equation*}
\mathcal{L}_{CE}(\theta):=\sum_{s=1}^k\mathbb{E}_{\vz\sim\tilde{\vp}_{T_s}}\big[-\log c_{dc}^{(s)}(\vz;\theta)\big].
\end{equation*}
Since cross-entropy is a strictly proper scoring rule \citep{dawid2016minimum}, under the assumption that the model class contains the true distribution, minimising the loss with respect to the model parameters is equivalent to recovering the true data-generating process. Thus, we have $\theta^*_{CE}:=\argmin_{\theta} \mathcal{L}_{CE}(\theta) $ if and only if:
\begin{equation*}
c_{dc}(\vz;\theta^*_{CE})=\big(\mathbb{P}(t=T_1|\rvz=\vz),\ldots,\mathbb{P}(t=T_k|\rvz=\vz)\big)
\end{equation*} 
for all $\vz$ over the support $\mathbb{R}^d$. Denote the set of all such minimisers as $\Theta_{CE}$.

\underline{The second term} is the mean squared error between the predicted noise and the true added noise to a sample:
\begin{equation*}
    \mathcal{L}_{MSE}(\theta):=\sum_{s=1}^k \mathbb{E}_{\substack{ \vz_{T_1}\sim \tilde{\vp}_{T_1}\\\epsilon\sim \mathcal{N}(\mathbf{0},\mathbf{I})}}\big[|| \hat{\epsilon}_s(c_{dc}(e^{- T_s}\cdot\vz_{T_1}+\sqrt{1-e^{-2\cdot T_s}}\cdot\epsilon;\theta)) - \epsilon||^2\big].
\end{equation*}
This expression is minimised when, for all $s \in \{T_1,\ldots,T_k\}$, and all $\vz\in\mathbb{R}^d$ (except possibly regions of 0 mass):
\begin{equation*}
    \hat{\epsilon}_s(c_{dc}(\vz;\theta))=\mathbb{E}[\epsilon|\rvz_s=\vz],
\end{equation*}
for $\epsilon$ and $\rvz_s$ such that $ \rvz_s=e^{- T_s}\cdot\vz_{T_1}+\sqrt{1-e^{-2\cdot T_s}}\cdot\epsilon$.
Then, by Tweedie's identity \citep{miyasawa1961empirical}, we get
\begin{equation*}
    -\frac{1}{\sqrt{1-e^{-2\cdot T_s}}}\hat{\epsilon}_s(c_{dc}(\vz;\theta)) = \nabla_\vz \log \tilde{\vp}_{T_s}(\vz).
\end{equation*}
By using the definition of $\hat{\epsilon}_s(c_{dc}(\vz;\theta))$ and Equation (\ref{eq:proof_cdc_score_3}) from the proof of Proposition \ref{prop:cdc_denoiser}, we obtain:
\begin{align*}
    &\nabla_{\vz} \log c_{dc}^{(s)}(\vz;\theta) -\nabla_{\vz} \log c_{dc}^{(k)}(\vz;\theta) -\vz \\= & \nabla_\vz \,\log (\mathbb{P}(t=T_s|\rvz=\vz)) -\nabla_\vz \,\log(\mathbb{P}(t=T_k|\rvz=\vz)) + \nabla_\vz \,\log(\tilde{\vp}_{T_k}(\rvz=\vz|t=T_k)).
\end{align*}
Since $\tilde{\vp}_{T_k}$ is a standard Gaussian in $d$ dimensions, the gradient of its log density is $-\vz$, leading to
\begin{align}
    \nabla_{\vz} \log\frac{ c_{dc}^{(s)}(\vz;\theta)}{ c_{dc}^{(k)}(\vz;\theta)}  &= \nabla_\vz \,\log \frac{\mathbb{P}(t=T_s|\rvz=\vz) }{\mathbb{P}(t=T_k|\rvz=\vz)} \\
    \Leftrightarrow \frac{ c_{dc}^{(s)}(\vz;\theta)}{ c_{dc}^{(k)}(\vz;\theta)}  &= \frac{\mathbb{P}(t=T_s|\rvz=\vz) }{\mathbb{P}(t=T_k|\rvz=\vz)} \cdot C_s\\
    \label{eq:cdc_ratio_eq_apdx}
    \Leftrightarrow c_{dc}^{(s)}(\vz;\theta) &= \frac{\mathbb{P}(t=T_s|\rvz=\vz) }{\mathbb{P}(t=T_k|\rvz=\vz)} \cdot C_s \cdot  c_{dc}^{(k)}(\vz;\theta),
\end{align}
where $C_s\in\mathbb{R}$ is a constant, possibly different for each class $s$. Note that $C_k=1$ by necessity. Furthermore, since $c_{dc}(\vz;\theta)$ is normalised, we have
\begin{align*}
    1&=\sum_{s=1}^kc_{dc}^{(s)}(\vz;\theta) \\
    \Leftrightarrow  1&= c_{dc}^{(k)}(\vz;\theta)+\sum_{s\neq k}c_{dc}^{(s)}(\vz;\theta) \\
     \overset{(\ref{eq:cdc_ratio_eq_apdx})}{\Leftrightarrow} 1&= c_{dc}^{(k)}(\vz;\theta)+ \sum_{s\neq k}\big\{\frac{\mathbb{P}(t=T_s|\rvz=\vz) }{\mathbb{P}(t=T_k|\rvz=\vz)} \cdot C_s \cdot c_{dc}^{(k)}(\vz;\theta)\big\}\\
    \Leftrightarrow  c_{dc}^{(k)}(\vz;\theta) &=\frac{1}{1+ \sum_{s\neq k}\big\{\frac{\mathbb{P}(t=T_s|\rvz=\vz) }{\mathbb{P}(t=T_k|\rvz=\vz)} \cdot C_s\big\}}  .
\end{align*}
By substituting the last equality into Equation (\ref{eq:cdc_ratio_eq_apdx}), we obtain for any $s\in\{1,\ldots,k\}$:
\begin{align*}
    c_{dc}^{(s)}(\vz;\theta) &=   \frac{\mathbb{P}(t=T_s|\rvz=\vz) \cdot C_s}{\mathbb{P}(t=T_k|\rvz=\vz)+ \sum_{i\neq k}\big\{\mathbb{P}(t=T_i|\rvz=\vz) \cdot C_i\big\}}\\
    \overset{C_k=1}{\Leftrightarrow} c_{dc}^{(s)}(\vz;\theta) &=   \frac{\mathbb{P}(t=T_s|\rvz=\vz) \cdot C_s}{ \sum_{i=1}^k\big\{\mathbb{P}(t=T_i|\rvz=\vz) \cdot C_i\big\}}.
\end{align*}
Thus the set of minimisers for $\mathcal{L}_{MSE}(\theta)$ can be expressed as \begin{equation*}
    \{\theta\,:\,c_{dc}^{(s)}(\vz;\theta) =   \frac{\mathbb{P}(t=T_s|\rvz=\vz) \cdot C_s}{ \sum_{i=1}^k\big\{\mathbb{P}(t=T_i|\rvz=\vz) \cdot C_i\big\}}\}:=\Theta_{MSE}.
\end{equation*}

\underline{To conclude}, note that \begin{itemize}
    \item (a) the set $\Theta_{CE}$ is a subset of $\Theta_{MSE}$, as setting $C_i=1 \,\forall i$ in $\Theta_{MSE}$ recovers $\Theta_{CE}$,
    \item (b) parameters $\theta^*$ for which $c_{dc}(\vz;\theta^*)=\big(\mathbb{P}(t=T_1|\rvz=\vz),\ldots,\mathbb{P}(t=T_k|\rvz=\vz)\big)$ is the definition of both $\Theta^*$ and $\Theta_{CE}$.
\end{itemize}

By (a), and since $\alpha>0$, only solution of $\Theta_{CE}$ can minimise $\mathcal{L}_{c_{dc}}(\theta)$. By (b), this is equal to $\Theta^*$.
    
\end{proof}

\subsection{Proposition \ref{prop:cop_OU_dep}, Proposition \ref{prop:ou_dep_pdf}, Proposition \ref{prop:sampling_ou_dep}}
\label{apdx:cop_OU_dep}

Here, we show the analogous properties from above for the case of the correlated OU process defined in Eq. \ref{eq:OU_dep}.
\begin{proposition}[Convergence to Gaussian copula]
\label{prop:cop_OU_dep}
For copula samples $\rvz_0$ on the Gaussian scale, consider $\ru_t^i=\Phi(\rz_t^i)$ with $\rz_t^i$ as defined through the process in Equation (\ref{eq:OU_dep}). Then, this process maintains uniform marginal distributions at all times:
\begin{equation*}
    \ru_t^i\sim U(0,1), \quad \forall t\geq0, i\in\{1,\ldots,d\}.
\end{equation*}
Moreover, denote by $c_t$ the copula of $\rvz_t$ under Equation (\ref{eq:OU_dep}). Then, $c_t$ converges to a Gaussian copula with correlation matrix $\Sigma$ as $t\to \infty$ in the Kullback-Leibler divergence with rate $\mathcal{O}(e^{-2t})$.
\end{proposition}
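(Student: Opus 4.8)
The plan is to follow the structure of the proof of Proposition \ref{prop:cop_OU_indep}, treating the marginal-preservation claim and the convergence claim separately, and to reduce the correlated process back to the uncorrelated one already analysed by means of a single linear change of variables. For the marginals, I would first solve the linear SDE (\ref{eq:OU_dep}) in closed form. Since the drift is $-\rvz_t$ and the diffusion matrix is the constant $\sqrt{2}\,\Sigma^{1/2}$, the variation-of-constants formula gives
\[
\rvz_t = e^{-t}\rvz_0 + \sqrt{1-e^{-2t}}\;\eta, \qquad \eta\sim\mathcal{N}(\mathbf{0},\Sigma),
\]
with $\eta$ independent of $\rvz_0$, because the stochastic integral $\int_0^t e^{-(t-s)}\sqrt{2}\,\Sigma^{1/2}\,d\mathcal{B}_s$ has covariance $\int_0^t 2e^{-2(t-s)}\Sigma\,ds=(1-e^{-2t})\Sigma$. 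Restricting to coordinate $i$, $\rz_0^i$ is standard Gaussian because $\rvz_0$ is a copula sample on the Gaussian scale, and $\eta^i\sim\mathcal{N}(0,\Sigma_{(i,i)})=\mathcal{N}(0,1)$ precisely by the unit-diagonal assumption $\Sigma_{(i,i)}=1$. Hence $\rz_t^i$ is Gaussian with mean $0$ and variance $e^{-2t}+(1-e^{-2t})=1$, so $\ru_t^i=\Phi(\rz_t^i)$ is $U(0,1)$ for every $t\geq 0$ and every $i$. This is exactly where the hypothesis on the diagonal of $\Sigma$ enters, and I would emphasise that the off-diagonal entries play no role here.

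For the convergence, the key step is the substitution $\rvw_t:=\Sigma^{-1/2}\rvz_t$ (well-defined as $\Sigma$ is a positive-definite correlation matrix), which turns (\ref{eq:OU_dep}) into
\[
d\rvw_t = -\rvw_t\,dt + \sqrt{2}\,d\mathcal{B}_t,
\]
i.e. the uncorrelated process (\ref{eq:OU}), with initial law the pushforward of $\rvz_0$ under $\Sigma^{-1/2}$. Proposition \ref{prop:cop_OU_indep} (equivalently, the convergence estimate it invokes) then yields $\mathrm{law}(\rvw_t)\to\mathcal{N}(\mathbf{0},\mathbf{I})$ in KL at rate $\mathcal{O}(e^{-2t})$. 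Because the KL divergence is invariant under the invertible linear map $\Sigma^{1/2}$, this transfers to $\mathrm{law}(\rvz_t)\to\mathcal{N}(\mathbf{0},\Sigma)$ at the same rate; and since $\mathcal{N}(\mathbf{0},\Sigma)$ has unit-diagonal covariance, its copula is by definition the Gaussian copula with correlation matrix $\Sigma$. A final appeal to KL invariance under the dimension-wise map $\bar\Phi$, exactly as in Proposition \ref{prop:cop_OU_indep}, moves the statement from the Gaussian scale to the copula scale, giving $c_t\to$ the Gaussian copula with rate $\mathcal{O}(e^{-2t})$.

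The main obstacle is ensuring that the convergence estimate genuinely applies to $\rvw_t$: the result used in Proposition \ref{prop:cop_OU_indep} requires a regularity/tail hypothesis on the initial distribution (the \texttt{(DATA)} assumption of \cite{brevsar2024non}), and I must verify that this hypothesis is preserved when $\rvz_0$ is pushed forward through the fixed invertible linear map $\Sigma^{-1/2}$. Since such conditions concern light tails, finite moments, and the location of the furthest mode of the initial law — all stable under a fixed linear bijection — the assumption should carry over, but this is the one point requiring actual checking rather than mechanical transcription. Everything else in the argument is a rewriting of the uncorrelated proof through the two changes of variable $\Sigma^{\pm 1/2}$ and $\bar\Phi$.
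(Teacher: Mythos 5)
Your proof is correct, and the marginal-uniformity half coincides with the paper's own argument: both solve the SDE as $\rvz_t = e^{-t}\rvz_0 + \sqrt{1-e^{-2t}}\,\eta$ with $\eta\sim\mathcal{N}(\mathbf{0},\Sigma)$ and observe that the unit diagonal of $\Sigma$ makes each coordinate of $\rvz_t$ standard Gaussian (you are somewhat more explicit, actually computing the covariance $\int_0^t 2e^{-2(t-s)}\Sigma\,ds=(1-e^{-2t})\Sigma$ of the stochastic integral). For the convergence claim your route is genuinely different: the paper simply asserts that ``similar existing results'' establish KL convergence of the correlated OU process to $\mathcal{N}(\mathbf{0},\Sigma)$ at rate $\mathcal{O}(e^{-2t})$ and then passes to the copula scale by invariance of KL under the dimension-wise reparametrisation $\bar{\Phi}$, whereas you reduce the correlated process to the uncorrelated one already treated in Proposition~\ref{prop:cop_OU_indep} via the substitution $\rvw_t=\Sigma^{-1/2}\rvz_t$ and transfer the rate back using invariance of KL under the fixed linear bijection $\Sigma^{1/2}$. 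Your reduction buys self-containedness: it derives the correlated rate from the single convergence estimate the paper already invokes rather than appealing to a second, unspecified result, at the cost of one genuine verification that the paper never has to confront --- namely that the initial-data hypothesis of \cite{brevsar2024non} is preserved under the pushforward by $\Sigma^{-1/2}$ --- and you correctly flag this as the only non-mechanical step. Both arguments then conclude identically by identifying the copula of $\mathcal{N}(\mathbf{0},\Sigma)$ as the Gaussian copula with correlation matrix $\Sigma$ and invoking KL invariance under $\bar{\Phi}$.
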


The proof is similar to that of Proposition \ref{prop:cop_OU_indep}, and follows the same steps.
\begin{proof}
    To show uniformity of $\rvu_t$ at all times, decompose $\rvz$ as follows:
    \begin{equation*}
    \rvz_t=e^{-t}\cdot\rvz_0+\sqrt{1-e^{-2t}}\cdot \epsilon, \quad \epsilon\sim\mathcal{N}(\mathbf{0}, \Sigma).
\end{equation*}
Since the distribution of a single dimension $\cg{z}_t^i$ does not depend on the correlation matrix $\Sigma$, we can identify it as a standard Gaussian, meaning $\rvu_t$ is marginally uniform.

For the convergence rate, similar existing results establish the limiting distribution of Equation (\ref{eq:OU_dep}) to be a correlated Gaussian $\mathcal{N}(\mathbf{0}, \Sigma)$ with rate $\mathcal{O}(Ce^{-2t})$. The same reasoning as in the previous proof applies, meaning we can derive the appropriate rate in KL divergence due to reparametrisation of $\cb{u}_t^i=\Phi(\cg{z}_t^i)$.
\end{proof}

\begin{proposition}[$c_{dc}$ copula density evaluation under Eq. (\ref{eq:OU_dep})]
\label{prop:ou_dep_pdf}
    For the classification task as defined in Eq. (\ref{eq:cdc}) following the diffusion process of Eq. (\ref{eq:OU_dep}) with correlation $\Sigma$, the true copula density $c(\vu)$ at a given value $\vu\in[0,1]^d$ is equal to the following probability ratio:
    \begin{equation*}
        c(\vu)=\frac{\mathbb{P}(t=T_1|\rvz=\bar{\Phi}^{-1}(\vu)\big)}{\mathbb{P}(t=T_k|\rvz=\bar{\Phi}^{-1}(\vu)\big)} \cdot\frac{\mathcal{N}(\bar{\Phi}^{-1}(\vu),\Sigma)}{\mathcal{N}(\bar{\Phi}^{-1}(\vu),\mathbf{I}_d)},
    \end{equation*}
    where $\bar{\Phi}^{-1}(\vu)$ is the inverse standard Gaussian CDF applied dimensions-wise to $\vu$.
\end{proposition}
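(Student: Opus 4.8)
The plan is to follow the argument of Proposition~\ref{prop:cdc_density} almost verbatim, changing only the single step that identifies the stationary density: under Eq.~(\ref{eq:OU_dep}) the process converges to the correlated Gaussian $\mathcal{N}(\mathbf{0},\Sigma)$ rather than to the independent one. First I would invoke the ratio-copula representation together with the diffeomorphism-invariance of density ratios (as in the proof of Proposition~\ref{prop:cdc_density}) to write the copula density of $\rvz_{T_1}$ on the Gaussian scale as the joint Gaussian-scale density over the product of its marginals,
\begin{equation*}
c(\vu)=\frac{\tilde{\vp}_{T_1}\big(\bar{\Phi}^{-1}(\vu)\big)}{\prod_{i=1}^d \mathcal{N}\big(\Phi^{-1}(\cb{u}^i);0,1\big)}.
\end{equation*}
By Proposition~\ref{prop:cop_OU_dep} the marginals of the process remain standard Gaussian at every time, so the denominator equals exactly $\mathcal{N}\big(\bar{\Phi}^{-1}(\vu);\mathbf{0},\mathbf{I}_d\big)$; this is the only place the independent reference measure enters.

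Next I would reuse the Bayes-rule manipulation of Proposition~\ref{prop:cdc_density}. With equal class priors $\pi_s$, the posterior odds between the two time-classes equal the likelihood ratio, giving
\begin{equation*}
\frac{\tilde{\vp}_{T_1}(\vz)}{\tilde{\vp}_{T_k}(\vz)}=\frac{\mathbb{P}(t=T_1|\rvz=\vz)}{\mathbb{P}(t=T_k|\rvz=\vz)},
\end{equation*}
so that $\tilde{\vp}_{T_1}(\vz)=\frac{\mathbb{P}(t=T_1|\rvz=\vz)}{\mathbb{P}(t=T_k|\rvz=\vz)}\,\tilde{\vp}_{T_k}(\vz)$. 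The crucial difference from the independent case is the evaluation of the stationary density $\tilde{\vp}_{T_k}$: since $T_k=\infty$ and the process (\ref{eq:OU_dep}) has limiting law $\mathcal{N}(\mathbf{0},\Sigma)$ by Proposition~\ref{prop:cop_OU_dep}, I would substitute $\tilde{\vp}_{T_k}(\vz)=\mathcal{N}(\vz;\mathbf{0},\Sigma)$.

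Finally I would combine the two displays, evaluated at $\vz=\bar{\Phi}^{-1}(\vu)$:
\begin{equation*}
c(\vu)=\frac{\tilde{\vp}_{T_1}\big(\bar{\Phi}^{-1}(\vu)\big)}{\mathcal{N}\big(\bar{\Phi}^{-1}(\vu);\mathbf{0},\mathbf{I}_d\big)}
=\frac{\mathbb{P}(t=T_1|\rvz=\bar{\Phi}^{-1}(\vu))}{\mathbb{P}(t=T_k|\rvz=\bar{\Phi}^{-1}(\vu))}\cdot\frac{\mathcal{N}\big(\bar{\Phi}^{-1}(\vu);\mathbf{0},\Sigma\big)}{\mathcal{N}\big(\bar{\Phi}^{-1}(\vu);\mathbf{0},\mathbf{I}_d\big)},
\end{equation*}
which is the claimed identity.

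The argument is essentially bookkeeping, so I do not anticipate a genuine obstacle; the one point demanding care — and the entire reason the extra Gaussian factor appears — is the mismatch between two reference measures. The denominator of the copula formula is the \emph{product} of the (standard Gaussian) marginals, i.e.\ the independent Gaussian $\mathcal{N}(\mathbf{0},\mathbf{I}_d)$, whereas the stationary density driving the classifier ratio is the \emph{correlated} Gaussian $\mathcal{N}(\mathbf{0},\Sigma)$. In the setting of Proposition~\ref{prop:cdc_density} these coincide and the factor collapses to $1$; here they differ, and the surviving ratio $\mathcal{N}(\cdot;\mathbf{0},\Sigma)/\mathcal{N}(\cdot;\mathbf{0},\mathbf{I}_d)$ is precisely the Gaussian copula density with correlation $\Sigma$, correcting the classifier ratio for the $\Sigma$-dependence injected into the endpoint of the forward process. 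I would also flag, as a sanity check, that setting $\Sigma=\mathbf{I}_d$ recovers Proposition~\ref{prop:cdc_density} exactly.
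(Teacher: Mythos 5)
Your proposal is correct and follows essentially the same route as the paper: identify the optimal classifier ratio with $\tilde{\vp}_{T_1}\big(\bar{\Phi}^{-1}(\vu)\big)/\mathcal{N}\big(\bar{\Phi}^{-1}(\vu);\mathbf{0},\Sigma\big)$ via the Bayes-rule argument of Proposition~\ref{prop:cdc_density}, then multiply by the correction factor $\mathcal{N}\big(\cdot;\mathbf{0},\Sigma\big)/\mathcal{N}\big(\cdot;\mathbf{0},\mathbf{I}_d\big)$ to recover the copula density whose reference measure is the product of standard Gaussian marginals. Your added remarks --- that the correction factor is itself the Gaussian copula density with correlation $\Sigma$, and that $\Sigma=\mathbf{I}_d$ collapses back to Proposition~\ref{prop:cdc_density} --- are accurate and in fact make the argument more explicit than the paper's own terse version.
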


\begin{proof}
    The proof is identical to the steps in Proposition \ref{prop:cdc_density} with the exception that the terminal density is $\mathcal{N}(\bar{\Phi}^{-1}(\vu),\Sigma)$. Thus the optimal classifier recovers the ratio:
    \begin{equation*}
         \frac{\mathbb{P}(t=T_1|\rvz=\bar{\Phi}^{-1}(\vu)\big)}{\mathbb{P}(t=T_k|\rvz=\bar{\Phi}^{-1}(\vu)\big)}=\frac{\tilde{p}_{T_1}(\bar{\Phi}^{-1}(\vu))}{\mathcal{N}(\bar{\Phi}^{-1}(\vu),\Sigma)}.
    \end{equation*}
    We obtain the copula density by multiplying the ratio by a correction term $$\frac{\tilde{p}_{T_1}(\bar{\Phi}^{-1}(\vu))}{\mathcal{N}(\bar{\Phi}^{-1}(\vu),\Sigma)} \cdot \frac{\mathcal{N}(\bar{\Phi}^{-1}(\vu),\Sigma)}{\mathcal{N}(\bar{\Phi}^{-1}(\vu),\mathbf{I}_d)}= c(\vu),$$
    which concludes the proof.
\end{proof}

\begin{proposition}[$c_{dc}$ copula score-based sampling under Eq. (\ref{eq:OU_dep})]
\label{prop:sampling_ou_dep}
    Consider the classification task as defined in Eq. (\ref{eq:cdc}) for the process of Eq. (\ref{eq:OU_dep}),  and let $c_s$ be the copula of  $\rvu_s=\bar{\Phi}(\rvz_s)$ with $\bar{\Phi}$ the standard Gaussian CDF applied element-wise, and $\rvz_s$ defined by the process of Eq. (\ref{eq:OU_dep}) at time $s\in\{T_1.\ldots,T_k\}$. Then, the true copula score $\nabla_{\vu} \log c_s(\vu)$ is given by:
     \begin{align*}
         \nabla_{\vu} \log c_s(\vu) = w(\vu) \odot \Big[&\nabla_\vz \log(\mathbb{P}(t=T_s|\rvz=\vz))- \nabla_\vz\log(\mathbb{P}(t=T_k|\rvz=\vz) )+ (\mathbf{I}_d-\Sigma^{-1})\cdot\vz \Big]_{\vz=\bar{\Phi}^{-1}(\vu)}
     \end{align*}
     where $\odot$ is element-wise multiplication and $
         w(\vu):=\big(\frac{1}{\phi\big({\Phi}^{-1}(\cb{u}^1)\big)},\ldots,\frac{1}{\phi\big({\Phi}^{-1}(\cb{u}^d)\big)}\big)$.
\end{proposition}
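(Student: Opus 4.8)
The plan is to follow the proof of Proposition \ref{prop:cdc_denoiser} line for line, isolating the single place where the correlation matrix $\Sigma$ changes the computation. First I would reuse the Bayes'-rule step from that proof verbatim: writing the class posteriors of the task in Equation (\ref{eq:cdc}) in terms of the mixture density $\tilde{\vp}_+$ under equal priors, dividing, and then taking logs and gradients, yields exactly Equation (\ref{eq:proof_cdc_score_3}),
\begin{align*}
\nabla_\vz \log \tilde{\vp}_s(\rvz=\vz|t=T_s) = {}& \nabla_\vz \log \mathbb{P}(t=T_s|\rvz=\vz) - \nabla_\vz \log \mathbb{P}(t=T_k|\rvz=\vz) \\
& + \nabla_\vz \log \tilde{\vp}_{T_k}(\rvz=\vz|t=T_k).
\end{align*}
This identity is agnostic to the forward process, since it relies only on the definition of the classification task, so it transfers unchanged to the correlated case.

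Next I would apply the Sklar score decomposition. The crucial observation is that, by Proposition \ref{prop:cop_OU_dep}, the correlated process of Equation (\ref{eq:OU_dep}) still preserves standard Gaussian marginals at every time $T_s$. Hence the per-dimension marginal log-scores are $-z^i$ exactly as in the uncorrelated case, so the copula-score identity
\begin{equation*}
\nabla_\vz \log c_s(\bar{\Phi}(\vz)) = \nabla_\vz \log \tilde{\vp}_s(\rvz=\vz|t=T_s) + \vz
\end{equation*}
carries over unchanged and contributes the same $+\vz$ as before.

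The one genuine difference enters through the terminal density. Under Equation (\ref{eq:OU_dep}) the limiting distribution is $\tilde{\vp}_{T_k}=\mathcal{N}(\mathbf{0},\Sigma)$ rather than the standard Gaussian, so its log-gradient is $-\Sigma^{-1}\vz$ in place of $-\vz$. Substituting this into the two displays above, the $+\vz$ from the Sklar step no longer fully cancels the terminal score but leaves the residual $(\mathbf{I}_d-\Sigma^{-1})\vz$, giving
\begin{equation*}
\nabla_\vz \log c_s(\bar{\Phi}(\vz)) = \nabla_\vz \log \mathbb{P}(t=T_s|\rvz=\vz) - \nabla_\vz \log \mathbb{P}(t=T_k|\rvz=\vz) + (\mathbf{I}_d-\Sigma^{-1})\vz.
\end{equation*}
Finally I would change variables to $\vu=\bar{\Phi}(\vz)$; its dimension-wise Jacobian has diagonal entries $\phi(\Phi^{-1}(u^i))$, whose reciprocals form the element-wise weight $w(\vu)$ and produce the stated expression.

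I do not anticipate a real obstacle, as this is a bookkeeping adaptation of Proposition \ref{prop:cdc_denoiser}. The only point demanding care is the claim that $\Sigma$ perturbs only the terminal term and leaves the Sklar decomposition intact; this reduces to checking that the marginals of $\tilde{\vp}_s$ stay standard Gaussian despite the off-diagonal diffusion, which is precisely the content of Proposition \ref{prop:cop_OU_dep}. It is also worth noting, as a sanity check, that setting $\Sigma=\mathbf{I}_d$ makes the residual vanish and recovers Proposition \ref{prop:cdc_denoiser}.
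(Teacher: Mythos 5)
Your proposal is correct and follows essentially the same route as the paper's proof: the identical Bayes'-rule and Sklar-decomposition steps from Proposition \ref{prop:cdc_denoiser}, with the sole modification that the terminal score is $-\Sigma^{-1}\vz$ rather than $-\vz$, leaving the residual $(\mathbf{I}_d-\Sigma^{-1})\vz$, followed by the same change of variables giving $w(\vu)$. You also correctly pinpoint the one fact that needs checking --- that the marginals of $\tilde{\vp}_s$ remain standard Gaussian under the correlated process --- which is exactly what the paper invokes.
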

The proof mirrors that of Proposition \ref{prop:cdc_denoiser}, with the exception that we need to account for the correlation matrix $\Sigma$ and the different ratio target, as in Proposition \ref{prop:ou_dep_pdf}. In fact, when $\Sigma=\mathbf{I}_d$, we exactly recover Proposition \ref{prop:cdc_denoiser}.
\begin{proof}
We define the mixture of likelihood $\tilde{\vp}_{+}(\rvz):=\sum_{s=1}^{k}\pi_s\cdot\tilde{\vp}(\vz)$ with a priori equal class weights $\pi_{s}$ $s\in\{1,\ldots,k\}$. We can again rewrite the following two conditional probabilities as:
         \begin{equation}
         \label{eq:proof2_cdc_score_1}
            \tilde{\vp}_{s}(\rvz=\vz|t=T_s)=\frac{\mathbb{P}(t=T_s|\rvz=\vz)\cdot \tilde{\vp}_{+}(\vz)}{\pi_{s}}
         \end{equation}
         and
         \begin{equation}
         \label{eq:proof2_cdc_score_2}
            \tilde{\vp}_{T_k}(\rvz=\vz|t=T_k)=\frac{\mathbb{P}(t=T_k|\rvz=\vz)\cdot \tilde{\vp}_{+}(\vz)}{\pi_{T_k}}.
         \end{equation}
         Next, replace $ \tilde{\vp}_{+}(\vz)$ in Equation (\ref{eq:proof2_cdc_score_1}) with the appropriate quantity from Equation (\ref{eq:proof2_cdc_score_2}):
         \begin{equation*}
             \tilde{\vp}_{s}(\rvz=\vz|t=T_s)=\frac{\pi_{T_k}}{\pi_s} \cdot \frac{\mathbb{P}(t=T_s|\rvz=\vz)}{\mathbb{P}(t=T_k|\rvz=\vz)} \cdot \tilde{\vp}_{T_k}(\rvz=\vz|t=T_k)
         \end{equation*}
         where the class probabilities cancel due to equal prior odds. We take the log and then the gradient with respect to $\vz$ on both sides (where densities are positive and differentiable), yielding:
         \begin{align}
             &\nabla_\vz \,\log(\tilde{\vp}_{s}(\rvz=\vz|t=T_s))=\\         \label{eq:proof2_cdc_score_3}
             &\nabla_\vz \,\log (\mathbb{P}(t=T_s|\rvz=\vz)) -\nabla_\vz \,\log(\mathbb{P}(t=T_k|\rvz=\vz)) + \nabla_\vz \,\log(\tilde{\vp}_{T_k}(\rvz=\vz|t=T_k))
         \end{align}

         Here, we note that a copula decomposition following Sklar's theorem leads to the following expression of the log scores of a copula $c$ and joint density $\vp$ with marginals $p^i, i\in\{1,\ldots,d\}$:
         \begin{equation*}
             \nabla_\vz \log c(P^1(z^1),\ldots,P^{d}(z^d)) = \nabla_\vz \log \vp (\vz) - \nabla_\vz \sum_{i=1}^d \log p^i(z^i).
         \end{equation*}
         In our case, for the joint density on the Gaussian scale $\tilde{\vp}_s(\rvz=\vz|t=T_s)$, its marginal densities are standard Gaussian by construction as a consequence of the process in Equation (\ref{eq:OU_dep}). Thus, their marginal scores are $-\cg{z}_i$, resulting in the following expression for the score of the log copula $c_s$ corresponding to the inter-variable dependence at time $s$:
         \begin{equation}
         \label{eq:proof2_cdc_score_4}
             \nabla_\vz \log c_s(\Phi(\cg{z}^1),\ldots,\Phi(\cg{z}^d)) = \nabla_\vz \log \tilde{\vp}_s(\rvz=\vz|t=T_s)  + \vz.
         \end{equation}
         We can now replace the score of the joint log density in Equation (\ref{eq:proof2_cdc_score_4}) with the expression in Equation (\ref{eq:proof2_cdc_score_3}), obtaining:
         \begin{align*}
            & \nabla_\vz \log c_s(\Phi(\cg{z}^1),\ldots,\Phi(\cg{z}^d)) =\\ 
             &\nabla_\vz \,\log (\mathbb{P}(t=T_s|\rvz=\vz)) -\nabla_\vz \,\log(\mathbb{P}(t=T_k|\rvz=\vz))+ (\mathbf{I}_d-\Sigma^{-1})\cdot\vz 
         \end{align*}
         Where we used that $\tilde{\vp}_{T_k}=\mathcal{N}(\mathbf{0},\Sigma)$ by construction, so its gradient is $-\Sigma^{-1}\cdot\vz$. Finally, we use a change of variable from $\vz$ to $\vu=\bar{\Phi}(\vz)$ (the standard Gaussian CDF applied dimensions-wise to $\vu$), obtaining the final expression:
         \begin{equation*}
         \nabla_{\vu} \log c_s(\vu) = w(\vu) \odot \Big[\nabla_\vz \log(\mathbb{P}(t=T_s|\rvz=\vz))- \nabla_\vz\log(\mathbb{P}(t=T_k|\rvz=\vz) )+ (\mathbf{I}_d-\Sigma^{-1})\cdot\vz \Big]_{\vz=\bar{\Phi}^{-1}(\vu)}
     \end{equation*}
     where $\odot$ is element-wise multiplication and $w(\vu)$ is defined as:
     \begin{equation*}
         w(\vu):=\big(\frac{1}{\phi\big({\Phi}^{-1}(\cb{u}_1)\big)},\ldots,\frac{1}{\phi\big({\Phi}^{-1}(\cb{u}_d)\big)}\big).
     \end{equation*}
\end{proof}

\subsection{Proposition \ref{lemma:reflection}}
\label{apdx:reflection}

\paragraph{Proposition \ref{lemma:reflection}.}\textit{ For any initial point $\rvu_0\in [0,1]^d$ with $\rvv_0\sim\mathcal{N}(\vzero,\bf{I}_d)$, the reflected point $\rvu_t:=\mathcal{R}\left(\rvu_0+t\cdot\rvv_0\right)$ converges in distribution to the independent copula on the hypercube as $t\to\infty$. Further, if $\rvu_0$ follows a copula distribution, then $\rvu_t$ also follows some copula distribution $c_t$ with uniform marginals for any
time $t>0$.}

\begin{proof} 
We first prove convergence to the independence copula and then show that $\rvu_t$ follows some copula $c_t$ at any time $t>0$.\\

\underline{Convergence to the independence copula.} As $\rvv_0$ is independent across dimensions, we initially only focus on the one-dimensional case of $\ru^i_t=\mathcal{R}_1(\ru_0^i+t\cdot\rv^i_0)$, for $i\in\{1,\ldots,d\}$, with the reflection operator in one dimension on the $u^i$ component only being 
    \begin{equation*}
        \mathcal{R}_1(x)=\begin{cases}
            x - \floor{x}, & \text{if $\floor{x}$ is even},\\
            1-(x - \floor{x}), & \text{if $\floor{x}$ is odd}
        \end{cases}, \quad x \in \mathbb{R}.
    \end{equation*}
    Notice that $\mathcal{R}_1$ is a 2-periodic function. For a bounded and continuous function $f$, we have that $g(x):=f(\mathcal{R}_1(x))$ is bounded, measurable and $2$-periodic. Then, with $\phi(z)$ as the one-dimensional standard Gaussian density, from definitions
    \begin{align*}
        \mathbb{E}[f\left(\ru^i_t\right)]&=  \mathbb{E}[f\left(\mathcal{R}_1(\ru^i_0+t\cdot\rv^i_0)\right)] = \mathbb{E}[g\left(\ru^i_0+t\cdot\rv^i_0\right)]\\
        &= \int_{-\infty}^{\infty} g(\ru^i_0+t\cdot z)\cdot \phi(z)dz. 
    \end{align*}
Next, we use Fejér’s Theorem:

\begin{theorem}[\cite{fejer1910lebesguessche}]
Let $g\colon\mathbb{R}\to\mathbb{R}$ be bounded, measurable, and $k$‑periodic,
and let $\varphi$ be Lebesgue integrable. Then for any sequence $\sigma_n\to+\infty$ and any real constants $\alpha_n$,
\[
\lim_{n\to\infty}
\int_{-\infty}^{\infty}g(\sigma_n x + \alpha_n)\cdot \varphi(x)\,dx
\;=\;\frac{1}{k}\cdot\left(\int_0^k g(y)\,dy\right)\cdot\left(\int_{-\infty}^{\infty}\varphi(x)\,dx\right).
\]
\end{theorem}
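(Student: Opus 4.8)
The plan is to prove the statement by a standard three-stage approximation argument, moving from indicators of intervals to step functions and finally to a general $L^1$ integrand, crucially exploiting that $g$ is uniformly bounded, say by $M:=\sup_y|g(y)|<\infty$. Write $\bar g:=\tfrac1k\int_0^k g(y)\,dy$ for the period-average of $g$; the goal is to establish $\int g(\sigma_n x+\alpha_n)\varphi(x)\,dx\to\bar g\int\varphi$.

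First I would settle the case $\varphi=\mathbf{1}_{[a,b]}$. The substitution $y=\sigma_n x+\alpha_n$ rewrites the integral as $\tfrac{1}{\sigma_n}\int_{\sigma_n a+\alpha_n}^{\sigma_n b+\alpha_n} g(y)\,dy$, i.e.\ an average of $g$ over a window of length $\sigma_n(b-a)\to\infty$. Because $g$ is $k$-periodic, its integral over any window of length $L$ starting at any point $c$ splits into a whole number of full-period contributions, each equal to $k\bar g$, plus one partial period; the partial period is bounded in absolute value by $Mk$, and the discrepancy between $L$ and the nearest multiple of $k$ contributes a further error at most $k|\bar g|\le Mk$. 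Hence $\int_c^{c+L} g(y)\,dy = L\bar g + R$ with $|R|\le 2Mk$, and this bound is uniform in the window position $c$ (hence uniform in $\alpha_n$). Dividing by $\sigma_n$ gives $(b-a)\bar g + O(Mk/\sigma_n)\to(b-a)\bar g=\bar g\int\mathbf{1}_{[a,b]}$. This periodicity-plus-substitution computation is the only place where the hypotheses on $g$ enter quantitatively.

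By linearity the conclusion then extends at once to step functions $\psi=\sum_j c_j\,\mathbf{1}_{[a_j,b_j]}$. To pass to an arbitrary $\varphi\in L^1$, I would invoke density of step functions in $L^1(\mathbb{R})$ together with the uniform bound $|g(\sigma_n\cdot+\alpha_n)|\le M$. Given $\varepsilon>0$, choose a step function $\psi$ with $\|\varphi-\psi\|_{L^1}<\varepsilon$ and split $\big|\int g(\sigma_n x+\alpha_n)\varphi\,dx-\bar g\int\varphi\big|$ into three pieces: the error $\big|\int g(\sigma_n x+\alpha_n)(\varphi-\psi)\,dx\big|\le M\|\varphi-\psi\|_{L^1}\le M\varepsilon$ from swapping $\varphi$ for $\psi$ inside the oscillatory integral; the step-function term $\big|\int g(\sigma_n x+\alpha_n)\psi\,dx-\bar g\int\psi\big|$, which tends to $0$ by the previous stage; and the term $|\bar g|\,\|\psi-\varphi\|_{L^1}\le M\varepsilon$, using $|\bar g|\le M$. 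Taking $\limsup_n$ bounds the whole expression by $2M\varepsilon$, and letting $\varepsilon\downarrow0$ completes the proof.

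The main obstacle is conceptual rather than computational: the density step genuinely relies on uniform $L^\infty$ control of the oscillating family $x\mapsto g(\sigma_n x+\alpha_n)$, which is precisely why boundedness of $g$ (rather than mere local integrability) is assumed. Everything else — the change of variables, the partial-period remainder estimate uniform in $\alpha_n$, and the extension by linearity — is routine once that bound is secured.
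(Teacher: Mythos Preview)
Your proof is correct and is the standard three-stage argument for this classical oscillation lemma: verify the claim on indicator functions via the substitution $y=\sigma_n x+\alpha_n$ and a uniform partial-period remainder estimate, extend by linearity to step functions, and then pass to general $\varphi\in L^1$ by density using the uniform bound $\|g(\sigma_n\cdot+\alpha_n)\|_{L^\infty}\le M$. The remainder bound $|R|\le 2Mk$ you give is sharp enough and, crucially, uniform in the window position, which is what makes the shifts $\alpha_n$ harmless.

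However, note that the paper does \emph{not} supply its own proof of this statement: it is quoted as a classical result of Fej\'er (1910) and invoked as a black box inside the proof of Proposition~\ref{lemma:reflection}. So there is no ``paper's approach'' to compare against; your write-up simply fills in a proof the paper chose to cite rather than reproduce.
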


In our case, since $g$ is 2-periodic, and $\phi$ is integrable, we have:
\begin{equation*}
    \lim_{t \to \infty} \int_{-\infty}^{\infty} g(\ru_0^i + t z)\, \cdot \phi(z)\, dz = \frac{1}{2}\cdot\int_0^2 g(y)\, dy =  \frac{1}{2}\cdot\int_0^1g(y)\,dy + \frac{1}{2}\cdot\int_1^2g(y)\,dy
\end{equation*}
where we split the right-hand side based on $\floor{y}$ being even or odd, over $[0,1]$ and $[1,2]$ respectively. For the first part, using the definition of $\mathcal{R}_1$, we identify the expectation of a uniform variable
\begin{equation*}
    \frac{1}{2}\cdot\int_0^1g(y)\,dy=\frac{1}{2}\cdot\int_0^1f(y)\,dy=\frac{1}{2}\cdot \mathbb{E}[f(U)]\quad \text{with }U\sim\mathcal{U}[0,1].
\end{equation*}
For the second part, we can achieve the same with a substitution $z=2-y$:
\begin{equation*}
    \frac{1}{2}\cdot\int_1^2g(y)\,dy=\frac{1}{2}\cdot\int_1^2f(2-y)\,dy=\frac{1}{2}\cdot\int_0^1f(z)\,dz=\frac{1}{2}\cdot \mathbb{E}[f(U)]\quad \text{with }U\sim\mathcal{U}[0,1].
\end{equation*}
Thus,
\begin{equation*}
    \lim_{t\to\infty} \mathbb{E}[f\left(\ru^i_t\right)] = \mathbb{E}[f(U)], \quad \text{with } U\sim \mathcal{U}[0,1].
\end{equation*}
which shows that $\ru^i_t$ converges in distribution to the uniform density. Finally, to conclude, we note that since velocities $\rvv_0$ are independent across dimensions, so is the distribution of $\rvu_t$. The limiting distribution of $\rvu_t$ is therefore the product of the limiting distributions of $\ru^i_t$. This is the uniform distribution on $[0,1]^d$, the independent copula.   

\underline{Marginal uniformity at all times.} Consider the marginal variable $\ru_t^i$ obtained from the reflection $\mathcal{R}_1(\ru_t^i+t\cdot\rv_t^i,\rv_t^i)$ for any $i\in\{1,\ldots,d\}$. Note that $\mathcal{R}_1$ is measurable as it is piecewise continuous. It is also a combination of measure-preserving operations, namely translations, and reflections across the vertical axis, meaning $\mathcal{R}_1$ is also measure-preserving. This is equivalent to having, for any bounded measurable $f$:
\begin{equation*}
    \mathbb{E}[f(\mathcal{R}_1(\ru_0^i+t\cdot\rv^i_0,\rv^i_0))]=\mathbb{E}[f(\ru_0^i)].
\end{equation*}
By this reasoning, assuming $\rvu_0$ follows a copula distribution, every marginal remains uniform for $t>0$, and so $\rvu_t$ also defines a valid copula distribution at any $t>0$.
\end{proof}

\subsection{Proposition \ref{prop:reflection_sim}}
\label{apdx:reflection_sim}

\paragraph{Proposition \ref{prop:reflection_sim}}\textit{Consider the reflection process introduced in Eq. (\ref{eq:reflection_process}) applied to copula samples $\rvu_0\sim c(\rvu)$ with velocities $\rvv_0\sim\mathcal{N}(\mathbf{0},\mathbf{I}_d)$. Let $v^*(\vu,t)=\mathbb{E}[\rvv_t|\rvu_t=\vu]$ be the expected velocity at time $t\geq0$ and location $\vu\in[0,1]^d$, and denote by $c_T$ the copula of samples $\rvu_T$ at time $T$. Then, starting from a value $\rvu_T\sim c_T(\rvu_T)$ and following the probability path given by the ordinary differential equation 
    \begin{equation*}
        \frac{d}{dt}\vu_t=v^*(\vu,t)
    \end{equation*}
    backwards from $T\to0$, the destination point comes from the initial copula distribution $\vu_0\sim c(\rvu)$. }\\
    
Our proof largely follows Proposition 3 in \cite{holderrieth2024hamiltonian}, and we include below a derivation for our reflection setup for completeness.
\begin{proof} 
    We first describe our reflection process as an ordinary differential equation (ODE) with initial distribution of samples $\rvu_0\sim c(\vu)$, and initial velocity distribution as $\rvv_0\sim\mathcal{N}(\mathbf{0},\mathbf{I}_d)$. As both components are independent at time $0$, our initial joint probability of being at a location-velocity pair is $(\vu_0,\vv_0)\sim\Pi_0(\vu_0,\vv_0)=c(\vu_0)\cdot\mathcal{N}(\vv_0;\mathbf{0},\mathbf{I}_d)$. Therefore, the ODE governing the distribution $\Pi_t(\vu_t,\vv_t)$ on the interior $(0,1)^d$ through time is defined as:
    \begin{equation*}
        \frac{d}{dt}\rvu_t = \rvv_t \quad \text{and} \quad \frac{d}{dt}\rvv_t = 0,
    \end{equation*}
    with a Neumann boundary condition for reflection, where for $\partial\Omega$ the boundary of the hypercube, and $\mathbf{n}(\vu)$ the outward unit normal vector at a point $\vu \in\partial\Omega$, we have for all $\vu\in\partial\Omega$ and $t\geq0$:
    \begin{equation}
    \label{eq:boundary}
      \int \vv \Pi_t(\vu,\vv) d\vv \cdot \mathbf{n}(\vu) = 0.
    \end{equation}
    We will work with the distribution of being at location $\vu$ at time $t$, given by $c_t(\vu):=\int \Pi_t(\vu,\vv) d\vv$, which has been shown to correspond to a copula for all $t\geq0$ in Proposition \ref{lemma:reflection}. For our ODE, the change of the density $\Pi_t(\vu,\vv)$ with $t$ is described by the Fokker-Planck equation:
\begin{equation*}
    \frac{\partial \Pi_t(\vu,\vv)}{\partial t}=-\nabla_{\vu,\vv}\big((\vv,0)\cdot\Pi_t(\vu,\vv)\big).
\end{equation*}
By integrating both sides in $\vv$, we obtain
\begin{align}
    \frac{\partial c_t(\vu)}{\partial t}&=\int-\nabla_{\vu,\vv}\big((\vv,0)\cdot\Pi_t(\vu,\vv)\big)d\vv\\
    \Leftrightarrow \frac{\partial c_t(\vu)}{\partial t}&=-\nabla_{\vu}\int\big(\vv\Pi_t(\vu,\vv)\big) d\vv - \underbrace{\int\nabla_{\vv}0\cdot\Pi_t(\vu,\vv)\big)d\vv}_{=0}\\
    \label{eq:continuity}
    \Leftrightarrow\frac{\partial c_t(\vu)}{\partial t}&=-\nabla_{\vu}\big(c_t(\vu)\cdot\mathbb{E}[\rvv_t|\rvu_t=\vu]\big)
\end{align}
When $\rvu_t\in\partial\Omega$, our boundary condition in Eq. (\ref{eq:boundary}) guarantees there is no loss of mass at the boundary as $\mathbb{E}[\rvv_t|\rvu_t=\vu]\cdot\mathbf{n}(\vu)=0$. Therefore, Eq. (\ref{eq:continuity}) proves that the marginal continuity equation for our process is satisfied by $v^*(\vu,t)=:\mathbb{E}[\rvv_t|\rvu_t=\vu]$.
\end{proof}

\section{Experimental details}
\label{apdx:exp}

\paragraph{Pre-processing.} As copulas have rarely been applied to the settings we aim to explore, we select datasets with simple marginal distributions but where the complexity resides in the dependence, the only exception being \texttt{Robocup}. For all datasets other than \texttt{Robocup}, we pre-process the data in $\mathbb{R}^d$ by applying the empirical CDF to each marginal, obtaining copula observations in $[0,1]^d$. For image datasets, we add a small Gaussian noise to dequantise it and obtain copula ranks. For $10$ repeated runs, we split the copula observations into a train and test split according to Table \ref{tab:train_test}, where splits for \texttt{digits} and \texttt{MNIST} were chosen by following \cite{hukyour}. We also give the terminal times $T$ at which the stochastic process has practically reached its stationary distribution, which we found to be quickly attained in experiments. We report more details on the time choices in Apdx. \ref{apdx:w2_forward}.

\begin{table}[ht]
    \centering
    \caption{\textbf{Experimental setups:} For experiments from the main paper, we show the train and test percentages of the full dataset, as well as the terminal times used for convergence of the forward process.} 
\begin{tabular}{ |c|c|c|c|c|c|c| } 
\hline
  & \texttt{magic} & \texttt{Dry\_Bean} & \texttt{Robocup} & \texttt{digits} & \texttt{MNIST} & \texttt{Cifar}\\
\hline
Train/Test Split & 80/20\% & 80/20\% & 80/20\% & 50/50\% & 50/50\% & 80/20\% \\ 
$c_{dc}$ Terminal $T_k$ & 3 & 3 & 3 & 3 & 3 & 3 \\ 
Ref. Terminal $T$ & 1.5 & 1.5 & 1.5 & 1.5 & 2 & 2.5 \\
\hline
\end{tabular}
\label{tab:train_test}
\end{table}

\paragraph{Pre-processing for \texttt{Robocup}.}
\label{apdx:robocup} The \texttt{Robocup} dataset, first studied with copulas by \cite{wang2021multi}, consists of a $20-$dimensional time series containing the vertical and horizontal positions of a team of $10$ robots. The dataset is obtained from simulations of robot football matches \cite{michael2017robocupsimdata}, specifically from the 25 games between the two teams cyrus2017 and helios2017, where we model the cyrus2017 team. To model the movements of the robot team, we adopt the following decomposition of the multivariate time series. For $(\rx_{t}^i,\ry_{t}^i)\in \mathbb{R}\times\mathbb{R}$ the horizontal and vertical movement of robot $i$ at time $t$, and $(X_t,Y_t)\in\mathbb{R}^{10}\times\mathbb{R}^{10}$ the vector of aggregated horizontal and vertical movements of all $10$ robots, we model the next position as:
\begin{equation*}
    \rx_{t}^i = f^{(\rx^i)}\big(X_{t-1},Y_{t-1},X_{t-2},Y_{t-2}\big) + \varepsilon^{\rx^i},\quad
    \ry_{t}^i = f^{(\ry^i)}\big(X_{t-1},Y_{t-1},X_{t-2},Y_{t-2}\big)+ \varepsilon^{\ry^i}
\end{equation*}
where the mean functions $f^{(\rx^i)},f^{(\ry^i)}$ predict the expected next position based on the past two positions of the team, with noise coming from $\varepsilon^{\rx^i},\varepsilon^{\ry^i}$, which we assume is not predictable as it is independent from the movements of the team. We model the mean functions as a fully connected network $f_\theta:\mathbb{R}^{40}\mapsto\mathbb{R}$, taking as input the $40$-dimensional position of the team over the last two steps, with one hidden layer of dimension $32$, using the ReLU activation function. We use the Adam optimiser \citep{kingma2014adam} with a $0.0001$ learning rate, and optimise for $250000$ steps, which takes 5 hours on a CPU. We train a separate model for each player's movement directions, resulting in $40$ marginal mean models. 

However, we assume that the noise vector $\varepsilon^{\rx^1},\varepsilon^{\ry^1}, \ldots,\varepsilon^{\rx^{10}},\varepsilon^{\ry^{10}}$ is dependent, as such, modelling it with a copula-based decomposition. We first estimate the marginal CDFs (using the empirical CDF) $P^{\rx^i},P^{\ry^i}$, of $\varepsilon^{\rx^i},\varepsilon^{\ry^i}$ respectively, for $i=1,\ldots,10$, and use them to map the noise vector to the copula scale with $\rvu=(P^{\rx^1}(\varepsilon^{\rx^1}), P^{\ry^1}(\varepsilon^{\ry^1}),\ldots,(P^{\rx^{10}}(\varepsilon^{\rx^{10}}), P^{\ry^{10}}(\varepsilon^{\ry^{10}})$. We then model $\rvu$ with our copulas.

This approach is a popular application of copulas to dependent time series analysis, such as in finance and climate science \citep{czado2022vine}. The mean functions serve to absorb the non-stationarity of the time-series, leaving i.i.d. noise which can be modelled with a copula.

\paragraph{Further application of copulas.} Another application could be synthetic data generation or data imputation. For instance, consider sensitive data such as criminal records about recidivism. The variables interact in complex ways with non-trivial dependencies, and obtaining more data is not feasible. Our copula models can then be used to model the data in a first step, and in a second step, impute missing values of incomplete records based on this dependence. First, samples are generated from the fitted copula model (condional on the observed variables), and second, these generated copula samples are transformed via marginal distributions to the data scale, filling in the missing data. 

\paragraph{Time discretisation.} 
\label{apdx:time} As in our theoretical derivations, for the $c_{dc}$ we pick equal a priori class odds, meaning that during training we sample classes for times uniformly among the time discretisation. However, this discretisation does not need to be uniform in the time interval $[T_1,T_k]$. For the $c_{dc}$, on preliminary results from the training set, we found that a Kullback-Leibler discretisation worked best for the scientific datasets, while a linear optimal transport discretisation \citep{lipman2023flow} worked best for image data. Following our result on the convergence rate in Proposition \ref{prop:cop_OU_indep}, we define the KL discretisation as the time steps required for a constant change in the KL. That is, for $k$ the total number of timesteps, the $s^{th}$ time step is given by:
\begin{equation*}
t_s \;=\; -\tfrac{1}{2}\,\ln\!\Big(1 - \big(1 - e^{-2T_{k}}\big)\,\tfrac{s}{k-1}\Big),\quad s=0,1,\dots,k-1.
\end{equation*}
The linear optimal transport discretisation is simply a uniform time grid from 0 to $T_{k}$ with $N$ points, 
\begin{equation*}
    t_s = \tfrac{s}{k-1}\,T_{k}, \quad\,\, s=0,\dots,k-1.
\end{equation*}
For the Reflection copula, we use the following power-law discretisation across all experiments:
\begin{equation*}
    t_s = T_k \cdot \big(\frac{s}{k-1}\big)^{0.125}.
\end{equation*}

We provide an ablation for the choice of scheduler. For the \texttt{Robocup} experiment, we train our $c_{dc}$ model on $90\%$ of the train set and use the last $10\%$for validation. In Fig. \ref{fig:scheduler}, we report the LL for three different choices of schedulers, namely a linear optimal transport scheduler, a power-law scheduler and a KL scheduler. While the scheduler does not affect the LL of the train set, we show that the KL scheduler obtains better generalisations and more stable performance on the test set, while the other two schedulers quickly overfit.  

\begin{figure}[ht]
    \centering
    \includegraphics[width=0.45\linewidth]{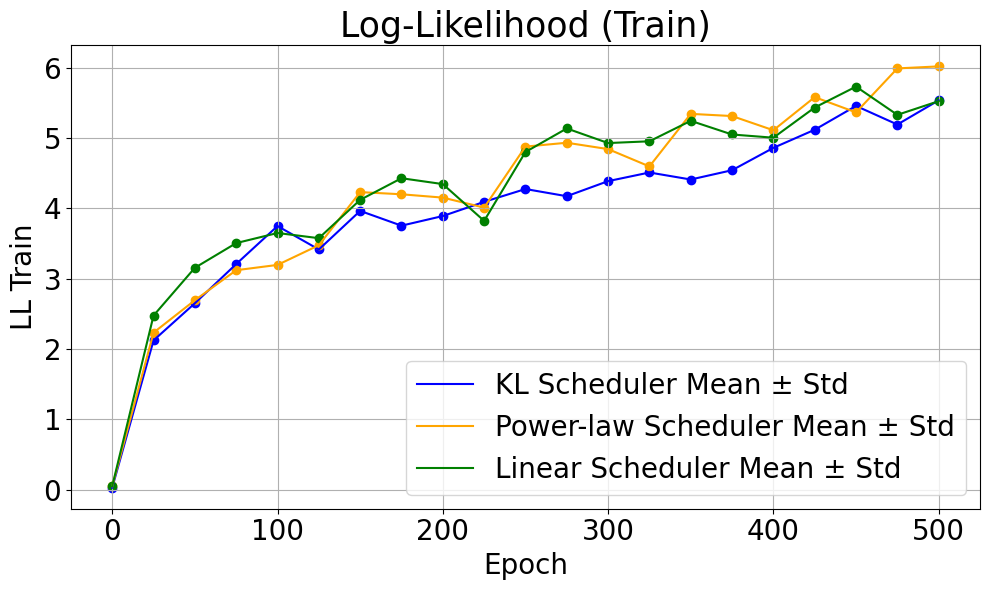}
    \includegraphics[width=0.45\linewidth]{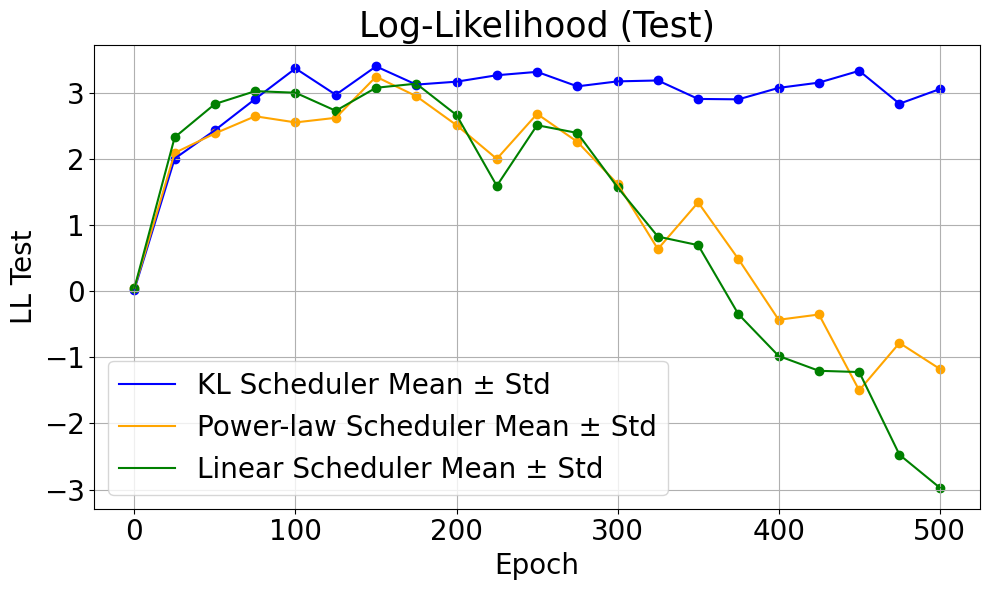}
    \caption{\textbf{Scheduler ablation:} For three different schedulers, we show the training and validation LL values through epochs. The KL scheduler is more stable and generalises better from the train to the test set.}
    \label{fig:scheduler}
\end{figure}

\paragraph{Practical implementation details.} 
For all experiments, we first tune hyperparameters on the train set, finding that the $c_{dc}$ converges quickly, while the reflection copula benefits from more training. 

For experiments on scientific datasets, we use a simple time-conditioned multi-layer perceptron architecture for the reflection copula, with $6$ hidden layers of size $512$ each. For the $c_{dc}$ on scientific data, we use a ResNet backbone with $6$ hidden layers of size $512$ each, followed by a classifier head to output $50$ class probabilities with a softmax activation to normalise them. We use the Swish activation function \citep{ramachandran2017searching} for both models. 

For image datasets, for both models, we use the DDPM architecture from \cite{ho2020denoising}, again with an extra classifier head for the $c_{dc}$. We use the Adam optimiser for training \citep{kingma2014adam} with learning rate $0.00005$ for the $c_{dc}$ and $0.0001$ for the reflection copula. We train for $1$K, $6$K, $250$, $250$, $75$K, $50$K epochs for the $c_{dc}$, on datasets ordered as in Tab. \ref{tab:train_test}. The batch size is $1024$ for scientific datasets and $128$ for image data with the $c_{dc}$. For the reflection copula, we train for $100$K epochs on the scientific datasets, and for $500$ epochs on image data, with a batch size of $512$ for scientific data and $128$ for images, iterating over the whole datasets at each epoch for images.  We use mixture loss weights $\alpha=0.05,0.05, 0.15$ for the $c_{dc}$ on scientific datasets, and use $8$ classes with $\alpha=0.005$ for \texttt{digits}, and $256$ with $\alpha=0.005$ for the other image datasets. 

For the $c_{dc}$ on \texttt{digits} and \texttt{Dry\_Bean}, we use a correlated OU process, and we use an uncorrelated OU process on all other datasets. This is because their dependencies are concentrated on the diagonal when inspecting their pair plots.

For the implementation of benchmark methods, the ratio copula uses the same models as the $c_{dc}$ except for the \texttt{MNIST} dataset, where a better LL was obtained with a convolutional network model following \cite{hukyour}. We train the ratio copula for $3$K, $25$K, $3$K, $1$K, $2$K, $5$K epochs for datasets as ordered in Tab. \ref{tab:train_test}. We sample the ratio copula using Hamiltonian Monte Carlo implemented in \texttt{hamiltorch} by \cite{cobb2020scaling}, and we tune the hyperparameters to have at least $60\%$ acceptance rates. For the IGC, in scientific datasets, we found that the same multi-layer perceptron architecture with $6$ hidden layers of size $512$ each led to frequent mode collapse. We therefore use a smaller network size with only $2$ hidden layers of size $512$ for scientific datasets, and use the full $6$ layers for image data. We train the IGC for $100$K epochs. The Gaussian copula is fitted using the empirical covariance of the training data on the Gaussian scale. The vine copula is fitted using the \texttt{pyvinecopulib} package of \cite{vatter20220} using non-parametric bivariate copulas.

\paragraph{Algorithms for the $c_{dc}$.} Here we give Alg. \ref{algo:cdc_sample} for sampling, and  Alg. \ref{algo:cdc_opti} for optimising the $c_{dc}$ copula. Note that like most copula methods (vines, Gaussian copulas, and IGC), we sample on the Gaussian scale and transform samples back to the copula scale at the end. This slightly alters the form of the score from Prop. \ref{prop:cdc_denoiser}. We employ a DDPM-style sampler for scientific datasets, and use a DDIM-style sampler (see \cite{song2020denoising}) for image data, as it resulted in less noisy outputs. Below, we present the DDPM-style sampler for the $c_{dc}$ copula model, followed by the training algorithm.

\paragraph{Algorithmic changes for $c_{dc}$ with correlated OU.} Note that for the correlated OU process from Eq. \ref{eq:OU_dep}, one has to compensate for the correlation matrix $\Sigma$. Concretely in Alg. \ref{algo:cdc_sample}, Line $2$ becomes $$\vz_{T_k}\sim\mathcal{N}(\mathbf{0},\Sigma),$$ line $6$ becomes $$\nabla_{\vz_{T_t}} G(\bar{\Phi}(\vz_{T_t})) =\nabla_{\vz_{T_t}} \log \mathbb{P}(t=T_t|\vz_{T_t}) - \nabla_{\vz_{T_t}} \log \mathbb{P}(t=T_k|\vz_{T_t}),$$ line $7$ becomes $$\vz_{T_{t-1}} \gets \frac{1}{\sqrt{\alpha_{T_t}}} \Big( \alpha_{T_t}\cdot\vz_{T_{t}} +(1-\alpha_{T_t})\cdot\Sigma\cdot\nabla_{\vz_{T_t}} G(\bar{\Phi}(\vz_{T_t}))\Big),$$ and finally line $8$ becomes $$\vz_{T_t} \gets \vz_{T_t} + \mathbf{H} \cdot \epsilon \cdot \sqrt{1-\alpha_t}$$  with $\epsilon \sim \mathcal{N}(\mathbf{0}, \mathbf{I}_d)$ for $\mathbf{H}$ the lower triangular matrix from a Cholesky decomposition of $\Sigma$. 

For Alg. \ref{algo:cdc_opti}, the only changes are line $5$ becoming $$\vz_s = e^{-s} \vz_{T_1} + \sqrt{1-e^{-2s}} \epsilon \quad\text{for}\,\, \epsilon \sim \mathcal{N}(\mathbf{0}, \Sigma),$$ and line $9$ becoming $$\hat{\epsilon}_s = \sqrt{1-e^{-2s}} \cdot (\Sigma\cdot (\nabla_{\vz_s}\log \mathbb{P}(t=T_k|\vz_s) -\nabla_{\vz_s}\log \mathbb{P}(t=T_t|\vz_s)) +\vz_s).$$

\begin{tcolorbox}[colback=green!15!black!10, colframe=green!65!black!20, sharp corners, boxrule=1mm]
\begin{algorithm}[H]
\begin{algorithmic}[1]
\STATE Initialize diffusion timesteps $T_1,\dots,T_k$ \\with class probability model $c_{dc}(\vz)=\big(\mathbb{P}(t=T_1|\vz), \ldots, \mathbb{P}(t=T_k|\vz)\big)$.
\STATE Sample  $\vz_{T_k} \sim \mathcal{N}(\mathbf{0}, \mathbf{I}_d)$

\FOR{$t = k$ \textbf{to} $2$}
    \STATE Pre-compute constants for OU process:
    $\alpha_t = \exp(2(T_{t-1}-T_t)), \quad$
    \STATE Compute copula using Prop. \ref{prop:cdc_density}: $c_t(\bar{\Phi}(\vz_{T_t}))=\mathbb{P}(t=T_t|\vz_{T_t})/\mathbb{P}(t=T_k|\vz_{T_t})$
    \STATE Compute copula score using Prop. \ref{prop:cdc_denoiser} on the Gaussian scale:
    \[
    \nabla_{\vz_{T_t}} \log c_{T_t}(\bar{\Phi}(\vz_{T_t})) =\nabla_{\vz_{T_t}} \log \mathbb{P}(t=T_t|\vz_{T_t}) - \nabla_{\vz_{T_t}} \log \mathbb{P}(t=T_k|\vz_{T_t}) 
    \]
    \STATE DDPM update to Gaussian scale sample:
    \[
    \vz_{T_{t-1}} \gets \frac{1}{\sqrt{\alpha_{T_t}}} \Big( \alpha_{T_t}\cdot\vz_{T_t} +(1-\alpha_{T_t})\cdot\nabla_{\vz_{T_t}} \log c_{T_t}(\bar{\Phi}(\vz_{T_t})) \Big)
    \]

    \STATE Add exploration noise: $\vz_{T_{t-1}} \gets \vz_{T_{t-1}} + \epsilon \cdot \sqrt{1-\alpha_t}$  with $\epsilon \sim \mathcal{N}(\mathbf{0}, \mathbf{I}_d)$
\ENDFOR

\STATE Map final variables back to copula space: $\vu_{0} = \bar{\Phi}(\vz_{T_1})$
\STATE Return $\vu_{0}\sim c(\vu)$
\caption{Sampling algorithm for the $c_{dc}$ copula model.}
\label{algo:cdc_sample}
\end{algorithmic}
\end{algorithm}
\end{tcolorbox}

\begin{tcolorbox}[colback=green!15!black!10, colframe=green!65!black!20, sharp corners, boxrule=1mm]
\begin{algorithm}[H]
\begin{algorithmic}[1]
\STATE Initialize model parameters $\theta$, number of epochs $N_\text{epochs}$, weight $\alpha>0$, \\timestep $[T_1,\ldots,T_k]$
\FOR{epoch = 1 \textbf{to} $N_\text{epochs}$}
    \STATE Sample Gaussian scale dependent data $\vz_{T_1} \sim \tilde{\vp}_{T_1}$
    \STATE Sample diffusion time uniformly $s\sim U [T_1,\ldots,T_k]$
    \STATE Compute perturbed data: 
    $
    \vz_s = e^{-s} \vz_{T_1} + \sqrt{1-e^{-2s}} \epsilon \quad\text{for}\,\, \epsilon \sim \mathcal{N}(\mathbf{0}, \mathbf{I}_d)
    $
    \STATE Compute class probabilities: 
    $c_{dc}(\vz_s;\theta)=\big(\mathbb{P}(t=T_1|\vz_s), \ldots, \mathbb{P}(t=T_k|\vz_s)\big)
    $
    \STATE Compute cross-entropy loss for class probabilities:
    $
    \mathcal{L}_\text{CE} = - \log \mathbb{P}(t=s|\vz_s)
    $
    \STATE Compute score-based noise estimate:
    \[
    \hat{\epsilon}_s = \sqrt{1-e^{-2s}} \cdot (\nabla_{\vz_s} \log \mathbb{P}(t=T_k|\vz_s) - \nabla_{\vz_s} \log \mathbb{P}(t=T_s|\vz_s) + \vz_s)
    \]
    \STATE Compute mean squared error loss for added noise:
    $
    \mathcal{L}_\text{MSE} =  \|\hat{\epsilon}_s^{(i)} - \epsilon^{(i)}\|^2
    $
    \STATE Take gradient step  with respect to $\theta$ on combined loss:
    $
    \mathcal{L}_{c_{dc}} = \alpha \cdot \mathcal{L}_\text{CE} + \mathcal{L}_\text{MSE}
    $
\ENDFOR
\STATE Return trained model $c_{dc}(\vz;\theta)$
\caption{Training the $c_{dc}$ copula model.}
\label{algo:cdc_opti}
\end{algorithmic}
\end{algorithm}
\end{tcolorbox}

\paragraph{Algorithms for the reflection copula.} We now present Alg. \ref{algo:ref_sampling} to sample, and Alg. \ref{algo:ref_trainig} to train a reflection copula. Our sampling algorithm is similar to an Euler–Maruyama scheme for numerical integration. Even though we reflect samples even during sampling, in practice, the velocity predictor generally does not point outside the hypercube. Our training algorithm is simply a mean square error minimisation for velocity predictions.

\begin{tcolorbox}[colback=blue!15!black!10, colframe=blue!65!black!20, sharp corners, boxrule=1mm]
\begin{algorithm}[H]
\begin{algorithmic}[1]
\STATE Initialize discretisation timesteps $T_1,\dots,T_k$ with velocity predictor $v(\vu,t)$.
\STATE Sample $\rvu_{T_k} \sim U[0,1]^d$
\FOR{$t = k$ \textbf{to} $2$}
    \STATE Predict velocity at time $t$: $\hat{v}_{T_t} \gets v(\vu_{T_t},{T_t})$
    \STATE Update copula sample: 
    $\vu_{_{T_{t-1}}} \gets \vu_{T_t} + ({T_{t-1}}-{T_t}) \cdot \hat{v}_{T_t}$
    \STATE Apply reflection to stay within $[0,1]^d$ using Eq. \ref{def:reflection}, for $i=1,\ldots,d$:
    \[
    \vu_{_{T_{t-1}}}^i \gets \begin{cases}
        \vu_{_{T_{t-1}}}^i - \floor{\vu_{_{T_{t-1}}}^i} \quad &\text{if } \floor{\vu_{_{T_{t-1}}}^i} \,\text{is even}\\
        1- \vu_{_{T_{t-1}}}^i +\floor{\vu_{_{T_{t-1}}}^i}\quad &\text{else }
    \end{cases}
    \]
\ENDFOR

\STATE Return $\vu_{0}\sim c(\vu)$
\caption{Sampling algorithm for the reflection copula model.}
\label{algo:ref_sampling}
\end{algorithmic}
\end{algorithm}
\end{tcolorbox}

\begin{tcolorbox}[colback=blue!15!black!10, colframe=blue!65!black!20, sharp corners, boxrule=1mm]
\begin{algorithm}[H]
\begin{algorithmic}[1]
\STATE Initialize model parameters $\theta$, number of epochs $N_\text{epochs}$, terminal time $T$
\FOR{epoch = 1 \textbf{to} $N_\text{epochs}$}
    \STATE Sample copula data $\vu_{T_1} \sim c(\vu)$
    \STATE Sample velocity $\vv\sim\mathcal{N}(\mathbf{0},\mathbf{I}_d)$
    \STATE Sample forward time $s$ uniformly as $s=T\cdot u^4, \text{for}\,u\sim U [0,1]$
    \STATE Compute reflected location and velocity using Eq. \ref{eq:reflection}, for $i=1,\ldots,d$: 
    \[
   ( \vu_s^i,\vv_s^i) = \mathcal{R}(\vu_{T_1}^i+s\cdot \vv^i_{T_1},\vv^i_{T_1})
    \]
    \STATE Predict velocity: 
    \[
    \hat{v}_s\gets v(\vu_s,s)
    \]
    \STATE Take gradient step with respect to $\theta$ on mean squared velocity error:
    \[
    \mathcal{L}_\text{MSE} = ||\hat{v}_s-\vv_s||^2
    \]
\ENDFOR
\STATE Return trained reflection copula model $v(\vu,t;\theta)$
\caption{Training algorithm for the reflection copula model.}
\label{algo:ref_trainig}
\end{algorithmic}
\end{algorithm}
\end{tcolorbox}

\subsection{Computational times for copula models}
\label{apdx:comp}
Our copulas have different design goals; the $c_{dc}$ performs density estimation with a single function evaluation, while the Reflection copula specialises in sampling. While both designs use an iterative sampling procedure (see Algs. \ref{algo:cdc_sample}, \ref{algo:ref_sampling}), each iteration of the reflection copula is a simple function evaluation of the network, while the $c_{dc}$ is a function evaluation followed by a gradient computation of the output with respect to the input. As such, the Reflection copula is preferred for applications heavily dependent on fast sampling, while the $c_{dc}$ shines in LL-dependent applications. In Tab. \ref{tab:comp_times}, we provide the training times and sampling times for producing $1000$ samples, all done on an NVIDIA L40S GPU except for the Vine copulas which used an AMD EPYC 9555P processor. The $c_{dc}$ uses a number of sampling steps equal to the number of classes, while the reflection copula uses 50 steps for all experiments. We do not provide times for the Gaussian copula as training is simply taking the empirical covariance of the data on the Gaussian scale, and sampling is just a sample from that Gaussian with a probability integral transform. This estimation simplicity comes at the cost of low modelling capacity. 

The vine copula training time scales mostly with data size and dimension, which explains the higher time of MNIST compared to Cifar. As vines are inherently sequential and cannot take advantage of GPU compute, our methods obtain faster sampling times compared to vines. The IGC has fast training and the fastest sampling times, which are partially due to the smaller network size we used as a consequence of the frequent model collapse with larger network sizes. The Ratio copula has a faster training time than our methods, but requires much longer to produce samples due to the reliance on Hamiltonian Monte Carlo (differences in sampling times are explained by differences in the HMC sampler on different datasets to yield a $60\%$ acceptance rate).

\begin{table}[ht]
    \centering
    \caption{\textbf{Training and sampling times:} For all methods, we measure the time to train and sample. Compared to the $c_{dc}$, the reflection copula trains for longer but samples much faster.} 
\resizebox{\textwidth}{!}{
\begin{tabular}{ |c|c|c|c|c|c|c| } 
\hline
  & \texttt{magic} & \texttt{Dry\_Bean} & \texttt{Robocup} & \texttt{digits} & \texttt{MNIST} & \texttt{Cifar}\\
\hline
Vine training & $1.5$s & $1.1$s & $13$s & 
$2$s & $2$h & $21$min \\ 

IGC training & $2$s & $3$s & $9$s & 
$11$s & $12$min & $14$min \\ 

Ratio training & $7$s & $62$s & $7$s & 
$56$s & $23$min & $37$min \\

\hline
$c_{dc}$ training & $6$min & $7$min & $4\frac{1}{2}$min & 
$100$s & $2$h & $4$h \\ 

Reflection training & $33$min & $40$min &$46$min & 
$90$s & $40$m & $1$h\\
\hline
\hline
Vine sampling & $0.09$s & $0.17$s & $0.23$s & 
$0.77$s & $78$s & $131$s \\ 

IGC sampling & $1.2*10^{-5}$s & $1.5*10^{-5}$s & $1.7*10^{-5}$s & 
$0.0008$s & $0.0013$s & $0.0015$s \\

Ratio sampling & $165$s & $327$s & $151$s & 
$554$s & $2$h & $2\frac{1}{2}$h \\ 
\hline
$c_{dc}$ sampling & $0.07$s & $0.08$s & $0.08$s & 
$2$s & $42$s & $67$s \\ 

Reflection sampling & $0.01$s & $0.02$s & $0.02$s & 
$0.5$s & $8$s & $15$s\\

\hline
\end{tabular}
}
\label{tab:comp_times}
\end{table}

We additionally perform a dedicated computational time study in Fig. \ref{fig:comp}. From our computational study, we conclude that the network model is the most influential, followed by the number of timesteps used to simulate the SDE/ODE when sampling.

In more detail, for $d$ the data dimension , $w$ the network width and $h$ the number of hidden layers, $c_{dc}$ training has a complexity of $\mathcal{O}(d\cdot w + h\cdot w^{2})$. This is because the model performs one forward pass (for class probabilities) and one backward pass (for scores) before taking gradients with respect to the loss during training. During inference, to sample the $c_{dc}$, we perform a forward and backward pass for each of the $k$ classes, following a SDE discretization scheme, resulting in a complexity of $\mathcal{O}(k\cdot(d\cdot w + h\cdot w^{2}))$. For the reflection copula, one training epoch consists of a single forward pass followed by a weight update, with complexity $\mathcal{O}(d\cdot w + h\cdot w^{2})$. Generating samples requires $m$ network evaluations where $m$ is the number of steps to simulate the ODE for, which scales as $\mathcal{O}(m\cdot(d\cdot w + h\cdot w^{2}))$.

\begin{figure}[H]
    \centering
        \begin{subfigure}[b]{0.32\linewidth}
        \centering
        \includegraphics[width=\linewidth]{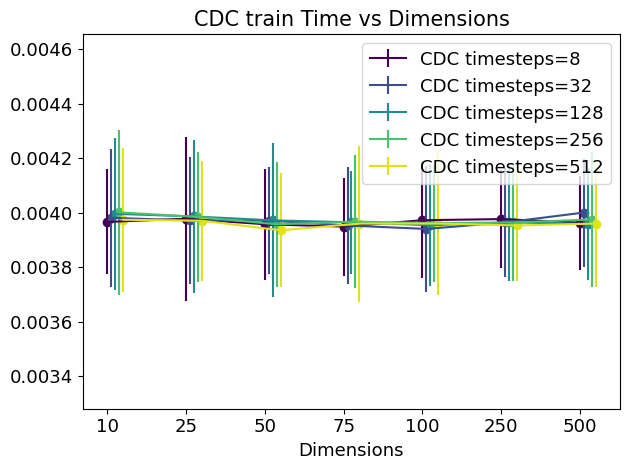}
        \caption{$c_{dc}$ training time} 
    \end{subfigure}
    \hfill
    \centering
    \begin{subfigure}[b]{0.32\linewidth}
        \centering
        \includegraphics[width=\linewidth]{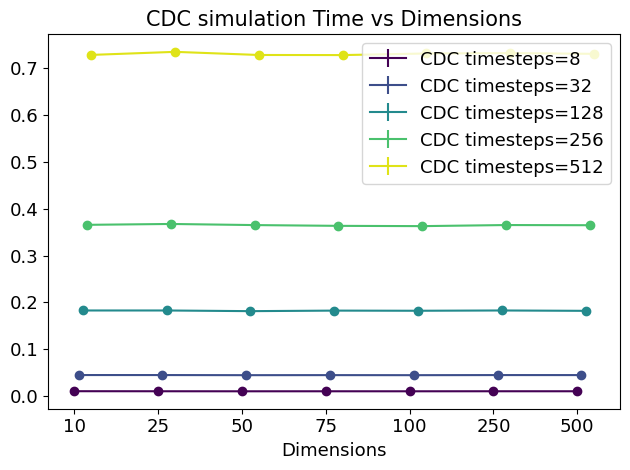}
        \caption{$c_{dc}$ sampling time} 
    \end{subfigure}
    \hfill
    \centering
    \begin{subfigure}[b]{0.32\linewidth}
        \centering
        \includegraphics[width=\linewidth]{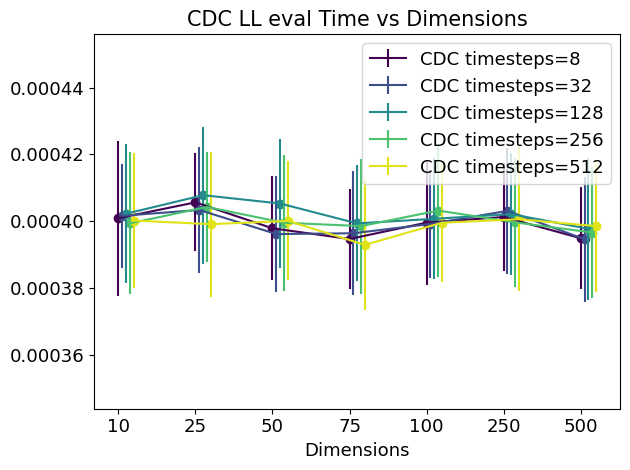}
        \caption{$c_{dc}$ LL eval time} 
    \end{subfigure}

     \centering
        \begin{subfigure}[b]{0.32\linewidth}
        \centering
        \includegraphics[width=\linewidth]{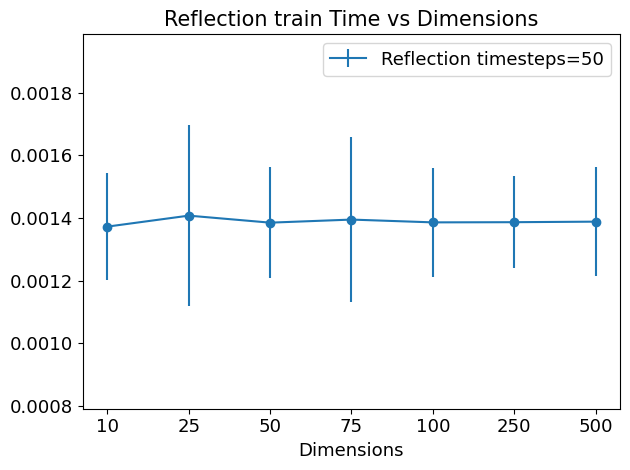}
        \caption{Reflection training time} 
    \end{subfigure}
        \begin{subfigure}[b]{0.32\linewidth}
        \centering
        \includegraphics[width=\linewidth]{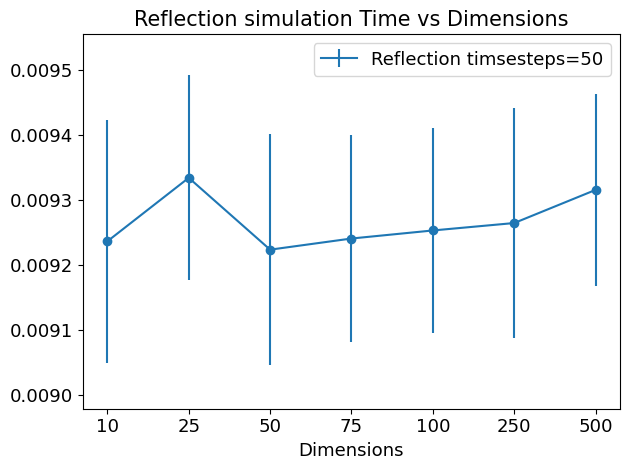}
        \caption{Reflection sampling time} 
    \end{subfigure}
    \hfill
    \caption{\textbf{Computational time study:} For varied dimensions and time steps, we report our models' training, sampling and LL eval (for the $c_{dc}$) times in seconds, reported as the mean over 25 runs with $\pm$ one standard deviation.}
    \label{fig:comp}
\end{figure}

\subsection{Simulation study}
\label{apdx:sim_study}

To show that our $c_{dc}$ copula obtains correct density estimates, we study an example with an analytical density. Concretely, we use the following data generating process as the ground truth copula:
\paragraph{Copula of a mixture of Student's T distributions.} Let $k=4$ be the number of mixture components, with probabilities $[\pi_1,\pi_2,\pi_3,\pi_4]=[0.3, 0.3, 0.2, 0.2]$, each a multivariate Student's T distribution with $10$ degrees of freedom, with densities denoted by $S_k^{10}$. For a mixture component, define the mean $\mu_k$ as
\begin{equation*}
    \mu_k = 4\rvx \quad \text{with} \quad \rvx\sim\mathcal{N}(.;\mathbf{0}, \mathbf{I}_4),
\end{equation*}
with correlation matrix $\Sigma_k$ obtained using the Davies-Higham Algorithm \citep{davies2000numerically} based on eigenvalues constructed by sampling $4$ uniform variables in $[0,1]$, and normalising the vector to sum to $d$. The final copula density at a value $\rvu = (F^1(\rx^1),\ldots,F^d(\rx^d))$ with $F^i$ the univariate mixture of Student's T distribution functions, is obtained by the ratio of the joint mixture density divided by the product of mixture marginal densities:
\begin{equation*}
    c(\rvu)=\frac{\sum_{k=1}^4 \pi_k \cdot S_k^{10}(\rvx;\mu_k,\Sigma_k)}{\prod_{i=1}^d \{\sum_{k=1}^4 \pi_k \cdot S_k^{i,10}(\rx^i;\mu_k^i,(\Sigma_k)^{ii})\}},
\end{equation*}
where $S_k^{i,10}$ is a univariate student's T density with $10$ degrees of freedom, $\mu_k^i$ is the $\text{i}^{\text{th}}$ entry of $\mu_k$, and $(\Sigma_k)^{ii}$ is the $\text{i}^{\text{th}}$ diagonal term of $\Sigma_k$. The resulting copulas are multimodal with a shape not well captured by common parametric copulas, making them fit to assess our models' flexibility while still maintaining access to the ground truth density. They also display traits of extreme dependence due to the heavy tailed nature of student's T distributions.

\paragraph{Simulation study results.} We assess our $c_{dc}$ model's LL values against those of the Ratio copula, in dimensions $d=10,50,100$ with $8000$ samples to train on and $2000$ samples in the test set. We use the same model architectures for the $c_{dc}$ and ratio copulas as for the scientific datasets, with the $c_{dc}$ having $8$ time classes with a KL-based discretisation (see discussion on time discretisations above in Apdx. \ref{apdx:exp}), while the ratio copula only has $2$. 

In Tab. \ref{tab:sim_study}, we report the mean absolute error (MAE) and the mean squared error (MSE) of LL values of a run aggregated over 10 independently and identically sampled datasets from a fixed mixture of a given dimension. Standard deviations over the 10 runs are shown in subscripts. Our $c_{dc}$ obtains more accurate density estimates than the Ratio copula, and does so with less variability between runs too. This difference is accentuated in higher dimensions.

We visualise a single run from the $10$-dimensional setup in Fig. \ref{fig:sim_study}, where we show violin plots displaying the densities of LL errors, with dots overlayed on top to represent samples for which the specific LL error was obtained. We colour-code these samples based on the ground truth copula LL, allowing us to discern patterns with respect to the original copula LL. For instance, the ratio copula overestimates the LL of high-density samples under the original copula (blue samples tend to have $c(\rvu)>ratio(\rvu)$), and underestimates the LL of low-density samples (red samples tend to have $c(\rvu)<ratio(\rvu)$). Our $c_{dc}$ slightly displays a similar pattern with red dots being more frequent near the bottom of the violin plot, but the pattern is less pronounced. Most of our model's errors concentrate near $0$, showing it correctly represents the dependence of the ground truth copula. Bivariate visualisations are displayed in Fig. \ref{fig:sim_bivariate} as aggregate views of all but two dimensions (showing just $\rvu^1,\rvu^2$), showcasing their multimodality.

\begin{table}[ht]
    \centering
    \caption{\textbf{Simulation study:} For analytically tractable mixture copula densities in increasing dimensions, our $c_{dc}$ model obtains more accurate density estimates with lesser variance.} 
\begin{tabular}{ |c|c|c|c| } 
\hline
  & $d=10$ & $d=50$ & $d=100$ \\
\hline
$c_{dc}$ MAE &  $3.34_{\pm 0.12} $& $13.73_{\pm 0.56}$ & $26.56_{\pm 1.56}$  \\ 
Ratio MAE &     $3.40_{\pm 0.22} $& $34.18_{\pm 17.15}$ & $65.87_{\pm 11.08}$  \\
$c_{dc}$ MSE &  $18.03_{\pm 1.23}$& $291.43_{\pm 21.21}$ & $1077.30_{\pm 106.82}$  \\ 
Ratio MSE &     $19.30_{\pm 1.96}$& $2324.58_{\pm 2237.82}$ & $4808.75_{\pm 1239.08}$  \\
\hline
\end{tabular}
\label{tab:sim_study}
\end{table}

\begin{figure}[H]
    \centering
        \begin{subfigure}[b]{0.32\linewidth}
        \centering
        \includegraphics[width=\linewidth]{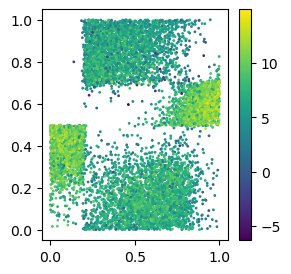}
        \caption{$d=10$ mixture copula} 
    \end{subfigure}
    \hfill
    \centering
    \begin{subfigure}[b]{0.32\linewidth}
        \centering
        \includegraphics[width=\linewidth]{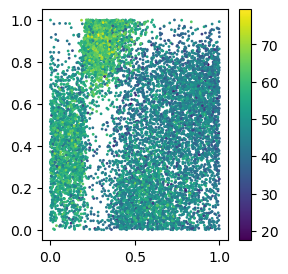}
        \caption{$d=50$ mixture copula} 
    \end{subfigure}
    \hfill
    \centering
    \begin{subfigure}[b]{0.32\linewidth}
        \centering
        \includegraphics[width=\linewidth]{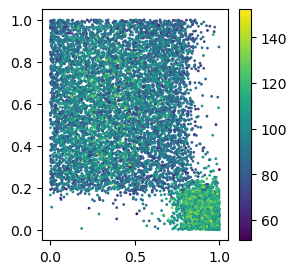}
        \caption{$d=100$ mixture copula} 
    \end{subfigure}
    \caption{\textbf{Simulation study:} Plots of $\rvu^1,\rvu^2$ copula samples with colours representing LLs.}
    \label{fig:sim_bivariate}
\end{figure}

\subsection{Convergence of processes to their limiting distribution}
\label{apdx:w2_forward}

As both methods use a terminal time at which the copula of the process has converged to independence, we show here a practical method for selecting such times. For observed copula data, we apply the forward process to it and measure the Wasserstein-2 distance of the samples at different times with respect to a uniform distribution, which is the stationary distribution here. We also measure the W2 of a uniform distribution to itself using two sets of uniformly sampled data, which are shown in orange as a baseline. The reflection process is shown in blue while the OU process is shown in purple. In Fig. \ref{fig:W2_plots_both}, we see that already for $t\approx1$, both processes obtain similar W2 values to uniform samples, indicating they are suitably close to stationarity. This exemplifies the fast convergence rates derived in Section \ref{sec:cdc} for the OU, and provides support for the reflection process as a good choice of forward process.

\begin{figure}[H]
    \centering
        \begin{subfigure}[b]{0.32\linewidth}
        \centering
        \includegraphics[width=\linewidth]{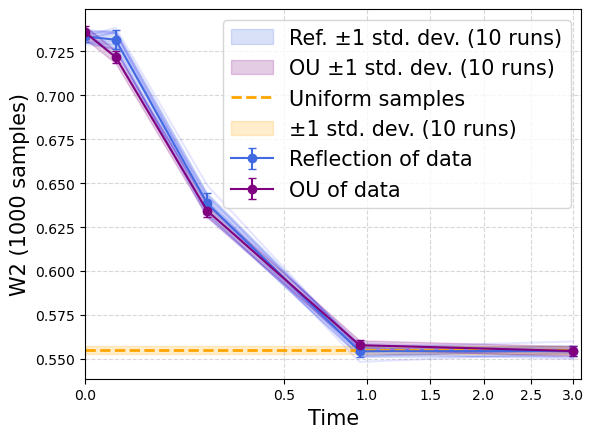}
        \caption{W2 \texttt{magic}\\n=19020, d=10 } 
    \end{subfigure}
    \hfill
    \centering
    \begin{subfigure}[b]{0.32\linewidth}
        \centering
        \includegraphics[width=\linewidth]{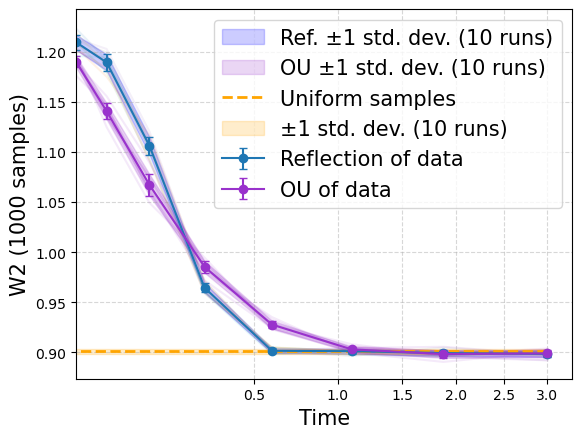}
        \caption{W2 \texttt{Dry\_Bean}\\n=13611, d=16 } 
    \end{subfigure}
    \hfill
    \centering
    \begin{subfigure}[b]{0.32\linewidth}
        \centering
        \includegraphics[width=\linewidth]{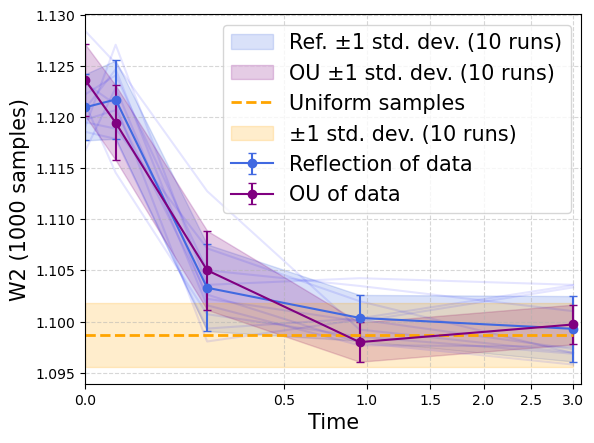}
        \caption{W2 \texttt{robocup}\\ n=135607, d=20} 
    \end{subfigure}
    \hfill
    \centering
    \begin{subfigure}[b]{0.32\linewidth}
        \centering
        \includegraphics[width=\linewidth]{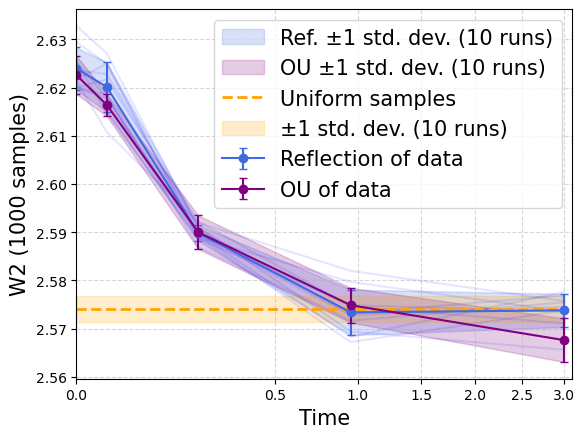}
        \caption{W2 \texttt{digits}\\ n=1797, d=64} 
    \end{subfigure}
    \hfill
    \centering
    \begin{subfigure}[b]{0.32\linewidth}
        \centering
        \includegraphics[width=\linewidth]{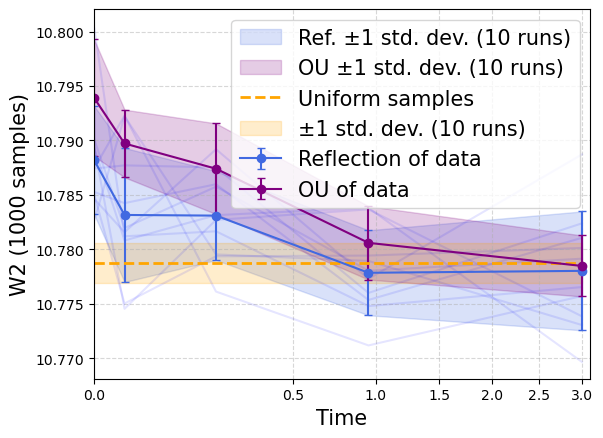}
        \caption{W2 \texttt{MNIST}\\ n=60000, d=784} 
    \end{subfigure}
    \hfill
    \centering
    \begin{subfigure}[b]{0.32\linewidth}
        \centering
        \includegraphics[width=\linewidth]{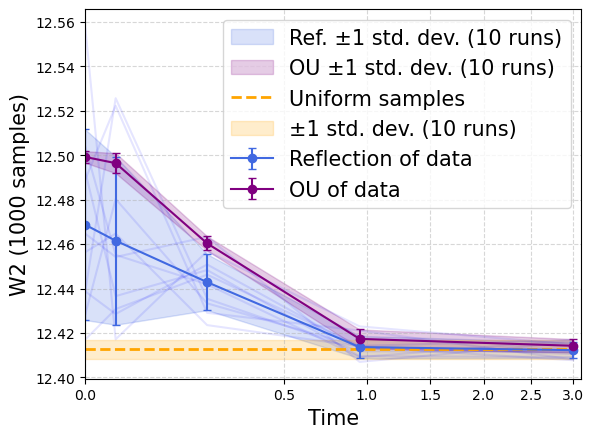}
        \caption{W2 \texttt{Cifar}\\ n=10000, d=1024} 
    \end{subfigure}
    \caption{\textbf{Convergence to uniformity of forward copula process.} Initialised at 1000 copula observations, we run the processes forward in time and measure the Wasserstein-2 distance with respect to the Uniform distribution. We show the result with one standard deviation in blue across 10 measurements, depicting the fast convergence to uniformity.}
    \label{fig:W2_plots_both}
\end{figure}

\begin{figure}[H]
    \centering
        \begin{subfigure}[b]{0.9\linewidth}
        \centering
        \includegraphics[width=\linewidth]{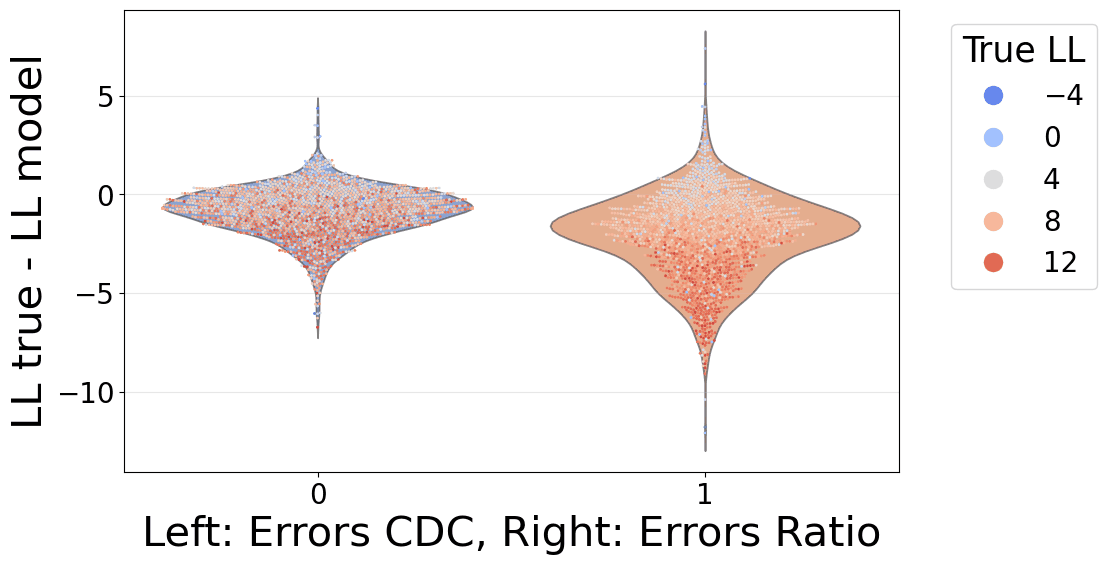}
        \caption{LL Error} 
    \end{subfigure}
    \hfill
    \centering
    \begin{subfigure}[b]{0.9\linewidth}
        \centering
        \includegraphics[width=\linewidth]{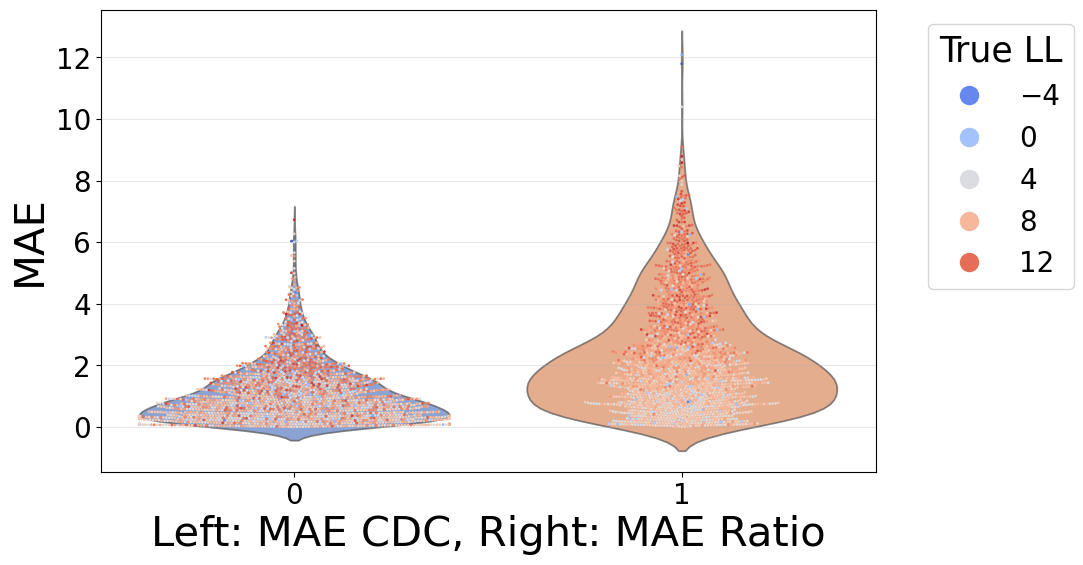}
        \caption{Absolute LL Error} 
    \end{subfigure}
    \hfill
    \centering
    \begin{subfigure}[b]{0.9\linewidth}
        \centering
        \includegraphics[width=\linewidth]{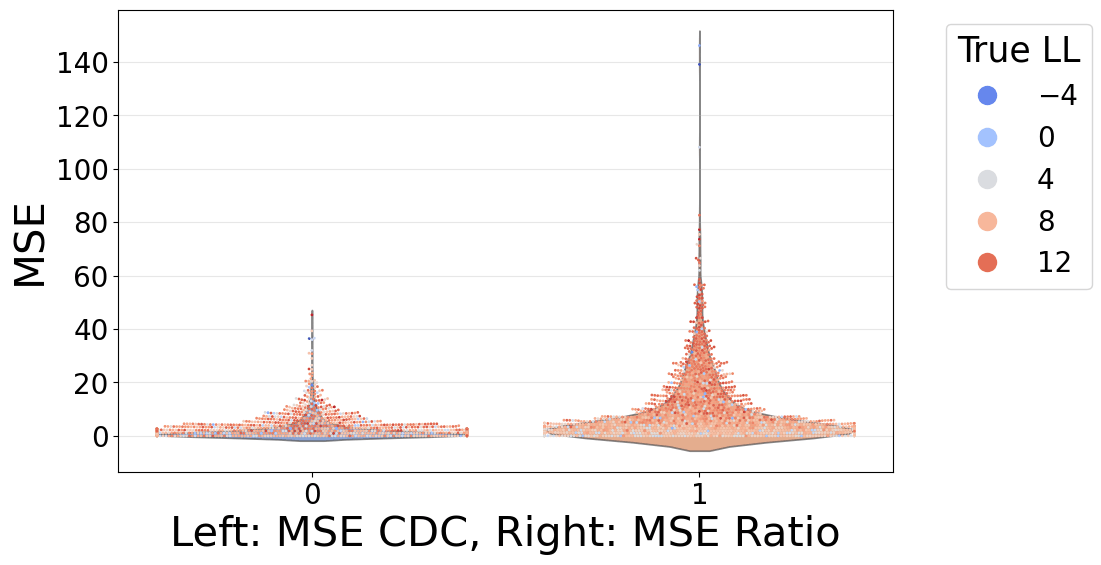}
        \caption{Squared LL Error} 
    \end{subfigure}
    \caption{\textbf{Simulation study:} We show the distribution of errors for our $c_{dc}$ and the ratio copula. Overall, our model achieves more accurate density estimates, with errors of smaller magnitudes.}
    \label{fig:sim_study}
\end{figure}

\subsection{Ablations on the mixture loss}
\label{apdx:alpha}
For our experiments, we use a mixture weight $\alpha$ to equalise both loss terms, following previous work \cite{Yadinclass}. Here, we provide an ablation to investigate the sensitivity of the $c_{dc}$ to this choice. We perform a sweep over values $\alpha=(0.0,0.01,0.25,0.5,0.75,1.0,5.0,20.0,500.0)$ for the experiments on scientific datasets. We report the resulting LL values on a held-out validation set ($10\%$ of the train set) in Fig. \ref{fig:alpha}. Our model is shown to be robust to values of $\alpha>0.01$ with a stable performance across different values. This indicates that incorporating the CE term is important to achieve optimal performance, which echoes our theoretical analysis in Thm. \ref{thm:cdc_loss}.

\begin{figure}[H]
    \centering
    \includegraphics[width=0.32\linewidth]{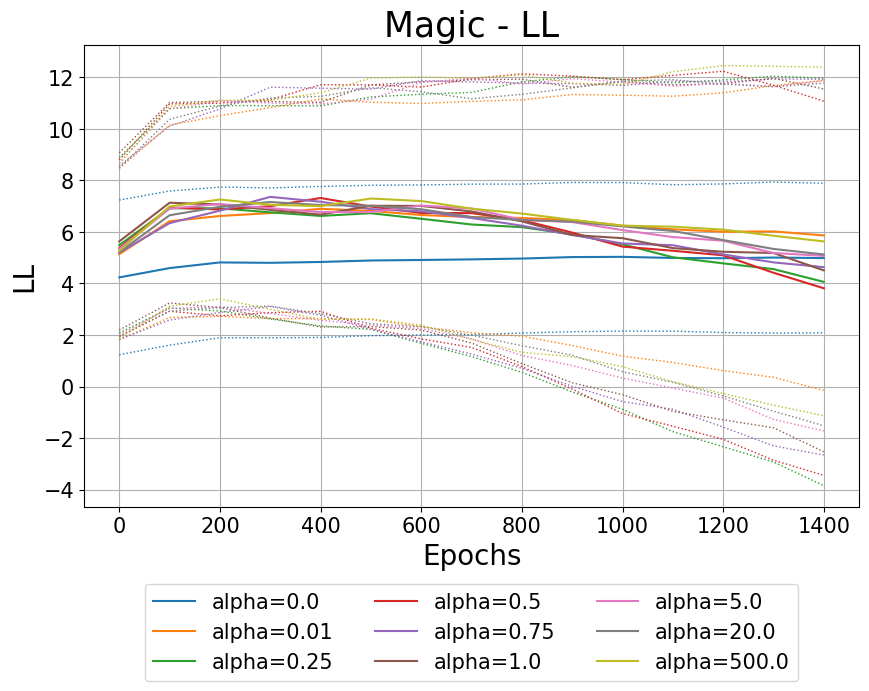}
    \includegraphics[width=0.32\linewidth]{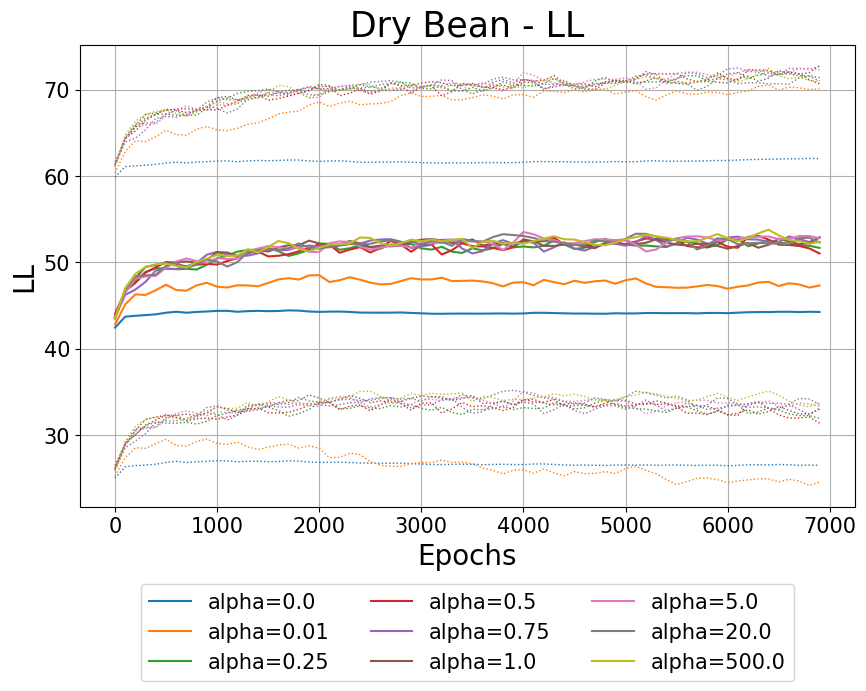}
    \includegraphics[width=0.32\linewidth]{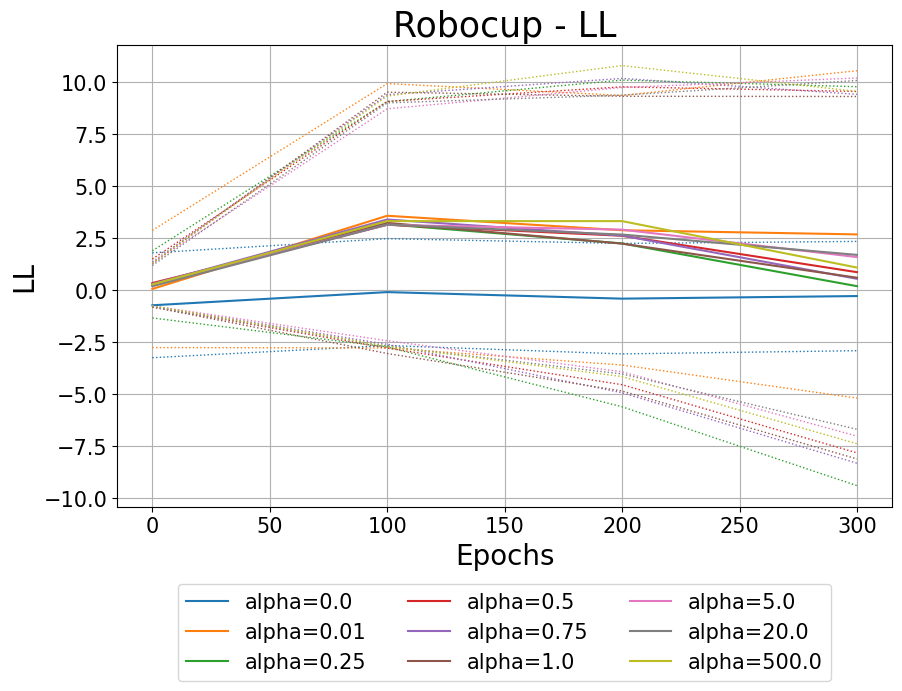}
    \caption{\textbf{Ablation on mixture loss}: We train with varied values of the mixture hyperparameter $\alpha$ and report LLs ($\pm$ std. as dotted lines) on the validation set. Our method is robust to all values $\alpha>0.01$ with a stable performance over hyperparameters.}
    \label{fig:alpha}
\end{figure}

\subsection{Uniformity of generated samples}
\label{apdx:unif}

For a copula to correctly sample a joint density, its samples must be uniformly distributed for each dimension. This property is not obvious to maintain with deep learning models. As a diagnostic tool following probabilistic calibration literature \citep{hamill2001interpretation}, in Figs. \ref{fig:ranks_magic}, \ref{fig:ranks_db}, \ref{fig:ranks_robocup}, \ref{fig:ranks_digits}, \ref{fig:ranks_mnist}, and \ref{fig:ranks_cifar}, we show aggregate rank histograms over dimensions, split over 10 intervals of length $0.1$ over the $[0,1]$ support\footnote{Here, ranks refers to the samples on copula space.}. For each subplot, the histogram comes from 10 independent runs of a model on that dataset. As the Gaussian, Vine, and IGC copulas enforce uniform marginals on samples by construction, we only show plots for our two models and the Ratio copula. In grey, we show truly uniform samples averaged over the same quantities. Therefore, deviations from the grey histogram indicate non-uniformity of samples. Deviations to the right compared to the uniform histogram show an over-representation of samples in that percentile range, while a deviation to the left shows an under-representation of samples in that percentile range. See \cite{hamill2001interpretation} for more details on rank histograms. The aggregate rank for run $r$ over a bin $b_k$ in $\{b_{1},\ldots,b_{10}\}=\{[0,0.1],\ldots,[0.9,1]\}$ is computed as:
\begin{equation*}
\text{Rank}_{k,r} = \frac{10}{N\cdot d}\sum_{n=1}^N \sum_{i=1}^d \mathbb{I}_{ b_k}(\rvu^{(i,n, r)})
\end{equation*}
where $\rvu^{(i,n,r)}$ denotes the $n$-th sample in dimension $i$ from run $r$, $d$ is the dimensionality, and $N=1000$ is the sample size per run, with the indicator function $\mathbb{I}_{A}(x)=1$  if $x\in A$ taking the value $0$ else.

Overall, our two methods produce histograms similar to uniform samples, while the Ratio copula obtains more spread-out histograms. This means that the Ratio copula obtains biased samples each run, but the type of bias is different across runs. This issue is indicative of the difficulty of accurately sampling complex high-dimensional distributions with MCMC methods. All methods generally perform worse on \texttt{digits} and \texttt{Cifar}, possibly indicating the difficulty that those datasets represent for a copula sampling task.

\begin{figure}[H]
    \centering
    \includegraphics[width=1\linewidth]{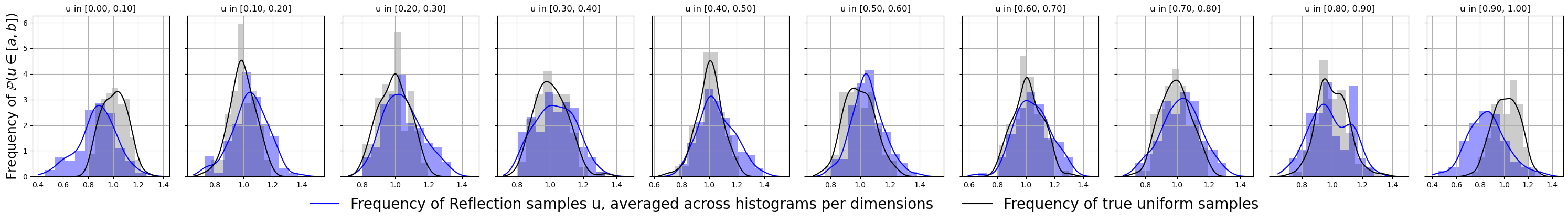}
    \includegraphics[width=1\linewidth]{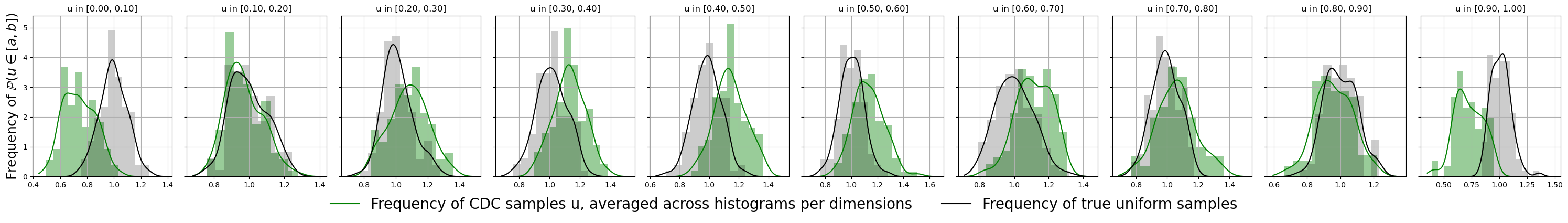}
    \includegraphics[width=1\linewidth]{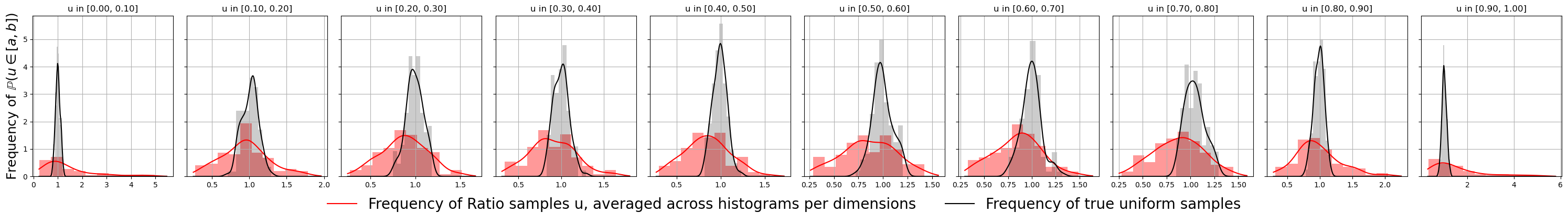}
    \caption{\textbf{Uniformity of samples for} \texttt{Magic}:}
    \label{fig:ranks_magic}
\end{figure}

\begin{figure}[H]
    \centering
    \includegraphics[width=1\linewidth]{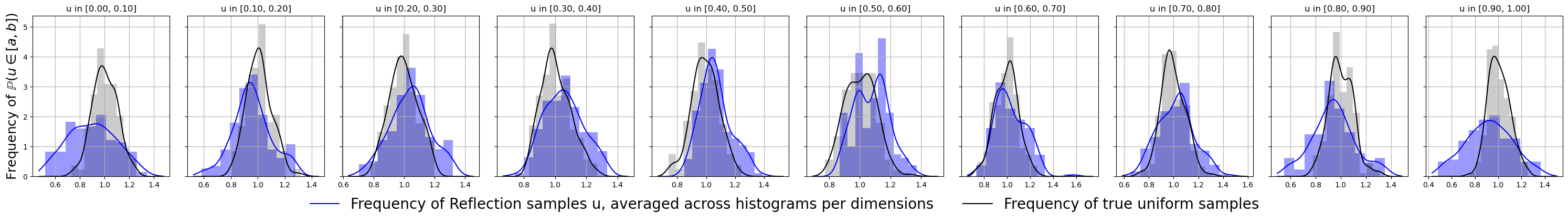}
    \includegraphics[width=1\linewidth]{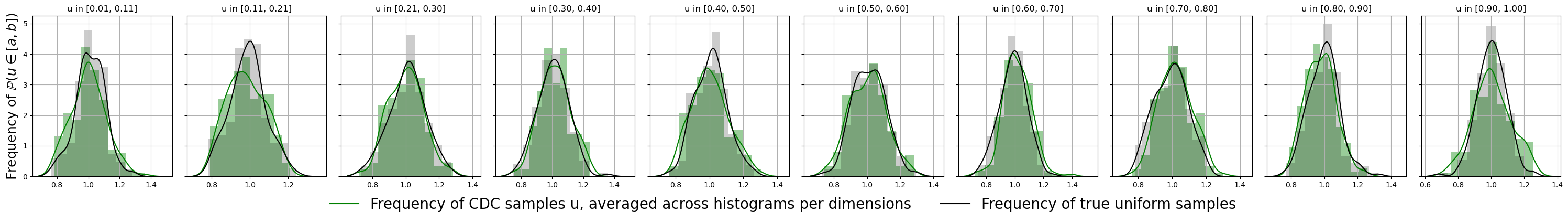}
    \includegraphics[width=1\linewidth]{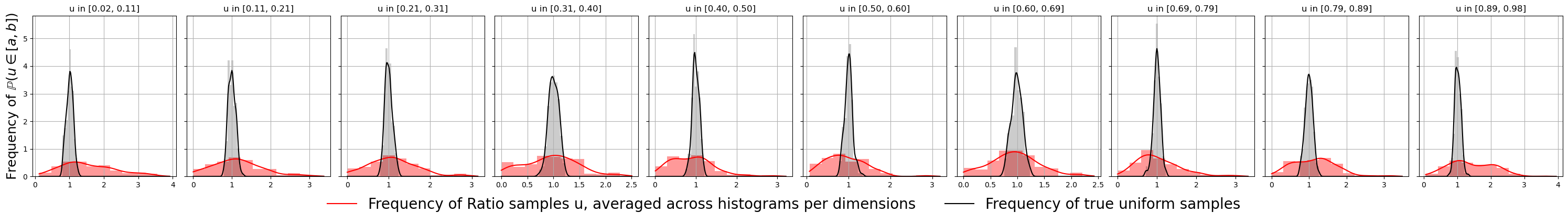}
    \caption{\textbf{Uniformity of samples for} \texttt{Dry\_Bean}.}
    \label{fig:ranks_db}
\end{figure}

\begin{figure}[H]
    \centering
    \includegraphics[width=1\linewidth]{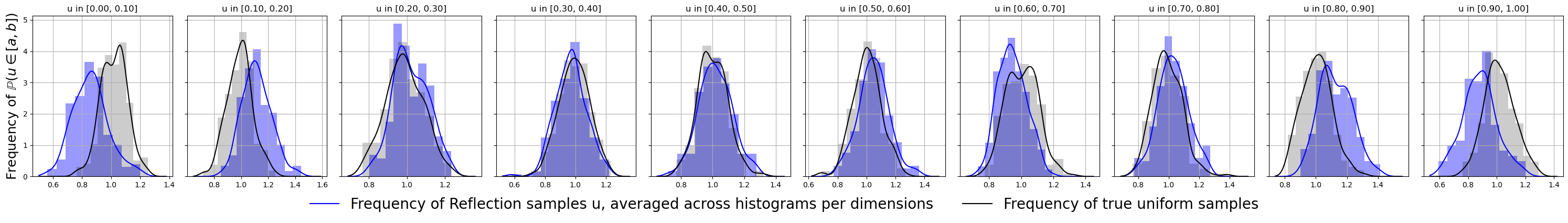}
    \includegraphics[width=1\linewidth]{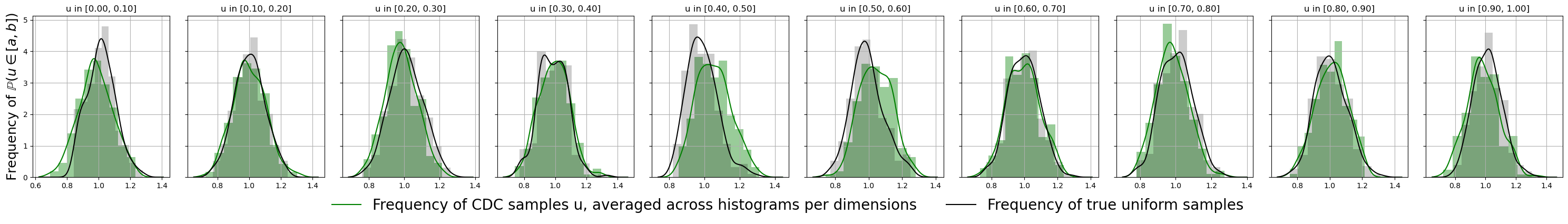}
    \includegraphics[width=1\linewidth]{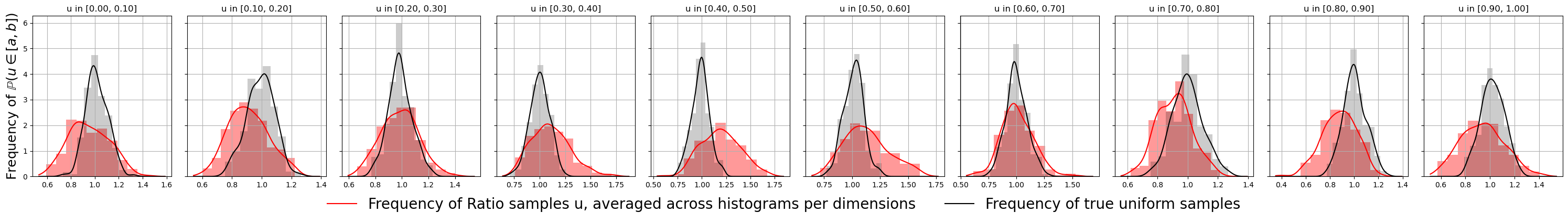}
    \caption{\textbf{Uniformity of samples for} \texttt{Robocup}.}
    \label{fig:ranks_robocup}
\end{figure}

\begin{figure}[H]
    \centering
    \includegraphics[width=1\linewidth]{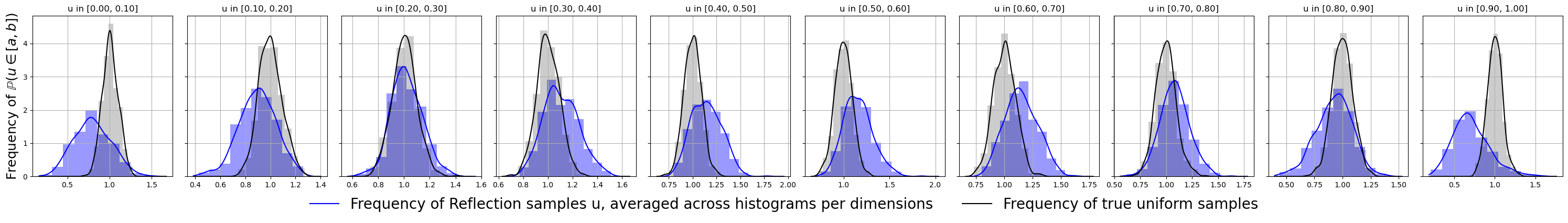}
    \includegraphics[width=1\linewidth]{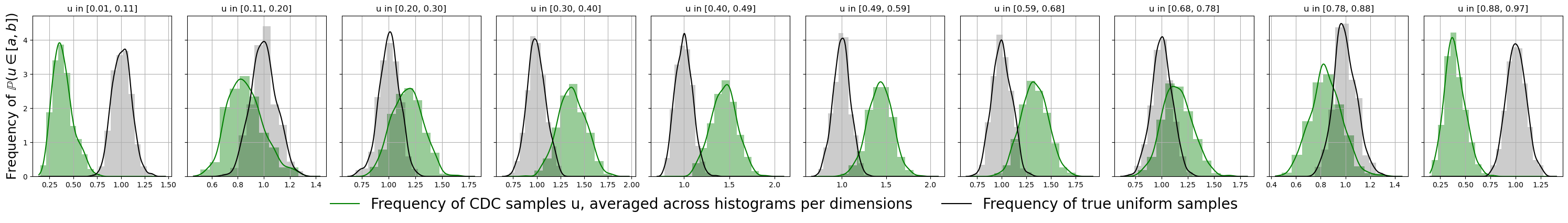}
    \includegraphics[width=1\linewidth]{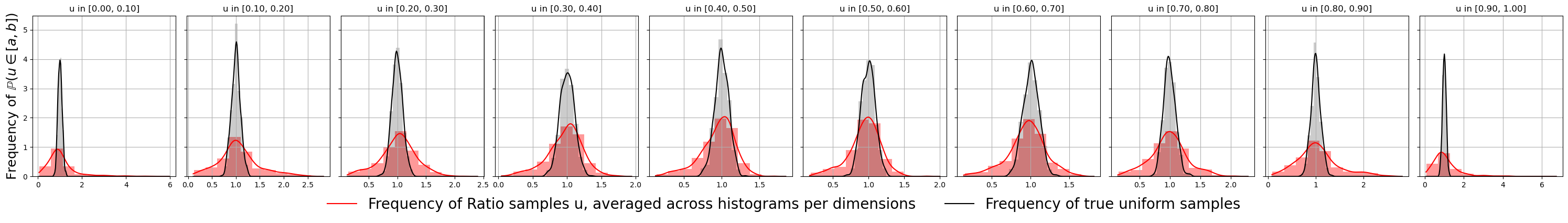}
    \caption{\textbf{Uniformity of samples for} \texttt{digits}.}
    \label{fig:ranks_digits}
\end{figure}

\begin{figure}[H]
    \centering
    \includegraphics[width=1\linewidth]{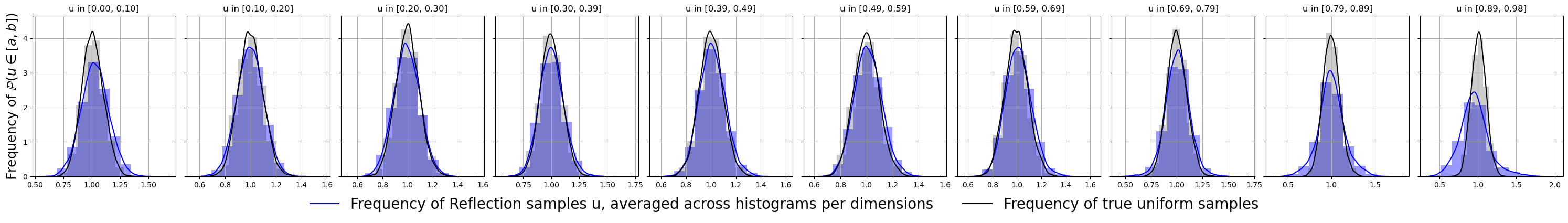}
    \includegraphics[width=1\linewidth]{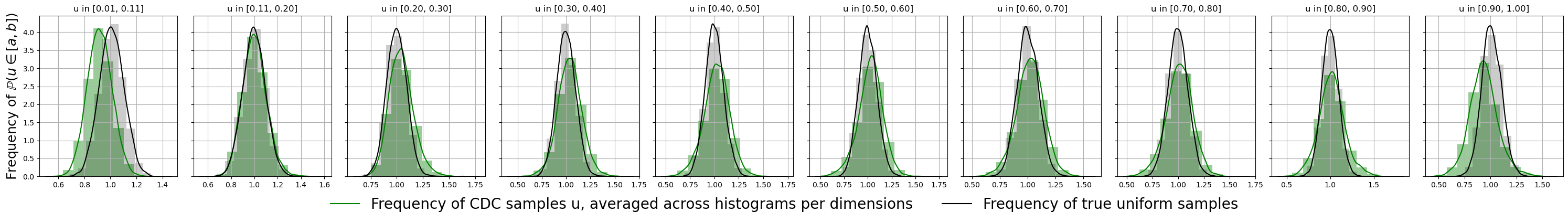}
    \includegraphics[width=1\linewidth]{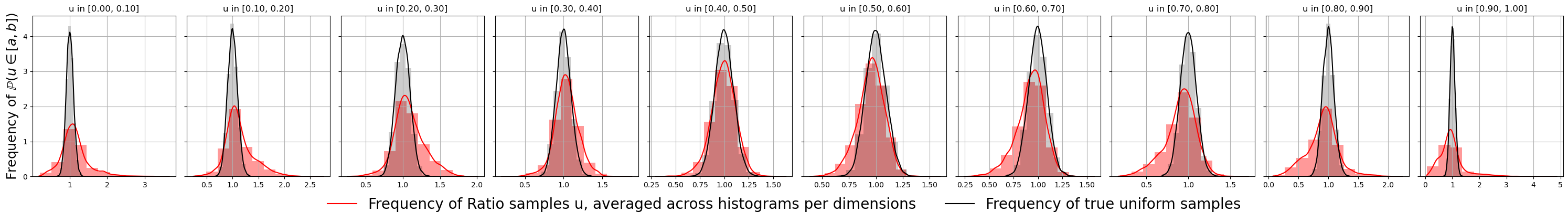}
    \caption{\textbf{Uniformity of samples for} \texttt{MNIST}.}
    \label{fig:ranks_mnist}
\end{figure}

\begin{figure}[H]
    \centering
    \includegraphics[width=1\linewidth]{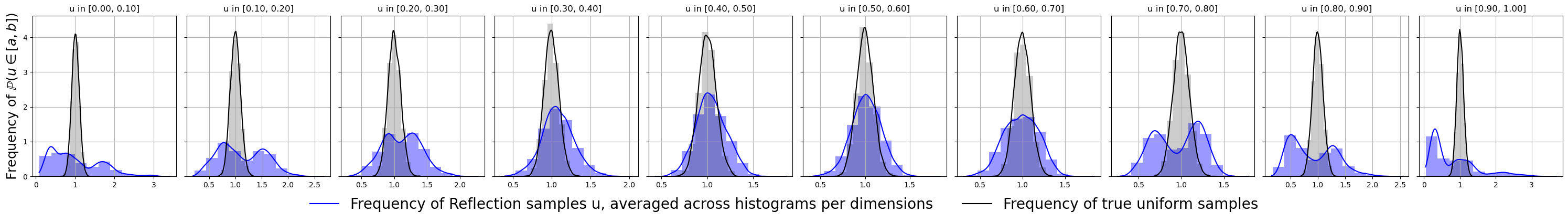}
    \includegraphics[width=1\linewidth]{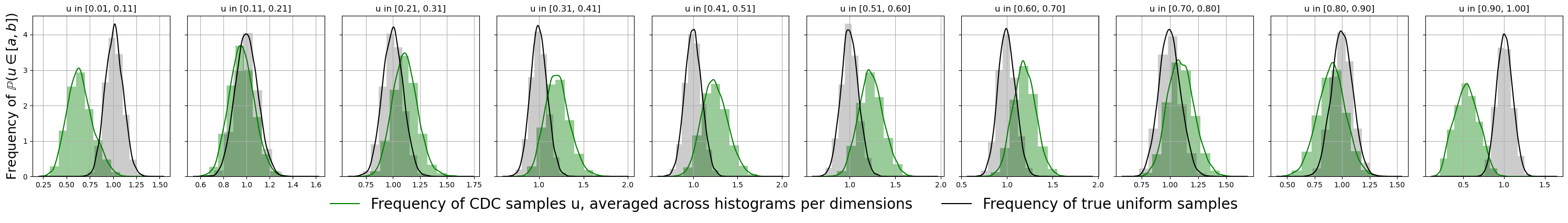}
    \includegraphics[width=1\linewidth]{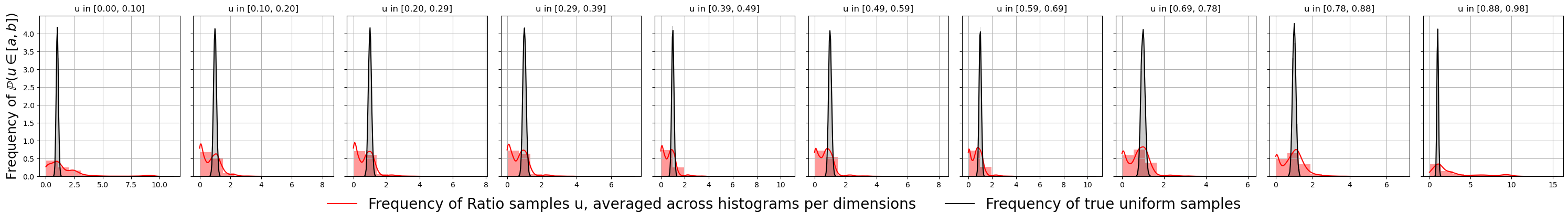}
    \caption{\textbf{Uniformity of samples for} \texttt{Cifar}.}
    \label{fig:ranks_cifar}
\end{figure}

We further investigate the uniformity of our samples with quantitative metrics in Tab. \ref{tab:unif_test}. For our models to produce accurate copula samples, it is required that generated samples follow the respective marginal distributions. For the $c_{dc}$, as the model operates on the Gaussian scale, we now include an additional check for the normality of the generated samples via the statistical test of \cite{d1973tests}. For the reflection copula which operates on the hypercube, we verify the uniformity of samples with a two-sample Kolmogorov-Smirnov test for goodness of fit \citep{hodges1958significance}. Along with the Ratio copula (operating on the Gaussian scale), we report the rejection rate of the hypothesis that marginals are preserved, averaged over dimensions and model runs, at the $5\%$ alpha level with a Bonferroni correction due to multiple testing over runs and dimensions. 

The results indicate that for most experiments, our models produce samples that do not significantly deviate from prescribed marginals and do so more effectively than the existing Ratio copula model. In particular, our $c_{dc}$ achieves a small rejection rate, which we attribute to the sampling procedure using Gaussian noise during sampling (see step 8 in Alg. \ref{algo:cdc_sample}). Note that this test only indicates departures from uniformity, but does not inform about their severity.

To strengthen these findings, we additionally report the Continuous Ranked Probability Score (CRPS) as a univariate probabilistic calibration metric to compare our generated copula samples to uniform data, dimension-wise. As a strictly proper scoring rule \citep{gneiting2007strictly}, it is minimised if and only if both samples come from the same distribution \citep{dawid2016minimum}, here being the uniform on $[0,1]$. In Tab. \ref{tab:unif_test}, we report the mean CRPS averaged across dimensions and runs, from which we subtract the CRPS obtained with pairs of uniform samples (computed to be 0.1666 over 100 repetitions of 1000 vs 1000 draws). Thus, the ideal value for this metric would be 0, uniquely achieved by a uniform sample.

Our results corroborate the statistical tests, showing that our models achieve samples close to uniformity, with a noticeable improvement over the Ratio copula model.

\begin{table*}[h]
    \centering
    \caption{\textbf{Marginal uniformity metrics:} For the $c_{dc}$, Reflection, and Ratio copulas, for each dimension, we report the rejection rate of statistical tests for the respective marginal distributions. We also report CRPS values to quantify deviations from uniformity. Our models generally achieve samples close to uniformity, with a noticeable improvement over the Ratio copula model.}
   
        \begin{tabular}{|c|c|c|c|c|c|c|}
            \hline
            \textbf{} & \texttt{Magic} & \texttt{Dry\_Bean} & \texttt{Robocup} & \texttt{digits} & \texttt{MNIST} & \texttt{Cifar} \\
            \hline
            {$c_{dc}$ - test rejection} & $0.01$ & $0.01$ & $0.0$ & $0.03$ & $0.00$ & $0.00$ \\
            {Reflection - test rejection} & $0.02$ & $0.09$ & $0.00$ & $0.16$ & $0.01$ & $0.70$ \\
            {Ratio - test rejection} & $0.47$ & $0.93$ & $0.16$ & $0.30$ & $0.09$ & $0.50$ \\
            \hline
            {$c_{dc}$ - CRPS} & $0.0029$ & $0.0021$ & $0.0021$ & $0.0068$ & $0.0022$ & $0.0041$ \\
            {Reflection - CRPS} & $0.0024$ & $0.0030$ & $0.0022$ & $0.0036$ & $0.0024$ & $0.0142$ \\
            {Ratio - CRPS} & $0.0169$ & $0.0128$ & $0.0028$ & $0.0150$ & $0.0070$ & $0.0680$ \\
            \hline
        \end{tabular}
    
    \label{tab:unif_test}
\end{table*}

\subsection{Visualising samples}
\label{apdx:vis}
\paragraph{Visualising scientific datasets.}
We show pair plots for each dataset below, obtained from 1000 samples from our model, shown in the lower left, compared to 1000 randomly drawn observations from the test set (only one of our 10 runs is shown). We note that pair plots are unable to convey higher-order dependencies; they only provide an aggregate view of the relationship between two variables. However, this lets us analyse the accuracy of our models in sampling from the correct regions. As can be seen in Fig. \ref{fig:pairplots}, both our methods closely match the pair plots of observed data, displaying the same patterns across dimension pairs. We also display the accuracy of different methods at targeting specific regions of the support in Fig.\ref{fig:specific}.

\begin{figure}[H]
     \centering
     \begin{subfigure}[b]{0.32\textwidth}
         \centering
         \includegraphics[width=\textwidth]{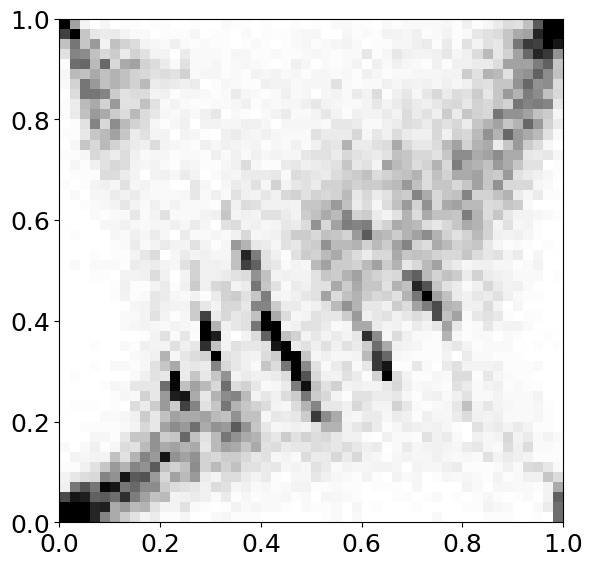}
         \caption{Observations of \texttt{Magic}.}
     \end{subfigure}
     \hfill
     \begin{subfigure}[b]{0.32\textwidth}
         \centering
         \includegraphics[width=\textwidth]{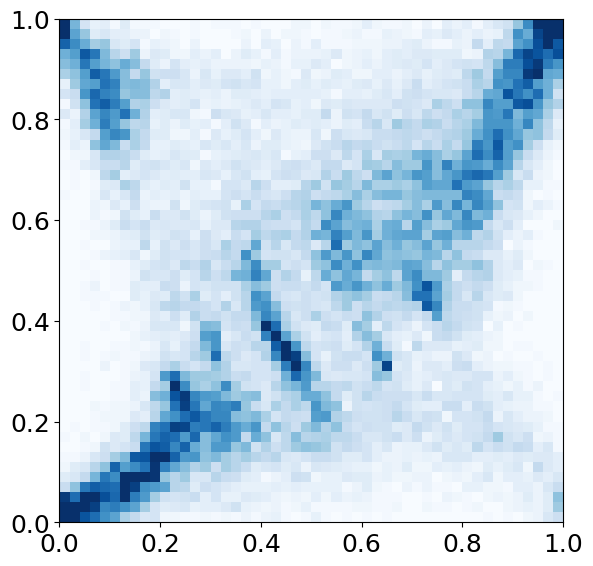}
         \caption{Reflection copula on \texttt{Magic}.}
     \end{subfigure}
      \hfill
     \begin{subfigure}[b]{0.32\textwidth}
         \centering
         \includegraphics[width=\textwidth]{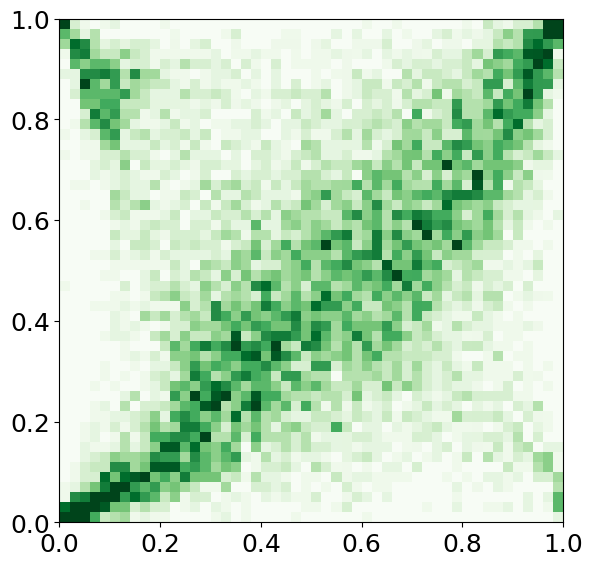}
         \caption{$c_{dc}$ copula on \texttt{Magic}.}
     \end{subfigure}
          
     \hfill
        \caption{\textbf{Bivariate plot of copula samples vs observations.} We show the ability of our copula models' samples to match the target distribution. The reflection copula is particularly apt at capturing details. }
        \label{fig:specific}
\end{figure}

\paragraph{Visualising high-dimensional image samples.}
We further show 25 samples per method for image datasets in Figs. \ref{fig:sims_mnist_cop}, \ref{fig:sims_mnist}, \ref{fig:sims_cifar}. From the samples shown on the copula scale in Figs. \ref{fig:sims_mnist_cop} and \ref{fig:sims_cifar}, note how the \texttt{MNIST} dataset consists of variation mostly in the centre of the image, while the edges are mostly noise. Thus, models need to target specific dependencies in parts of the $[0,1]^{784}$ hypercube, while ignoring most of the remaining dimensions. In contrast, for \texttt{Cifar}, the whole image meaningfully contributes to the dependence, meaning models need to make the full $1024$ dimensions fit a specific dependence pattern to generate an image. We further inspect the dependence of images by downscaling the \texttt{MNIST} images to lower resolutions until equaling that of the \texttt{digits} data in Fig. \ref{fig:downscaled_mnist}.

\begin{figure}[h]
    \centering
    \includegraphics[width=0.5\linewidth]{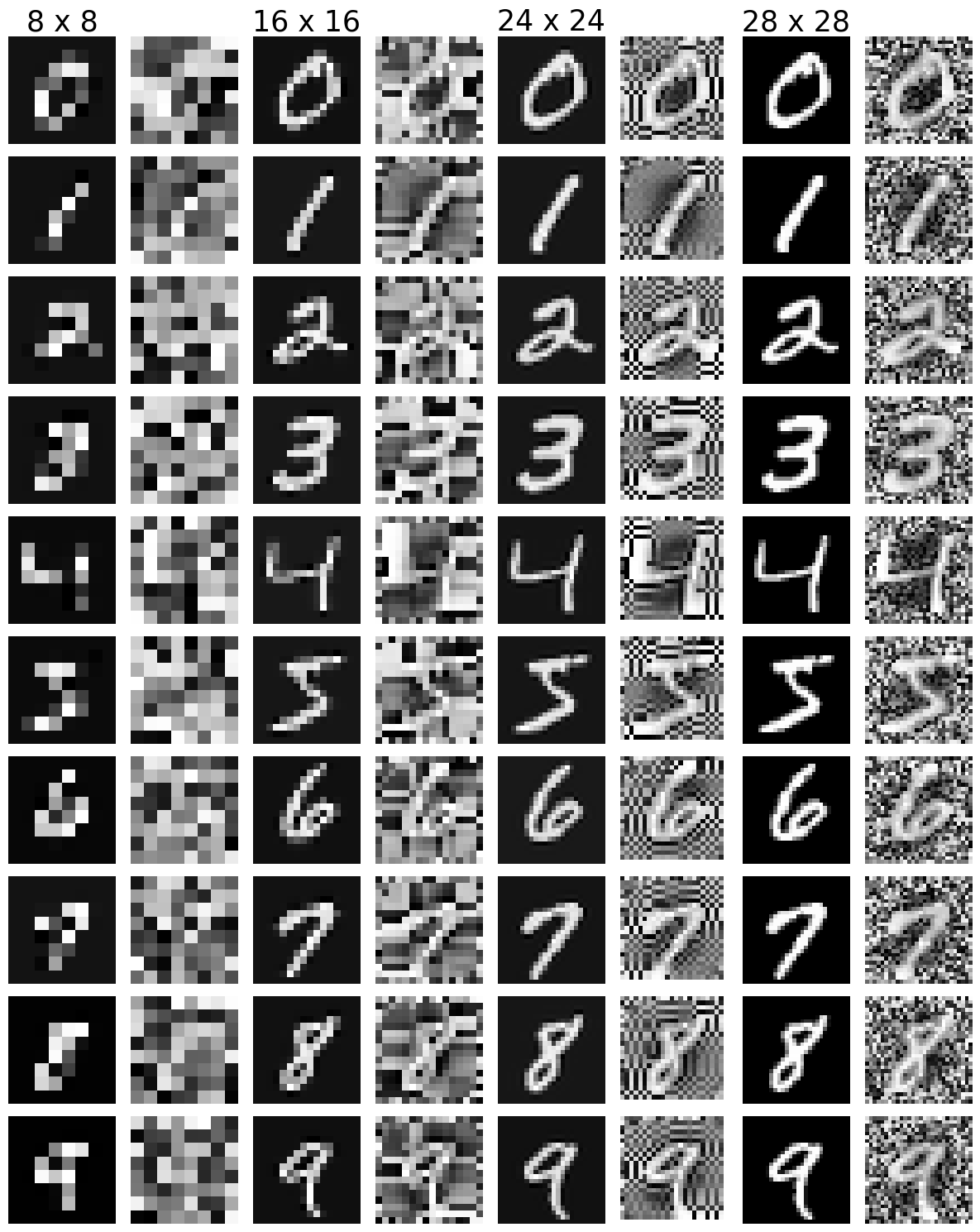}
    \caption{\textbf{Dependence of downscaled images.} We downscale the \texttt{MNIST} data to the resolution of \texttt{digits} and observe the effects this has on the dependence via the copula scale. Dependence is fundamentally different, as the digits are no longer discernible on the copula scale and appear more like random noise.}
    \label{fig:downscaled_mnist}
\end{figure}

\begin{figure}[H]
     \centering
     \begin{subfigure}[b]{0.42\textwidth}
         \centering
         \includegraphics[width=\textwidth]{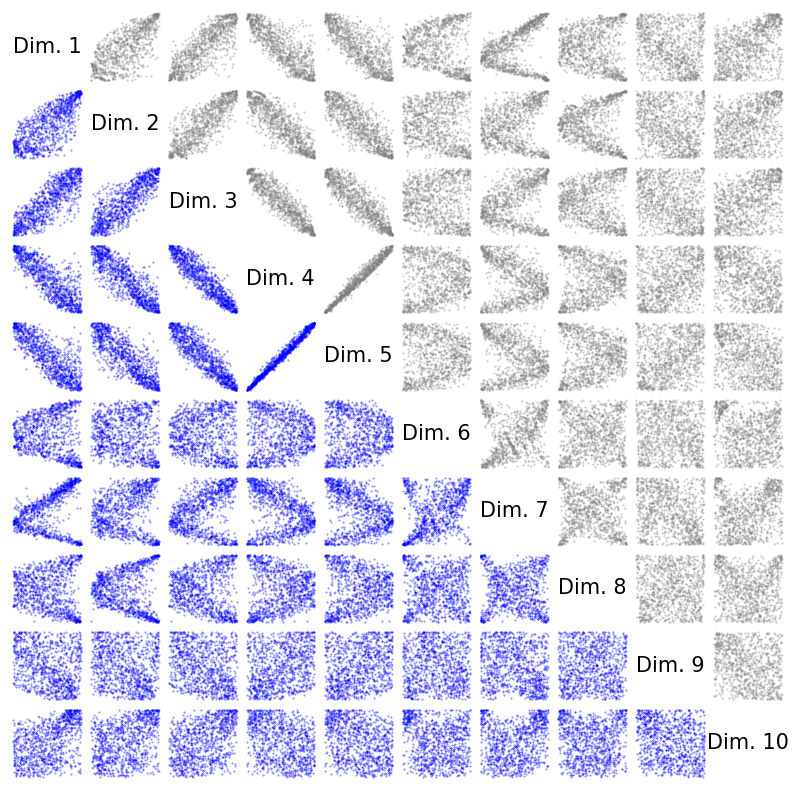}
         \caption{Reflection copula on \texttt{Magic}.}
     \end{subfigure}
     \hfill
     \begin{subfigure}[b]{0.42\textwidth}
         \centering
         \includegraphics[width=\textwidth]{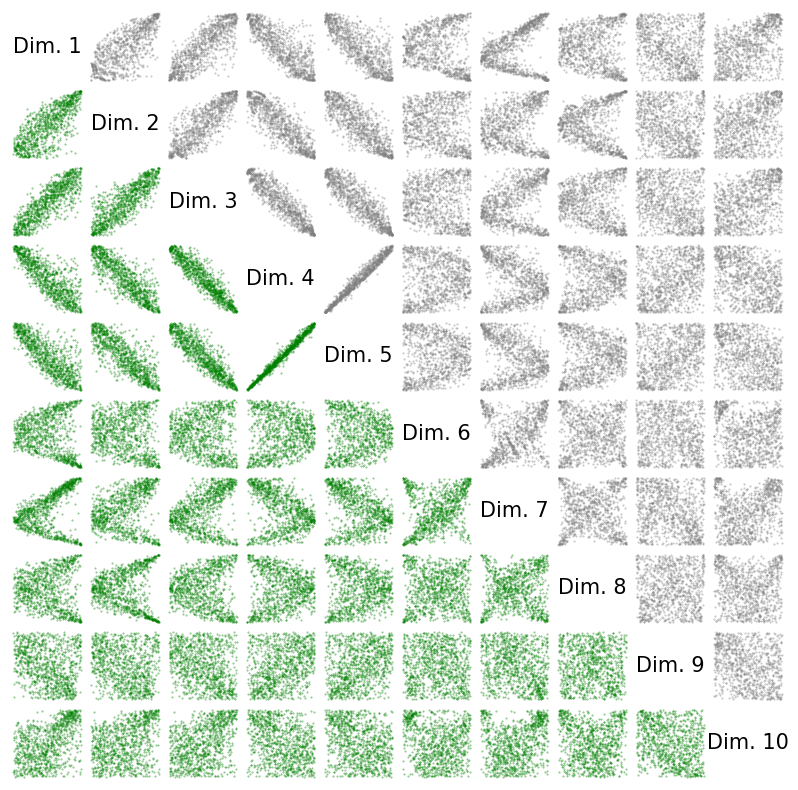}
         \caption{$c_{dc}$ copula on \texttt{Magic}.}
     \end{subfigure}
      
      \hfill
     
     \begin{subfigure}[b]{0.42\textwidth}
         \centering
         \includegraphics[width=\textwidth]{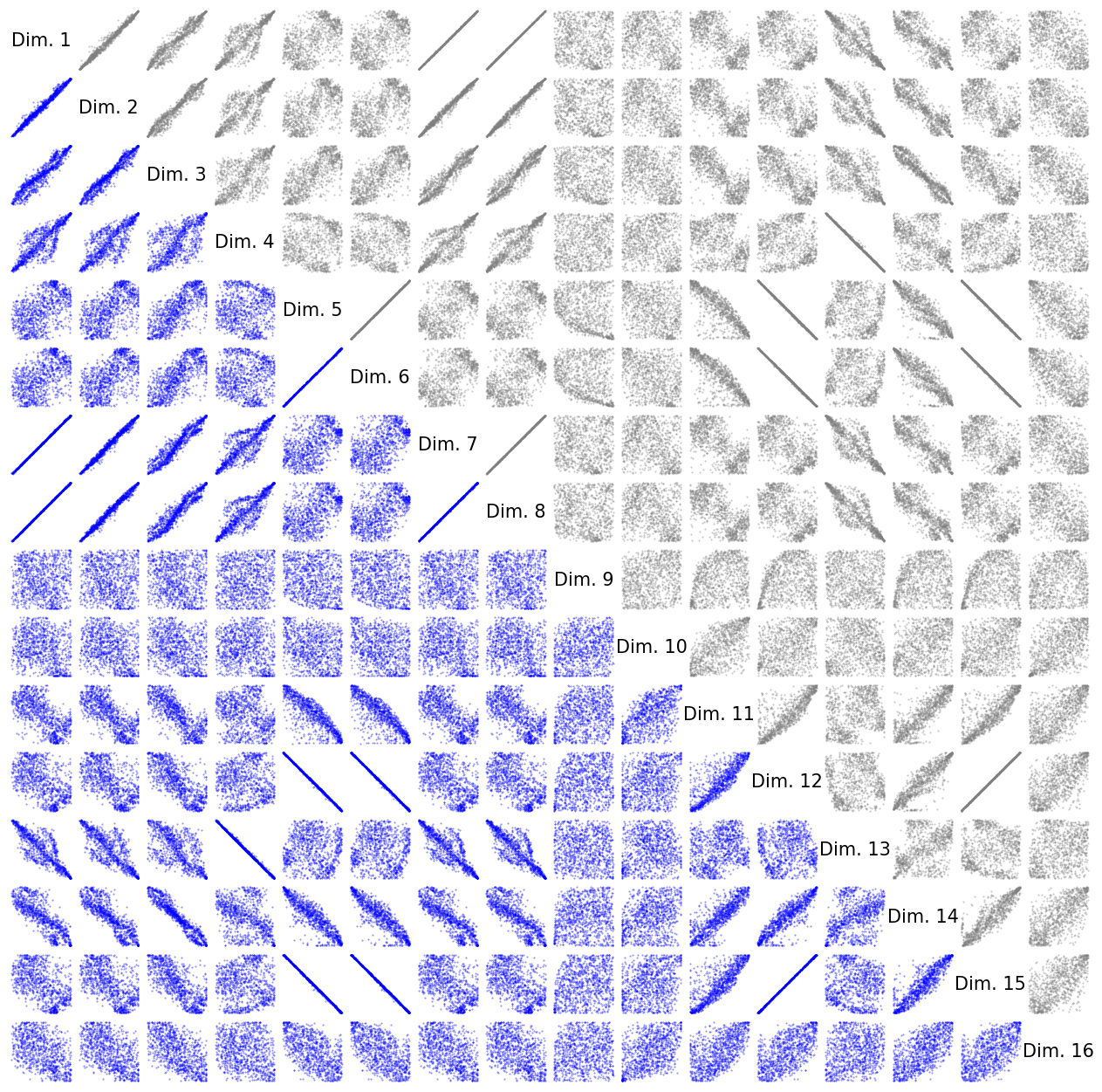}
         \caption{Reflection copula on \texttt{Dry\_Bean}.}
     \end{subfigure}
     \hfill
     \begin{subfigure}[b]{0.42\textwidth}
         \centering
         \includegraphics[width=\textwidth]{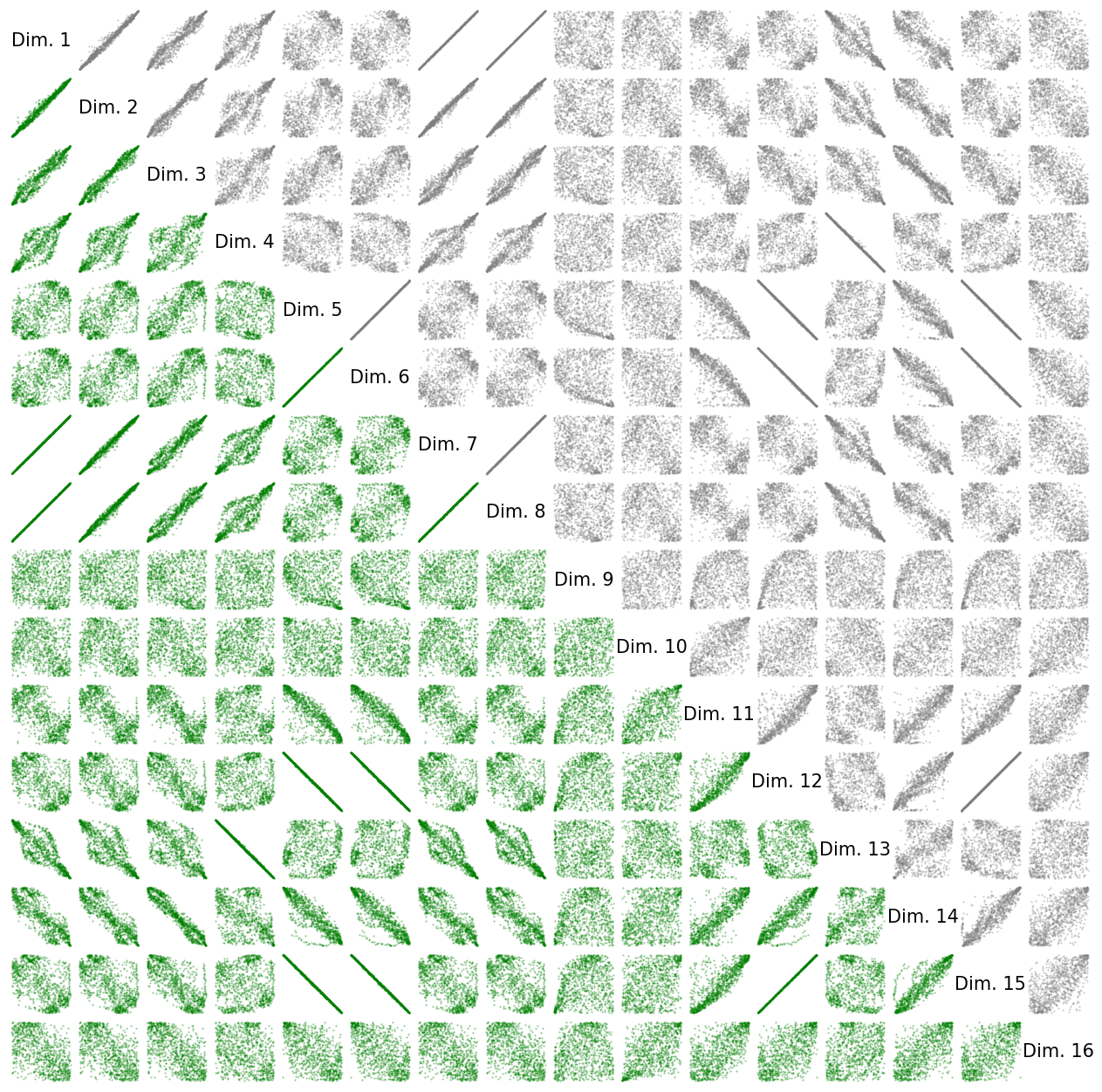}
         \caption{$c_{dc}$ copula on \texttt{Dry\_Bean}.}
     \end{subfigure}
     
     \hfill
     
     \begin{subfigure}[b]{0.42\textwidth}
         \centering
         \includegraphics[width=\textwidth]{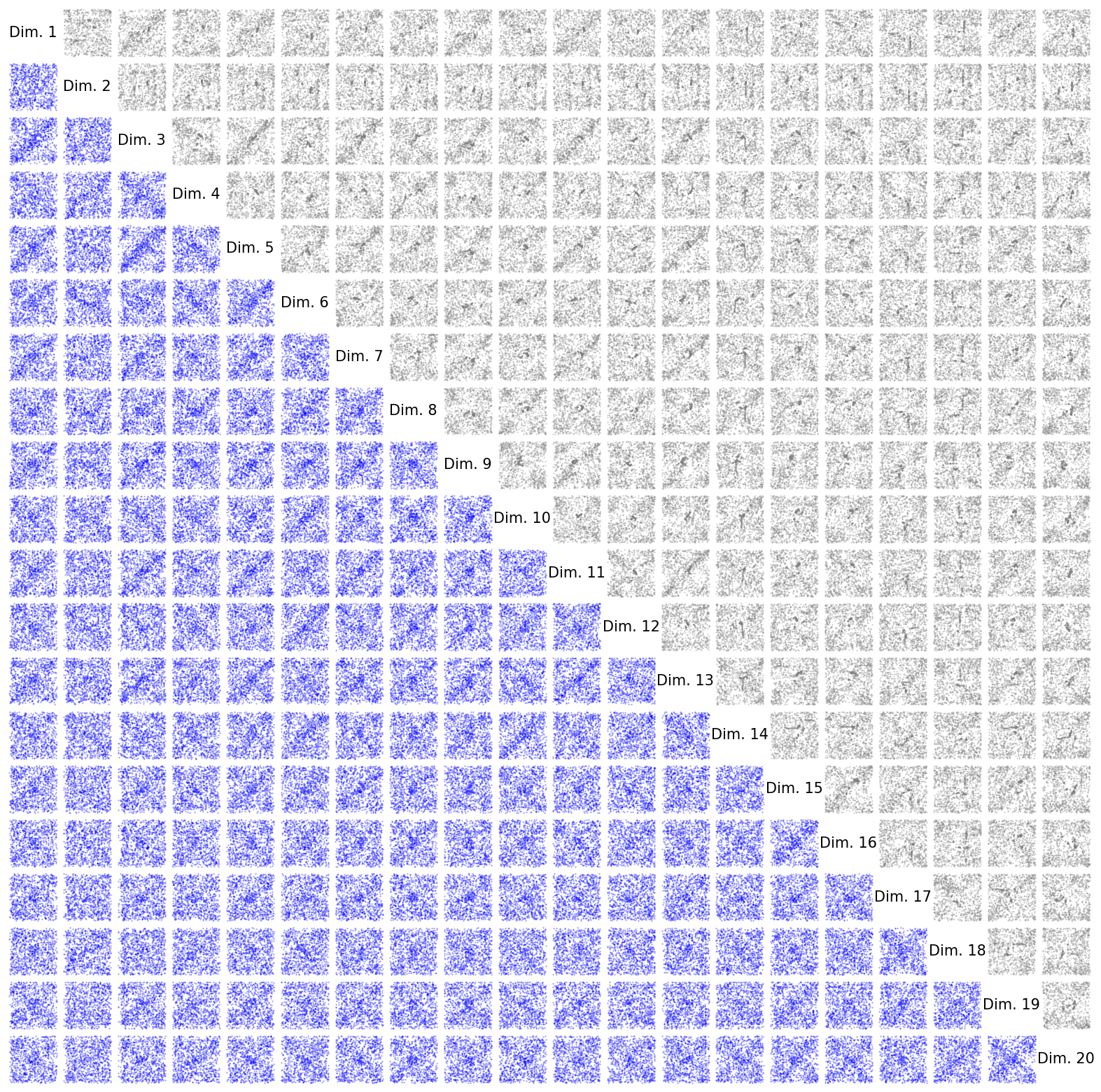}
         \caption{Reflection copula on \texttt{Robocup}.}
     \end{subfigure}
     \hfill
     \begin{subfigure}[b]{0.42\textwidth}
         \centering
         \includegraphics[width=\textwidth]{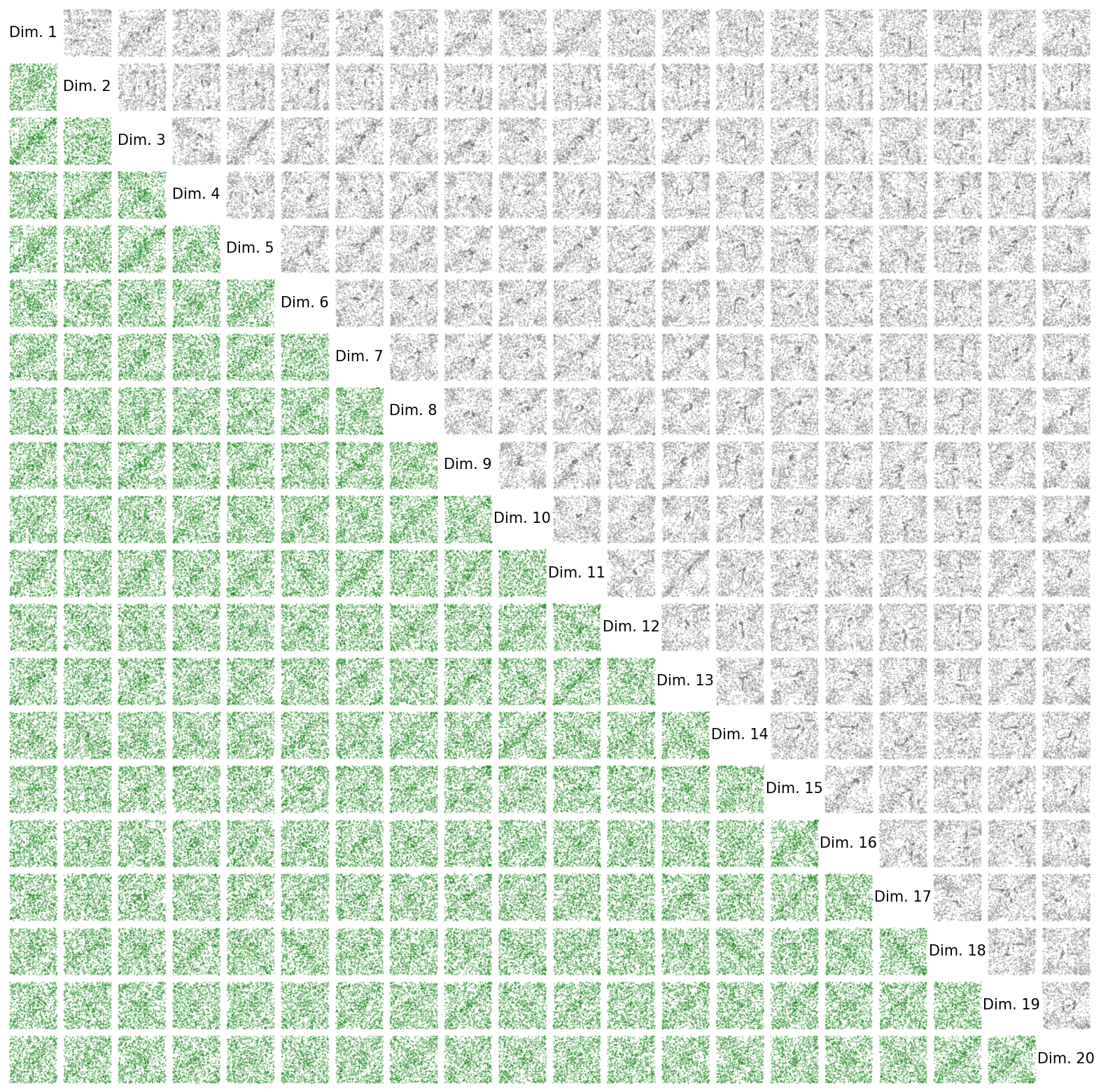}
         \caption{$c_{dc}$ copula on \texttt{Robocup}.}
     \end{subfigure}
     
     \hfill
        \caption{\textbf{Pair plots of copula samples vs observations.} In the lower left of each plot, we show our copula samples $\rvu$ with the reflection copula in blue in the left column and our $c_{dc}$ copula in green in the right column. Observed data is shown in the upper right of each plot in grey. Note that instead of being mirrored, the dimension pairs are reversed for the bottom-left parts of the plots to match the orientation of the upper-right pairs. Best viewed digitally.}
        \label{fig:pairplots}
\end{figure}

\begin{figure}[H]
    \centering
    \includegraphics[width=0.9\linewidth]{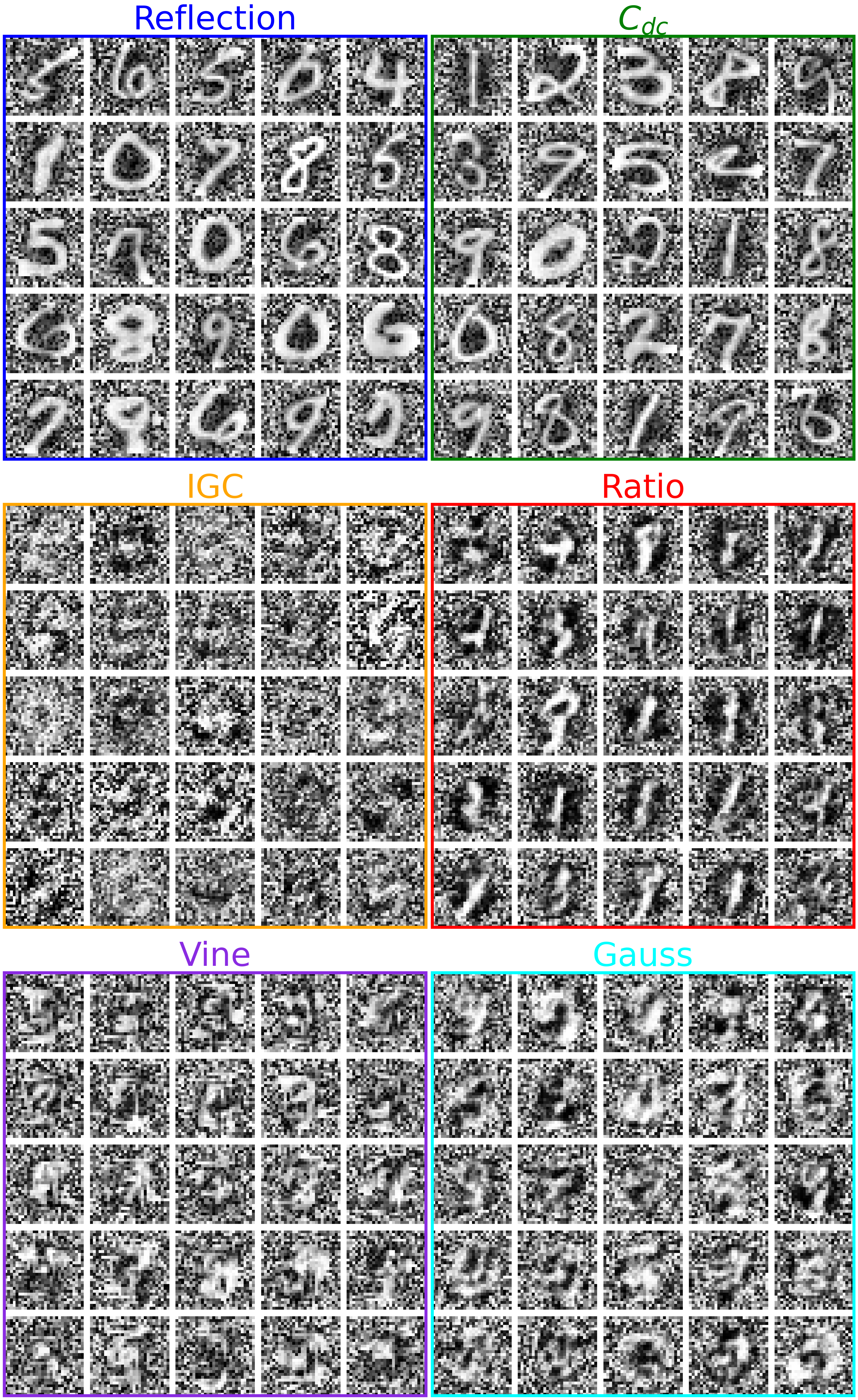}
    \caption{\textbf{Samples for} \texttt{MNIST}: Random samples from all methods, shown on the copula $[0,1]^{784}$ scale with lighter values being closer to 1. Notice how most of the variation takes place in the centre, where the shape of the number appears as high $\rvu$ values. However, image edges are not very dependent and take random $\rvu$ values.}
    \label{fig:sims_mnist_cop}
\end{figure}

\begin{figure}[H]
    \centering
    \includegraphics[width=0.9\linewidth]{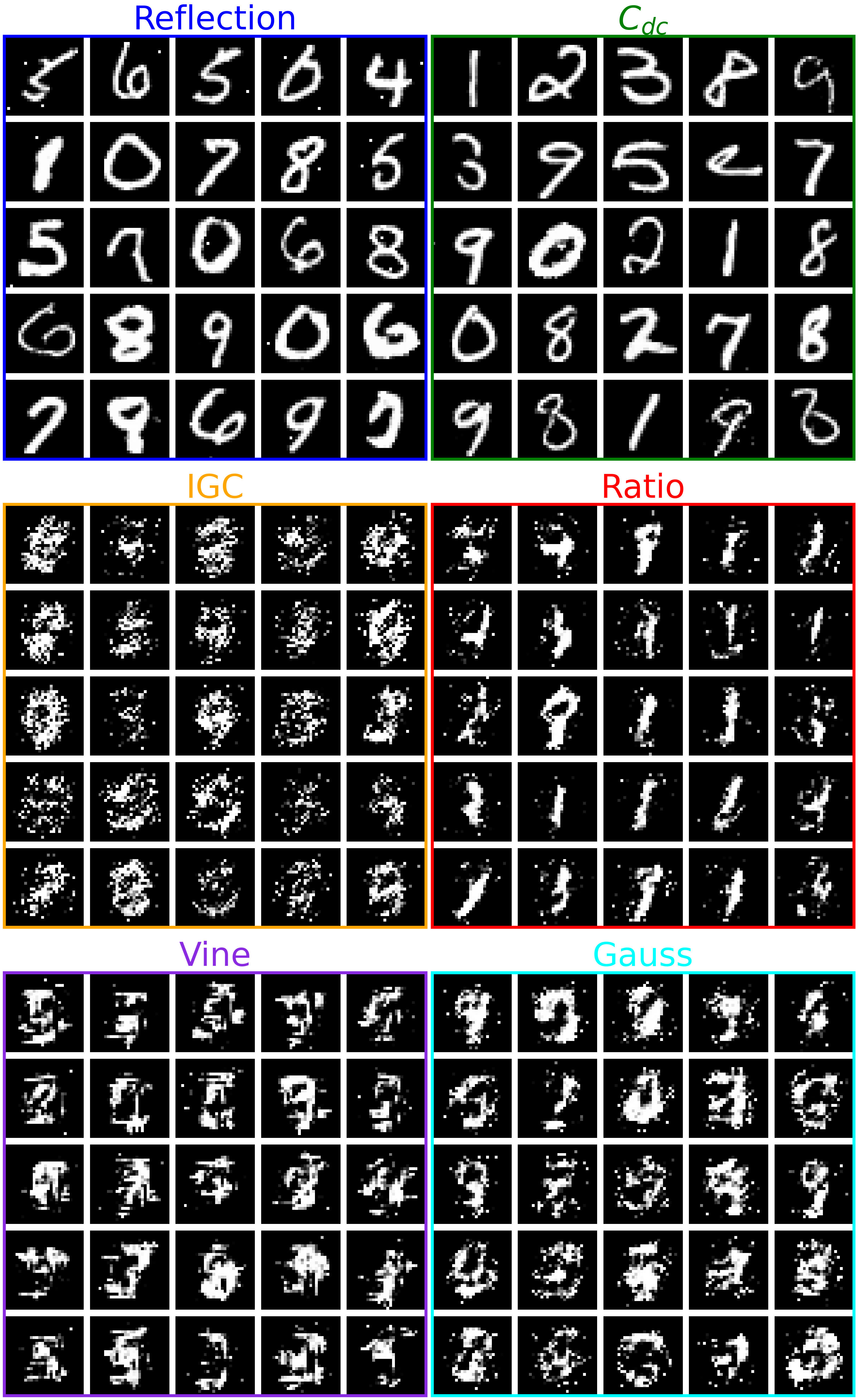}
    \caption{\textbf{Samples for} \texttt{MNIST}: Random samples from all methods, shown on the data $\mathbb{R}^{784}$ scale with lighter values being higher. Our reflection and $c_{dc}$ copulas correctly produce samples resembling digits, while competing copulas struggle to produce coherent samples.}
    \label{fig:sims_mnist}
\end{figure}

\begin{figure}[H]
    \centering
    \includegraphics[width=0.9\linewidth]{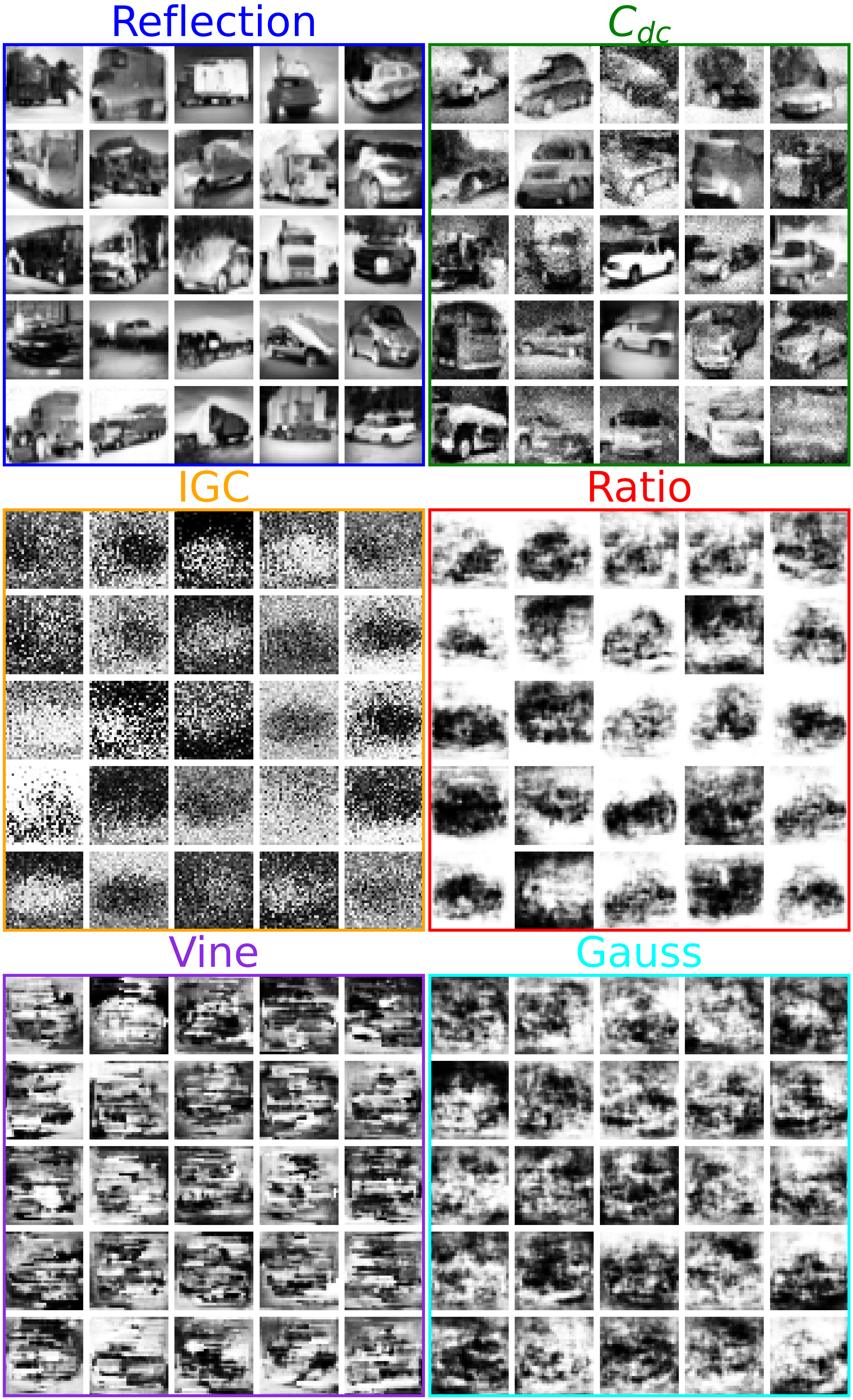}
    \caption{\textbf{Samples for} \texttt{Cifar}: Random samples from all methods, shown on the copula $[0,1]^{1024}$ scale with lighter values being closer to 1. Notice how here, all dimensions of the image are dependent and matter in producing coherent samples. As a consequence, the images are already visible on the copula scale without needing to transform them to the data scale.}
    \label{fig:sims_cifar}
\end{figure}

\paragraph{Sanity checks on generated image samples.} For both of our copulas, we verify that our models did not simply remember the training data but truly learned the data distribution. We qualitatively inspect the $k=10$ nearest neighbour (KNN) from the training set to our samples, measured with the Euclidean distance in $\mathbb{R}^d$, reported in Fig \ref{fig:knn}. Our samples differ from the nearest observations, showing that the model learned the data distribution beyond pure memorisation of training examples.

\begin{figure}[H]
     \centering
     \begin{subfigure}[b]{0.47\textwidth}
         \centering
         \includegraphics[width=\textwidth]{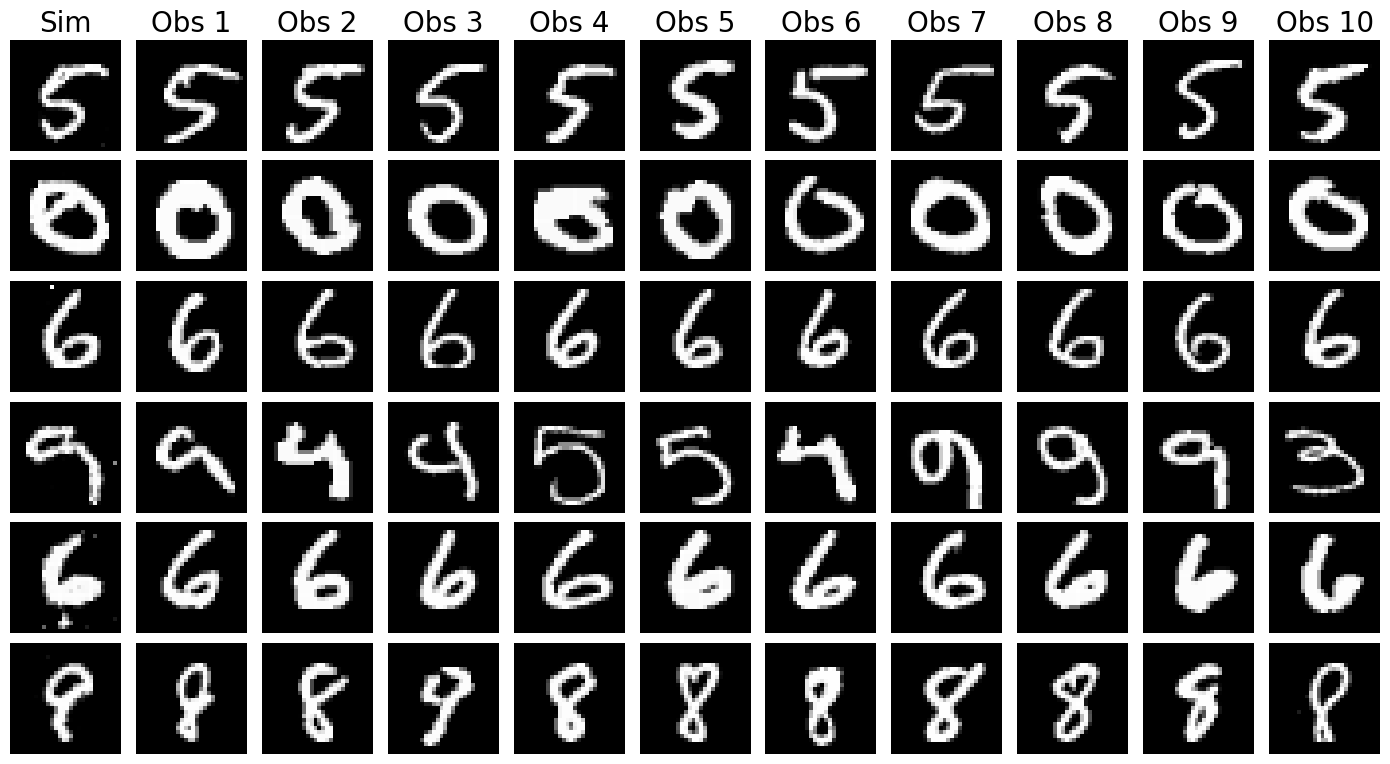}
         \caption{Reflection copula KNN \texttt{MNIST}.}
     \end{subfigure}
     \hfill
     \begin{subfigure}[b]{0.47\textwidth}
         \centering
         \includegraphics[width=\textwidth]{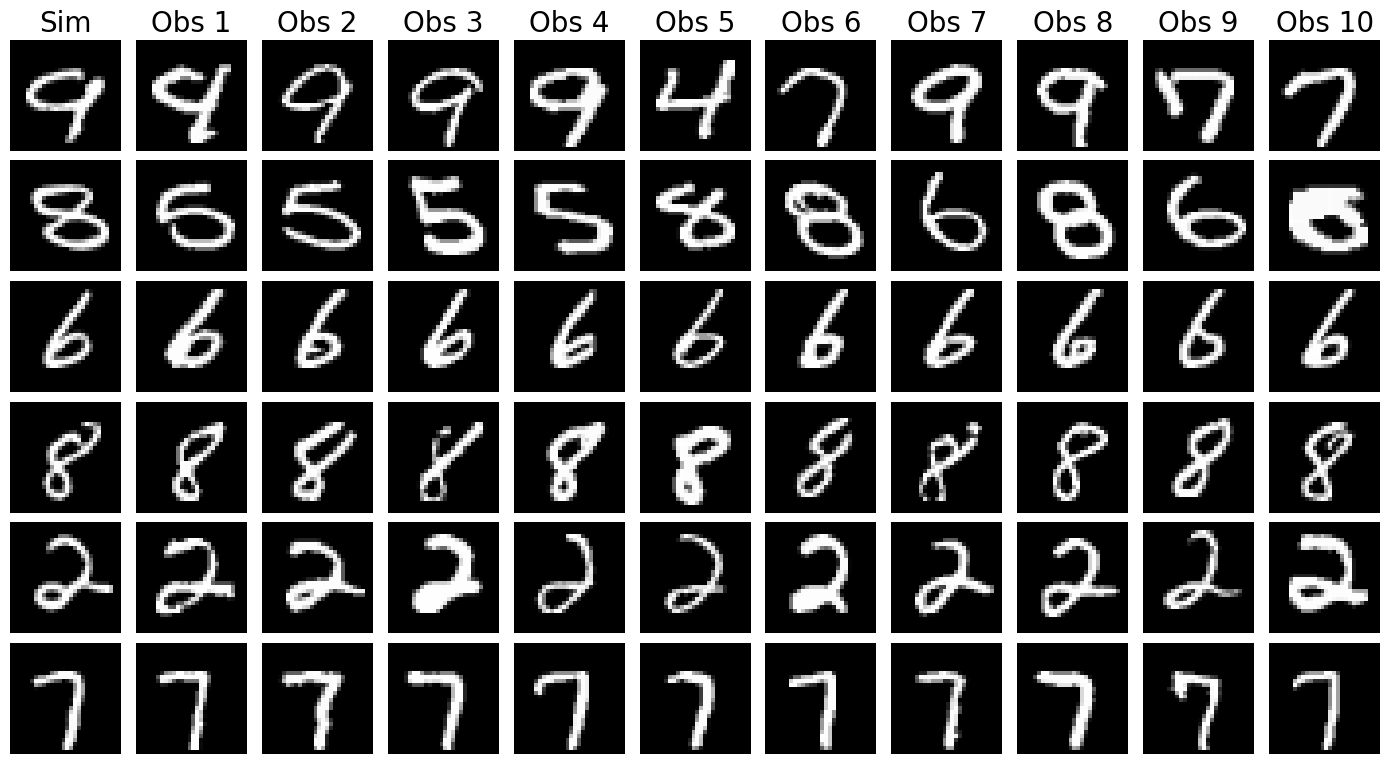}
         \caption{$c_{dc}$ copula KNN \texttt{MNIST}.}
     \end{subfigure}
      
      \hfill
     
     \begin{subfigure}[b]{0.47\textwidth}
         \centering
         \includegraphics[width=\textwidth]{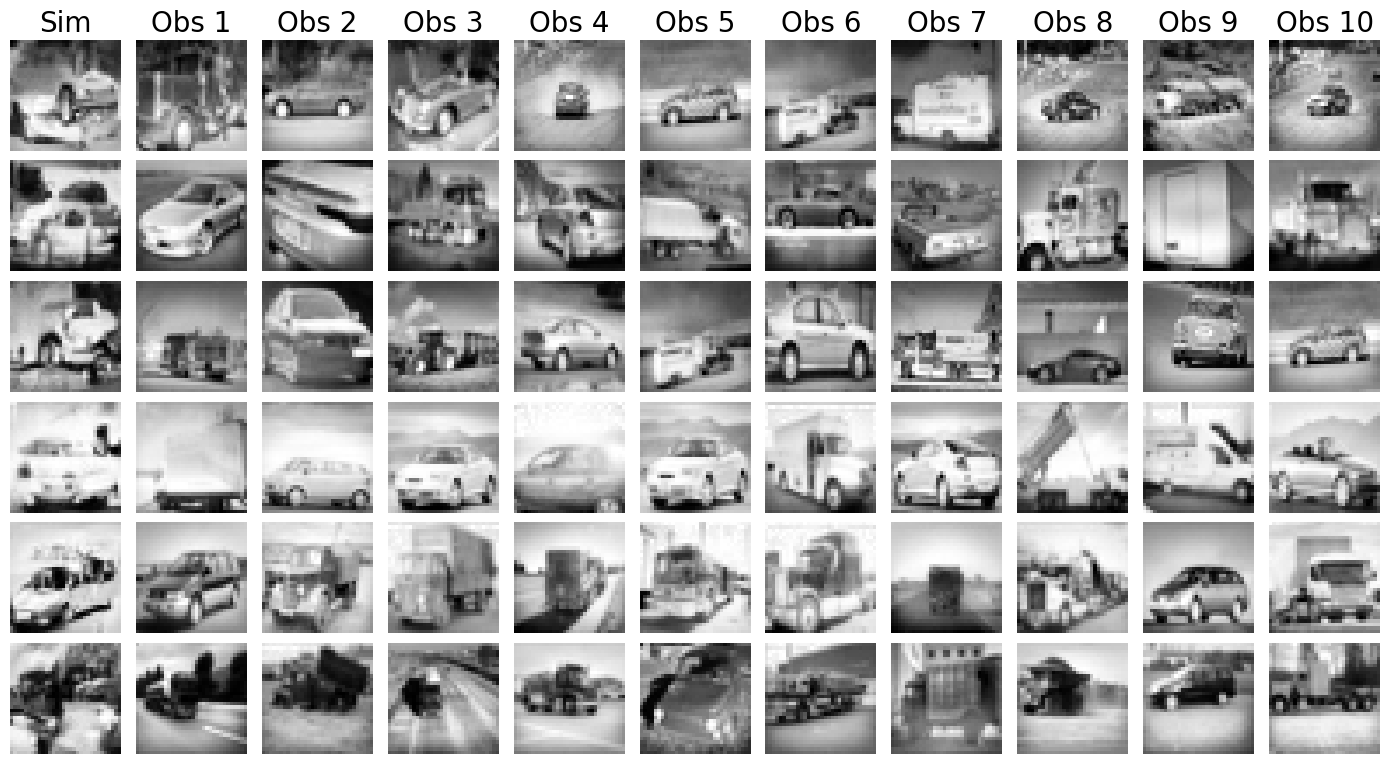}
         \caption{Reflection copula KNN \texttt{Cifar}.}
     \end{subfigure}
     \hfill
     \begin{subfigure}[b]{0.47\textwidth}
         \centering
         \includegraphics[width=\textwidth]{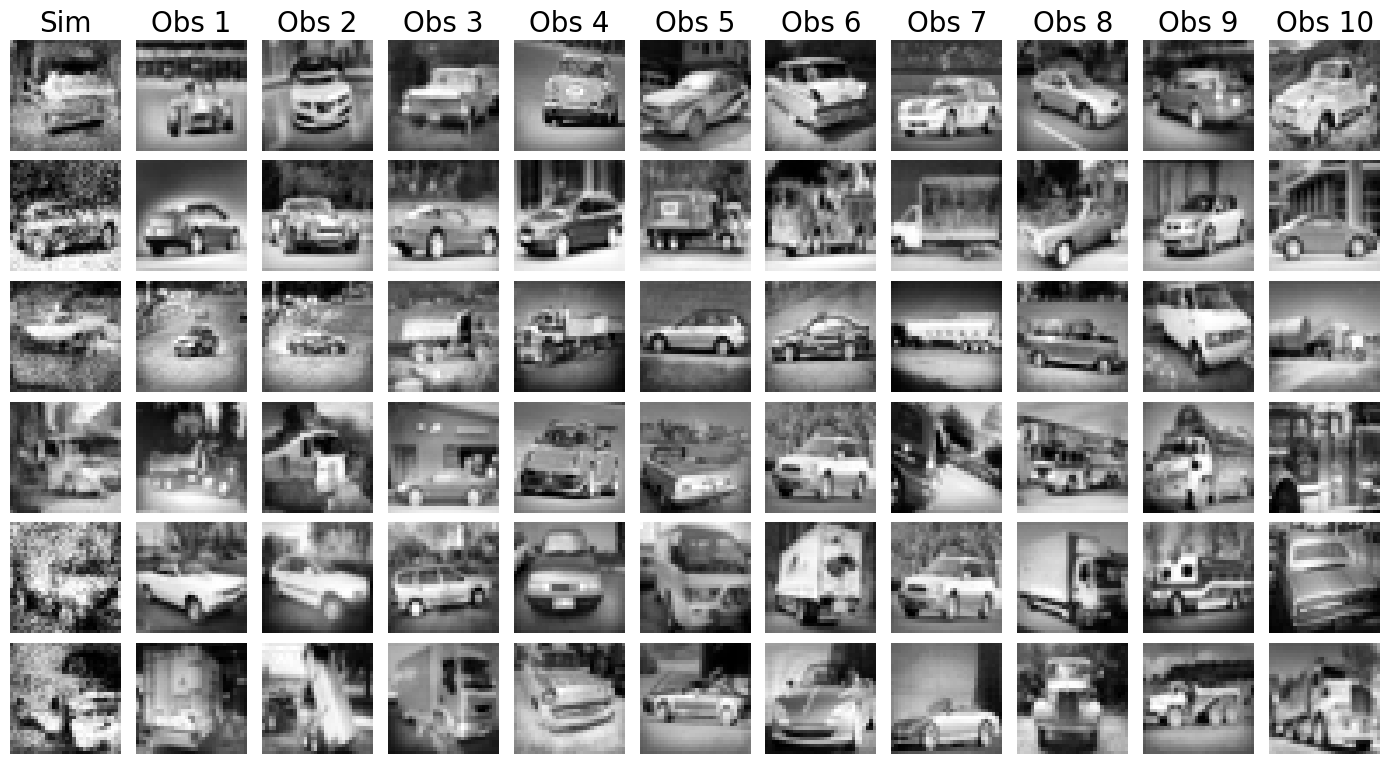}
         \caption{$c_{dc}$ copula KNN \texttt{Cifar}.}
     \end{subfigure}     
     \hfill
        \caption{\textbf{10-nearest neighbours of generated samples:} For random samples from our models, we show the 10 nearest neighbours with respect to the Euclidean distance.It is apparent that the generated samples differ from the observations, meaning the models did not simply remember the training dataset. Best viewed digitally.}
        \label{fig:knn}
\end{figure}

\section{The Use of Large Language Models (LLMs)}
In this paper, we only used LLMs to prototype data visualisations, to check parts of the code for errors, and to supplement the search for related works. All outputs of LLMs were verified by humans. All other aspects of this research were done by humans, specifically, originating the research idea, deriving proofs, producing diagrams and final figures for the paper, writing code for the models, conducting experiments, reviewing related works, and writing the paper.

\end{document}